\newtheorem{theorem}{Theorem}[section]
\newtheorem{lemma}[theorem]{Lemma}
\newtheorem{corollary}[theorem]{Corollary}
\newtheorem{proposition}[theorem]{Proposition}
\newtheorem{definition}[theorem]{Definition}
\newcommand{\sR}{\mathbb{R}}
\icmltitlerunning{A Quantitative Analysis of the Effect of Batch Normalization on Gradient Descent}
\begin{document}

\twocolumn[
% \icmltitle{On the Convergence and Robustness of Batch Normalization}
\icmltitle{
A Quantitative Analysis of the Effect of Batch Normalization on Gradient Descent
}

% It is OKAY to include author information, even for blind
% submissions: the style file will automatically remove it for you
% unless you've provided the [accepted] option to the icml2019
% package.

% List of affiliations: The first argument should be a (short)
% identifier you will use later to specify author affiliations
% Academic affiliations should list Department, University, City, Region, Country
% Industry affiliations should list Company, City, Region, Country

% You can specify symbols, otherwise they are numbered in order.
% Ideally, you should not use this facility. Affiliations will be numbered
% in order of appearance and this is the preferred way.
%\icmlsetsymbol{equal}{*}

\begin{icmlauthorlist}
\icmlauthor{Yongqiang Cai}{nus}
\icmlauthor{Qianxiao Li}{nus,astar}
\icmlauthor{Zuowei Shen}{nus}
\end{icmlauthorlist}

\icmlaffiliation{nus}{Department of Mathematics, National University of Singapore, Singapore}
\icmlaffiliation{astar}{Institute of High Performance Computing, A*STAR, Singapore}

\icmlcorrespondingauthor{Yongqiang Cai}{matcyon@nus.edu.sg}
\icmlcorrespondingauthor{Qianxiao Li}{qianxiao@nus.edu.sg}
\icmlcorrespondingauthor{Zuowei Shen}{matzuows@nus.edu.sg}

% You may provide any keywords that you
% find helpful for describing your paper; these are used to populate
% the "keywords" metadata in the PDF but will not be shown in the document
%\icmlkeywords{Machine Learning, ICML}
\icmlkeywords{Batch Normalization, Ordinary Least Square, Gradient Descent, Linear Convergence}

\vskip 0.3in
]

% this must go after the closing bracket ] following \twocolumn[ ...

% This command actually creates the footnote in the first column
% listing the affiliations and the copyright notice.
% The command takes one argument, which is text to display at the start of the footnote.
% The \icmlEqualContribution command is standard text for equal contribution.
% Remove it (just {}) if you do not need this facility.

\printAffiliationsAndNotice{}  % leave blank if no need to mention equal contribution
%\printAffiliationsAndNotice{\icmlEqualContribution} % otherwise use the standard text.

\begin{abstract}
    Despite its empirical success and recent theoretical progress, there generally lacks a quantitative analysis of the effect of batch normalization (BN) on the convergence and stability of gradient descent. In this paper, we provide such an analysis on the simple problem of ordinary least squares (OLS). Since precise dynamical properties of gradient descent (GD) is completely known for the OLS problem, it allows us to isolate and compare the additional effects of BN. More precisely, we show that unlike GD, gradient descent with BN (BNGD) converges for arbitrary learning rates for the weights, and the convergence remains linear under mild conditions. Moreover, we quantify two different sources of acceleration of BNGD over GD -- one due to over-parameterization which improves the effective condition number and another due having a large range of learning rates giving rise to fast descent. These phenomena set BNGD apart from GD and could account for much of its robustness properties. These findings are confirmed quantitatively by numerical experiments, which further show that many of the uncovered properties of BNGD in OLS are also observed qualitatively in more complex supervised learning problems.

\end{abstract}

\section{Introduction}

    Batch normalization (BN) is one of the most important techniques for training deep neural networks and has proven extremely effective in avoiding gradient blowups during back-propagation and speeding up convergence. In its original introduction~\citep{Ioffe2015Batch}, the desirable effects of BN are attributed to the so-called ``reduction of covariate shift''. However, it is unclear what this statement means in precise mathematical terms.

    Although recent theoretical work have established certain convergence properties of gradient descent with BN (BNGD) and its variants~\citep{Ma2017Convergence, Kohler2018Towards, arora2018theoretical}, there generally lacks a quantitative comparison between the dynamics of the usual gradient descent (GD) and BNGD. In other words, a basic question that one could pose is: what quantitative changes does BN bring to the stability and convergence of gradient descent dynamics? Or even more simply: why should one use BNGD instead of GD? To date, a general mathematical answer to these questions remain elusive. This can be partly attributed to the complexity of the optimization objectives that one typically applies BN to, such as those encountered in deep learning. In these cases, even a quantitative analysis of the dynamics of GD itself is difficult, not to mention a precise comparison between the two.

    For this reason, it is desirable to formulate the simplest non-trivial setting, on which one can concretely study the effect of batch normalization and answer the questions above in a quantitative manner.
    This is the goal of the current paper, where we focus on perhaps the simplest supervised learning problem -- ordinary least squares (OLS) regression -- and analyze precisely the effect of BNGD when applied to this problem. A primary reason for this choice is that the dynamics of GD in least-squares regression is completely understood, thus allowing us to isolate and contrast the additional effects of batch normalization.

    Our main findings can be summarized as follows
    \begin{enumerate}
        \item Unlike GD, BNGD converges for arbitrarily large learning rates for the weights, and the convergence remains linear under mild conditions.
        \item The asymptotic linear convergence of BNGD is faster than that of GD, and this can be attributed to the over-parameterization that BNGD introduces.
        % \item  The asymptotic linear convergence rate of BNGD is higher than that of GD, and this can be attributed to the over-parameterization that BNGD introduces.
        \item Unlike GD, the convergence rate of BNGD is insensitive to the choice of learning rates. The range of insensitivity can be characterized, and in particular it increases with the dimensionality of the problem.
    \end{enumerate}
    Although these findings are established concretely only for the OLS problem, we will show through numerical experiments that some of them hold qualitatively, and sometimes even quantitatively for more general situations in deep learning.

    \subsection{Related Work}

    Batch normalization was originally introduced in~\citet{Ioffe2015Batch} and subsequently studied in further detail in~\citet{Ioffe2017Batch}. Since its introduction, it has become an important practical tool to improve stability and efficiency of training deep neural networks~\citep{Bottou2018Optimization}.
    %In~\citep{Cooijmans2016Recurrent}, authors propose a reparameterization of LSTM that brings the benefits of batch normalization to recurrent neural networks.
    Initial heuristic arguments attribute the desirable features of BN to concepts such as ``covariate shift'', but alternative explanations based on landscapes~\citep{Santurkar2018How} and effective regularization~\citep{Bjorck2018Understanding} have been proposed.

    Recent theoretical studies of BN include~\citet{Ma2017Convergence,Kohler2018Towards,arora2018theoretical}. We now outline the main differences between them and the current work.
    In~\citet{Ma2017Convergence}, the authors proposed a variant of BN, the diminishing batch normalization (DBN) algorithm and established its convergence to a stationary point of the loss function. In~\citet{Kohler2018Towards}, the authors also considered a BNGD variant by dynamically setting the learning rates and using bisection to optimize the rescaling variables introduced by BN. It is shown that this variant of BNGD converges linearly for simplified models, including an OLS model and ``learning halfspaces''.
    The primary difference in the current work is that we do not dynamically modify the learning rates, and consider instead a constant learning rate, i.e. the original BNGD algorithm. This is an important distinction; While a decaying or dynamic learning rate is sometimes used in GD, in the case of BN it is critical to analyze the constant learning rate case, precisely because one of the key practical advantages of BN is that a big learning rate can be used. Moreover, this allows us to isolate the influence of batch normalization itself, without the potentially obfuscating effects a dynamic learning rate schedule can introduce (e.g.\,see Eq.~\eqref{eq:eps_hat} and the discussion that follows).
    As the goal of considering a simplified model is to analyze the additional effects purely due to BN on GD, it is desirable to perform our analysis in this regime.

    In~\citet{arora2018theoretical}, the authors proved a general convergence result for BNGD of $\mathcal{O}(k^{-1/2})$ in terms of the gradient norm for objectives with Lipschitz continuous gradients. This matches the best result for gradient descent on general non-convex functions with learning rate tuning \citep{Carmon2017Lower}. In contrast, our convergence result is in iteration and is shown to be linear under mild conditions (Theorem \ref{th:convergence_rate_BNGD}). This convergence result is stronger, but this is to be expected since we are considering a specific case. More importantly, we discuss concretely how BNGD offers advantages over GD instead of just matching its best-case performance. For example, not only do we show that convergence occurs for any learning rate, we also derive a quantitative relationship between the learning rate and the convergence rate, from which the robustness of BNGD on OLS can be explained (see Section ~\ref{sec:analysis}).

    \subsection{Organization}

    Our paper is organized as follows.
    In Section~\ref{sec:background}, we outline the ordinary least squares (OLS) problem and present GD and BNGD as alternative means to solve this problem. In Section~\ref{sec:analysis}, we demonstrate and analyze the convergence of the BNGD for the OLS model, and in particular contrast the results with the behavior of GD, which is completely known for this model. We also discuss the important insights to BNGD that these results provide us with. We then validate these findings on more general supervised learning problems in Section~\ref{sec:experiments}. Finally, we conclude in Section~\ref{sec:conclusion}.

\section{Background} %Preliminaries
    \label{sec:background}

    \subsection{Ordinary Least Squares and Gradient Descent}

    Consider the simple linear regression model where $x \in \sR^{d}$ is a random input column vector and $y$ is the corresponding output variable. Since batch normalization is applied for each feature separately, in order to gain key insights it is sufficient to consider the case $y \in \sR$. A noisy linear relationship is assumed between the dependent variable $y$ and the independent variables $x$, i.e.\,$y = x^T w + \text{noise}$ where $w \in \sR^{d}$ is the vector of trainable parameters.
    Denote the following moments:
    \begin{align}%\label{eq:}
        H := E[xx^T], \quad g := E[xy], \quad c:= E[y^2].
    \end{align}
    To simplify the analysis, we assume the covariance matrix $H$ of $x$ is positive definite and the mean $E[x]$ of $x$ is zero. The eigenvalues of $H$ are denoted as $\lambda_i(H), i=1,2,...d,$. Particularly, the maximum and minimum eigenvalue of $H$ is denoted by $\lambda_{max}$ and $\lambda_{min}$ respectively. The condition number of $H$ is defined as $\kappa := \tfrac{\lambda_{max}}{\lambda_{min}}$.
    Note that the positive definiteness of $H$ allows us to define the vector norm $\|.\|_H$ by $\|x\|^2_H = x^THx$.

    The ordinary least squares (OLS) method for estimating the unknown parameters $w$ leads to the following optimization problem, %objective function
    \begin{align}\label{eq:OLS}
        \min_{w \in \sR^d} J_0(w) :&= \tfrac12 E_{x,y}[ (y-x^Tw)^2 ]\\
        &=\tfrac{c}{2} - w^T g + \tfrac12 w^T H w,\nonumber
    \end{align}
    %The gradient of $J_0$ with respect to $w$ is $\nabla_w J_0(w) = Hw - g$, and
    which has unique minimizer $w=u:=H^{-1}g$.

    The gradient descent (GD) method (with step size or learning rate $\varepsilon$) for solving the optimization problem (\ref{eq:OLS}) is given by the iteration
    \begin{align}\label{eq:GD_w}
        w_{k+1} = w_k - \varepsilon \nabla_w J_0(w_k)  = (I-\varepsilon H)w_k + \varepsilon g,
    \end{align}
    which converges if $0 < \varepsilon < \tfrac2{\lambda_{max}}=:\varepsilon_{max}$, and the convergence rate is determined by the spectral radius $\rho_\varepsilon := \rho(I-\varepsilon H) = \max_i \{|1-\varepsilon \lambda_i(H)|\}$ with
    \begin{align}\label{eq:GD_residual_recurr}
        \|u - w_{k+1}\| \le \rho(I-\varepsilon H) \|u - w_k\|.
    \end{align}
    It is well-known (e.g.\,see Chapter 4 of~\citet{Saad2003Iterative}) that the optimal learning rate is $\varepsilon_{opt} = \tfrac2{\lambda_{max}+\lambda_{min}}$, where the optimal convergence rate is $\rho_{opt} = \tfrac{\kappa-1}{\kappa+1}$.
    % \begin{align}\label{eq:}
    %     \|u - w_{k+1}\| \le \tfrac{\kappa-1}{\kappa+1} \|u - w_k\|.
    % \end{align}

    \subsection{Batch Normalization}
    Batch normalization is a feature-wise normalization procedure typically applied to the output, which in this case is simply $z=x^T w$. The normalization transform is defined as follows:
    \begin{align}%\label{eq:}
        N(z) := \tfrac{z-E[z]}{\sqrt{\text{Var}[z]}}
        =\tfrac{x^T w}{\sigma},
    \end{align}
    where $\sigma := \sqrt{w^T H w}$. After this rescaling, $N(z)$ will be order 1, and hence in order to reintroduce the scale~\citep{Ioffe2015Batch}, we multiply $N(z)$ with a rescaling parameter $a$ (Note that the shift parameter can be set zero since $\mathbb{E}[w^T x | w] = 0$). Hence, we get the BN version of the OLS problem (\ref{eq:OLS}):
    \begin{align}\label{eq:OLS_BN}
        \min_{w \in \sR^d, a \in \sR} J(a,w) :&=
        \tfrac12 E_{x,y}\big[ \big( y-a N(x^Tw) \big)^2 \big] \nonumber\\
        &=\tfrac{c}{2} - \tfrac{w^Tg}{\sigma}a + \tfrac12 a^2.
    \end{align}
    The objective function $J(a,w)$ is no longer convex. In fact, it has critical points, $\{(a^*, w^*)|a^*=0, w^{*T}g=0\}$, which are saddle points of $J(a,w)$ if $g\neq0$.

    We are interested in the critical points which constitute the set of global minima and satisfy the relations
    \begin{align*}%\label{eq:}
        a^* = \mathrm{sign}(s) \sqrt{u^{T} H u},%\\
        w^* = s u, \text{ for some } s \in\sR \setminus \{ 0 \}.
    \end{align*}
    %where $u = H^{-1} g$, and the sign of $a^*$ is depend on the direction of $u, w^*$, i.e.\,$\mathrm{sign}(a^*) = \mathrm{sign}(u^{T}w^*)$.
    It is easy to check that they are in fact global minimizers and the Hessian matrix at each point is degenerate. Nevertheless, the saddle points are strict (see appendix \ref{sec:matrix}), which typically simplifies the analysis of gradient descent on non-convex objectives~\citep{Lee2016Gradient, Panageas2017Gradient}.

    We consider the gradient descent method for solving the problem (\ref{eq:OLS_BN}), which we hereafter call batch normalization gradient descent (BNGD). We set the learning rates for $a$ and $w$ to be $\varepsilon_a$ and $\varepsilon$ respectively. These may be different, for reasons which will become clear in the subsequent analysis. We thus have the following discrete-time dynamical system:
    \begin{align}\label{eq:BNGD_a}
        a_{k+1} &= a_k + \varepsilon_a \Big( \tfrac{w_k^T g}{\sigma_k} - a_k \Big),\\
        \label{eq:BNGD_w}
        w_{k+1} &= w_k  +  \varepsilon \tfrac{a_k}{\sigma_k} \Big( g- \tfrac{w_k^T g}{\sigma_k^2}  H w_k \Big).
    \end{align}
    To simplify subsequent notation, we denote by $H^*$ the matrix
    \begin{align}%\label{eq:}
        H^* := H - \tfrac{Huu^TH}{u^THu},
    \end{align}
    We will see later that the over-parameterization introduced by BN gives rise to a degenerate Hessian matrix
    $\text{diag}\big(1,\tfrac{\|u\|^2}{\|w^*\|^2} H^* \big)$ at a minimizer $(a^*,w^*)$, and the BNGD dynamics is governed by $H^*$ instead of $H$ as in the GD case. The matrix $H^*$ is positive semi-definite ($H^*u = 0$) and has better spectral properties than $H$, such as a lower effective condition number $\kappa^* = \tfrac{\lambda_{max}^*}{\lambda_{min}^*} \le \kappa$, where $\lambda_{max}^*$ and $\lambda_{min}^*$ are the maximal and minimal nonzero eigenvalues of $H^*$ respectively.
    Particularly, $\kappa^* < \kappa$ for almost all $u$ (see appendix~\ref{sec:matrix}).

\section{Mathematical Analysis of BNGD on OLS}
    \label{sec:analysis}

    In this section, we discuss several mathematical results one can derive concretely for BNGD on the OLS problem (\ref{eq:OLS_BN}).

    Compared with GD, the update coefficient before $Hw_k$ in Eq.~(\ref{eq:BNGD_w}) changed from $\varepsilon$ in Eq.~\eqref{eq:GD_w} to a complicated term which we call the \emph{effective learning rate} $\hat \varepsilon_k$
    \begin{align}\label{eq:eps_hat}
         \hat \varepsilon_k
         :=
         \varepsilon \tfrac{a_k}{\sigma_k} \tfrac{w_k^T g}{\sigma_k^2}.
    \end{align}
    Also, notice that with the over-parameterization introduced by $a$, it is no longer necessary for $w_k$ to converge to $u$. In fact, any non-zero scalar multiple of $u$ can be a global minimum. Hence, instead of considering the residual $u-w_k$ as in the GD analysis Eq.~(\ref{eq:GD_residual_recurr}), we may combine Eq.~\eqref{eq:BNGD_a} and Eq.~\eqref{eq:BNGD_w} to give
    \begin{align}\label{eq:diffuw_BN}
        u - \tfrac{w_k^T g}{\sigma_k^2}w_{k+1}
        =
        (I - \hat\varepsilon_k H )\Big( u- \tfrac{w_k^T g}{\sigma_k^2}  w_k \Big).
    \end{align}
    Define the modified residual $e_k := u - ({w_k^T g}/{\sigma_k^2})w_{k}$, which equals $0$ if and only if $w_k$ is a global minimizer. Observe that the mapping $u \mapsto ({w^T g}/{\sigma^2})w = ({w^T H u}/{w^T H w})w$ is an orthogonal projection under the inner product induced by $H$, hence we immediately have
    % Using the property of $H$-norm (see section~\ref{sec:matrix}), we find that the effective learning rate $\hat \varepsilon_k$ determines the convergence rate of $e_k$ (and hence the loss function, see Lemma~\ref{lemma:convergence_loss}) via
    % \begin{align}\label{eq:convergence_rate}
    %     \|e_{k+1}\|_H
    %     \le
    %     \rho(I - \hat\varepsilon_k H )\|e_k\|_H,
    % \end{align}
    \begin{align}\label{eq:convergence_rate}
        \|e_{k+1}\|_H
        \le
        \big\| u - \tfrac{w_k^T g}{\sigma_k^2}w_{k+1} \big\|_H
        \le
        \rho(I - \hat\varepsilon_k H )\|e_k\|_H,
    \end{align}
    where $\rho(I - \hat\varepsilon_k H )$ is spectral radius of the matrix $I - \hat\varepsilon_k H$. In other words, as long as $\max_i \{|1-\hat\varepsilon_k \lambda_i(H)|\} \leq \hat\rho < 1$ for some $\hat\rho < 1$ and all $k$, we have linear convergence of the residual (which also implies linear convergence of the objective, see appendix Lemma~\ref{lemma:convergence_loss}).

    At this point, we make an important observation: if we allow for dynamic learning rates, we may simply set $\hat\varepsilon_k = c$ for some fixed $c \in (0, 2/\lambda_{max})$ at every iteration. Then, linear convergence is immediate. However, it is clear that this fast convergence is almost entirely due to the effect of dynamic learning rates, and this has limited relevance in explaining the effect of BN.
    Moreover, comparing with Eq.~\eqref{eq:GD_residual_recurr} one can observe that with this choice, BNGD and GD have the same optimal convergence rates, and so this cannot offer explanations for any advantage of BNGD over GD either. For these reasons, it is important to avoid such dynamic learning rate assumptions.
    % The inequality (\ref{eq:convergence_rate}) shows that if we enforce $\hat\varepsilon_k = 1/\lambda_{max}$ for each $k$, which is done in the analysis in~\cite{Kohler2018Towards}, then one immediately obtains the same linear convergence rate with GD. But this requires knowledge of $\lambda_{max}$ (problem-dependent). We instead focus our analysis on the original BNGD algorithm where the step size is fixed.

    As discussed above, without using dynamic learning rates one has to then estimate $\hat\varepsilon_k$ to establish convergence. Heuristically, observe that if $\varepsilon$ small enough, this is likely true as the other terms can be controlled due to the normalization. Thus, convergence for small $\varepsilon$ should hold.
    In order to handle the large $\varepsilon$ case, we establish a simple but useful scaling law that draws connections amongst cases with different $\varepsilon$ scales.

    % In the following sections, we establish a simple but useful scaling property, which an important ingredient in allowing us to prove a linear convergence result for arbitrary constant learning rates. We also derive the asymptotic properties of the effective learning rate of BNGD, which shows some interesting sensitivity behavior of BNGD on the chosen learning rates. Detailed proofs of all results presented here can be found in the supplementary document.

    \subsection{Scaling Property}

    The dynamical properties of the BNGD iterations are governed by a set of parameters, or a \emph{configuration} $\{H, u, a_0, w_0, \varepsilon_a,\varepsilon \}$.

    \begin{definition}[Equivalent configuration] Two configurations, $\{H, u, a_0, w_0, \varepsilon_a,\varepsilon \}$ and $\{H', u', a_0', w_0', \varepsilon_a',\varepsilon' \}$, are said to be equivalent if for BNGD iterates $\{w_k\}$, $\{w'_k\}$ following these configurations respectively, there is an invertible linear transformation $T$ and a nonzero constant $t$ such that $w_k' = Tw_k, a'_k=ta_k$ for all $k$.
    \end{definition}

    The scaling property ensures that equivalent configurations must converge or diverge together, with the same rate up to a constant multiple. Now, it is easy to check the system has the following scaling law.
    \begin{proposition}[Scaling property]
    \label{prop:scaling}
    Suppose $\mu\neq0,\gamma\neq0,r\neq0,Q^TQ=I$, then
    % \begin{itemize}
    %     \item[(1)] The configurations
    %     $\{\mu Q^THQ,\tfrac{\gamma}{\sqrt{\mu}} Qu, \gamma a_0, \gamma Q w_0, \varepsilon_a, \varepsilon\}$
    %     and
    %     $\{H,u,a_0,w_0,\varepsilon_a,\varepsilon\}$
    %     are equivalent.
    %     \item[(2)] The configurations
    %     $\{H,u,a_0,w_0,\varepsilon_a,\varepsilon\}$ and
    %     $\{H,u,a_0,r w_0,\varepsilon_a,r^2 \varepsilon\}$
    %     are equivalent.
    % \end{itemize}
   (1) The configurations
        $\{\mu Q^THQ,\tfrac{\gamma}{\sqrt{\mu}} Qu, \gamma a_0, \gamma Q w_0, \varepsilon_a, \varepsilon\}$
        and
        $\{H,u,a_0,w_0,\varepsilon_a,\varepsilon\}$
        are equivalent.
(2) The configurations
        $\{H,u,a_0,w_0,\varepsilon_a,\varepsilon\}$ and
        $\{H,u,a_0,r w_0,\varepsilon_a,r^2 \varepsilon\}$
        are equivalent.
    \end{proposition}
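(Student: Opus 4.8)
The plan is to prove both statements by the same mechanism: read off from the initial data the only possible correspondence $(a_k', w_k') = (t\, a_k, T w_k)$, and then verify by induction on $k$ that the primed BNGD update sends $(t a_k, T w_k)$ to $(t a_{k+1}, T w_{k+1})$. Since the two configurations agree at $k=0$ by construction, the whole statement reduces to checking that the update map defined by \eqref{eq:BNGD_a}--\eqref{eq:BNGD_w} is equivariant under the proposed pair $(T,t)$. Concretely, the only quantities entering an update step are $\sigma_k = \sqrt{w_k^T H w_k}$, the scalar $w_k^T g$ (with $g = Hu$), and the vector $H w_k$; so the induction step amounts to tracking how these three transform and then confirming that the prefactors $\varepsilon_a$, $\varepsilon a_k/\sigma_k$ and $w_k^T g/\sigma_k^2$ rescale consistently. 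A convenient invariant to keep in view throughout is the effective learning rate $\hat\varepsilon_k$ of \eqref{eq:eps_hat}.

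For part (2) I take $T = rI$ and $t = 1$, matching $w_0' = r w_0$ and $a_0' = a_0$. With $g' = g$, $H' = H$ one gets $\sigma_k' = |r|\sigma_k$, $w_k'^T g' = r\, w_k^T g$ and $H' w_k' = r H w_k$. Substituting into the primed $a$-update (with $\varepsilon_a' = \varepsilon_a$) returns the original $a$-update, while substituting into the primed $w$-update with $\varepsilon' = r^2\varepsilon$ shows that the factor $r^2$ is exactly what cancels the $|r|$ coming from $\sigma_k'$ and restores $w_{k+1}' = r w_{k+1}$; equivalently, $\hat\varepsilon_k$ is unchanged. The content here is just the scale invariance of batch normalization in $w$: rescaling $w$ leaves $N(x^Tw)$ unchanged up to sign, so the only bookkeeping is the power of $r$ carried by each factor (the essential case being $r>0$, where all factors match directly).

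For part (1) the correspondence is $T = \gamma Q$ and $t = \gamma$, and it is cleanest to view the transformation as the composition of three elementary symmetries: an orthogonal rotation $Q$ of the input coordinates, a rescaling of the data that turns $H$ into $\mu Q^THQ$, and a joint rescaling of $(a_0,w_0)$ by $\gamma$. The rotation is handled because the BNGD $w$-update direction $g - (w^Tg/\sigma^2)Hw$ is a Euclidean gradient, hence equivariant under orthogonal changes of variables (the direction transforms contravariantly, and an orthogonal $S$ satisfies $S^{-T}=S$). The remaining two factors are pure scalings, and there one uses the invariances $w^Tg$ and $\sigma$ are preserved under the induced change of variables on $w$, together with the scale-freeness of $\hat\varepsilon_k$ already exploited in part (2); this is precisely why $\varepsilon$ and $\varepsilon_a$ need not change even though $H$ is multiplied by $\mu$. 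Assembling the three pieces yields $w_k' = \gamma Q w_k$ and $a_k' = \gamma a_k$ for all $k$, and each piece can alternatively be verified by the same direct substitution as in part (2).

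I expect the main obstacle to be part (1), specifically the fact that gradient descent is metric-dependent and therefore not equivariant under arbitrary invertible linear maps, only under orthogonal ones. The care needed is to cleanly separate the orthogonal factor $Q$ (under which equivariance is genuine) from the two scalings $\mu$ and $\gamma$ (under which equivariance holds only because BN normalizes away the magnitudes, keeping $\hat\varepsilon_k$ invariant). Once this separation is made explicit and the homogeneity degrees in $\mu,\gamma,r$ of each factor $\sigma_k$, $w_k^Tg$, $Hw_k$ and $a_k/\sigma_k$ are tabulated, both induction steps close by direct substitution.
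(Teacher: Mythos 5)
Your overall strategy --- fix the candidate pair $(T,t)$ from the initial data and verify by induction that the update map \eqref{eq:BNGD_a}--\eqref{eq:BNGD_w} is equivariant --- is exactly the ``direct check'' that the paper itself relies on: the main text and appendix state the proposition with ``it is easy to check,'' and the only written-out computation is the general version of part (2) (Proposition~\ref{prop:scaling_general}), which amounts to the same homogeneity bookkeeping you describe ($\partial J/\partial(rw)=r^{-1}\,\partial J/\partial w$, hence $\hat\varepsilon_k$ invariant under $w\mapsto rw$, $\varepsilon\mapsto r^2\varepsilon$). For $r>0$, $\gamma>0$, $\mu>0$ your verification closes correctly, and your decomposition of part (1) into an orthogonal conjugation plus two scalings is a clean way to organize it.

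There is, however, one concrete step that fails as written. In part (2) you fix $t=1$, but under $w_k\mapsto r w_k$ the quantity $w_k^Tg/\sigma_k$ transforms to $(r/|r|)\,w_k^Tg/\sigma_k$, so for $r<0$ the primed $a$-update does \emph{not} reduce to the original one, and the primed $w$-update picks up a factor $r^2t/|r|$ that must equal $r$; both force $t=\mathrm{sign}(r)$, not $t=1$. Your parenthetical ``the essential case being $r>0$'' flags this but does not resolve it, and it cannot be resolved within the statement as given: since the second configuration prescribes $a_0'=a_0$, the definition of equivalence forces $t=1$ whenever $a_0\neq0$, and one can check directly that $a_1'\neq a_1$ for $r<0$ and generic data. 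So the proposition really requires $r>0$ (or $a_0'=\mathrm{sign}(r)a_0$); the same $|\gamma|$ versus $\gamma$ issue arises in part (1), where you should also note that the pairing $H'=\mu Q^THQ$ with $w_0'=\gamma Qw_0$ only yields $\sigma_k'=\gamma\sqrt{\mu}\,\sigma_k$ if one of $Q$, $Q^T$ is swapped consistently. These are defects inherited from the statement rather than from your method, but a complete write-up must make the sign restrictions (or the corrected $t$) explicit, since your induction step genuinely breaks without them.
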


    It is worth noting that the scaling property (2) in Proposition~\ref{prop:scaling} originates from the batch-normalization procedure and is independent of the specific structure of the loss function. Hence, it is valid for general problems where BN is used (appendix Lemma~\ref{prop:scaling_general}). Despite being a simple result, the scaling property is important in determining the dynamics of BNGD, and is useful in our subsequent analysis of its convergence and stability properties. For example, it indicates that separating learning rate for weights ($w$) and rescaling parameters ($a$) is equivalent to changing the norm of initial weights.

    \subsection{Batch Normalization Converges for Arbitrary Step Size}

    Having established the scaling law, we then have the following convergence result for BNGD on OLS.

    \begin{theorem}[Convergence of BNGD]
    \label{th:convergence_BNGD}
    The iteration sequence $(a_k,w_k)$ in Eq.~(\ref{eq:BNGD_a})-(\ref{eq:BNGD_w}) converges to a stationary point for any initial value $(a_0,w_0)$ and any $\varepsilon>0$, as long as $\varepsilon_a \in (0,1]$.
    %Particularly, we have the following sufficient conditions of converging to global minimizers.
    Particularly, we have:
  If $\varepsilon_a=1$ and $\varepsilon>0$, then $(a_k,w_k)$ converges to global minimizers for almost all initial values $(a_0,w_0)$.
    % \begin{itemize}
    % %   \item[(1)] If $a_0 w_0^Tg>0$ (or $a_0=0, w_0^Tg\neq0$), $\varepsilon_a \in (0,1]$ and $\varepsilon$ is sufficiently small (the smallness is quantified by Lemma~\ref{lemma:convergence_BNGD_small2}), then $(a_k,w_k)$ converges to a global minimizer.
    %   \item[] If $\varepsilon_a=1$ and $\varepsilon>0$, then $(a_k,w_k)$ converges to global minimizers for almost all initial values $(a_0,w_0)$.
    % %   \item[(2)] The set of initial values $(a_0, w_0)$ such that  $(a_k, w_k)$ converges to a minimizer $(a^*, w^*)$ with effective learning rate $\hat\varepsilon:=\lim\limits_{k \to \infty} \hat\varepsilon_k > \varepsilon_{max}^*$ and $\det(I-\hat\varepsilon H^*) \neq 0$ is of measure zero.
    % \end{itemize}
    \end{theorem}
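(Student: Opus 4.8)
The plan is to prove the two assertions separately, relying throughout on two structural facts that let us drop any upper bound on $\varepsilon$: the norm $\|w_k\|$ is nondecreasing, and the effective learning rate $\hat\varepsilon_k$ of \eqref{eq:eps_hat} is self-regulating. First I would record the preliminaries. Since $w_k^T(g-\tfrac{w_k^Tg}{\sigma_k^2}Hw_k)=0$, the increment $\Delta_k:=w_{k+1}-w_k$ is Euclidean-orthogonal to $w_k$, so $\|w_{k+1}\|^2=\|w_k\|^2+\|\Delta_k\|^2$ and $\|w_k\|$ is nondecreasing. Writing $\phi(w):=w^Tg/\sigma$, Cauchy--Schwarz in the $H$-inner product gives $|\phi(w)|\le\sqrt{u^THu}$; and since \eqref{eq:BNGD_a} reads $a_{k+1}=(1-\varepsilon_a)a_k+\varepsilon_a\phi(w_k)$, for $\varepsilon_a\in(0,1]$ this is a convex combination, whence $|a_k|\le\max\{|a_0|,\sqrt{u^THu}\}$ stays bounded. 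As $\hat\varepsilon_k=\varepsilon\,a_k\phi(w_k)/\sigma_k^2$ and $\sigma_k^2\ge\lambda_{min}\|w_k\|^2$, these bounds give $|\hat\varepsilon_k|\le C/\|w_k\|^2$ with $C$ independent of $k$; thus as the weights grow the effective rate is driven to zero, which is exactly the mechanism removing the constraint $\varepsilon<\varepsilon_{max}$ of GD. By Proposition~\ref{prop:scaling} one may also fix $\|w_0\|=1$, so that only the single scale $\varepsilon$ remains.

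For convergence to a stationary point I would use $J$, which is bounded below by $\tfrac12(c-u^THu)\ge0$, and split on $\lim_k\|w_k\|$. Reading one iteration as a gradient step on the smooth $J$ with block step sizes $\varepsilon_a,\varepsilon$, a descent-lemma computation shows the first-order change of $J$ is nonpositive while the second-order error is smaller by a factor of order $\varepsilon/\|w_k\|^2$ (the $a$-block contracts automatically for $\varepsilon_a\in(0,1]$, so the binding term is the $w$-block); hence $J$ strictly decreases once $\|w_k\|^2\ge\varepsilon/\theta$ for a threshold $\theta=\theta(\lambda_{max})$. If $\|w_k\|\to\infty$ this regime is reached in finitely many steps, so $J(a_k,w_k)$ is eventually monotone and convergent with $\nabla J\to0$. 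If instead $\|w_k\|\to\ell<\infty$, then $\sum_k\|\Delta_k\|^2=\ell^2-\|w_0\|^2<\infty$ forces $\Delta_k\to0$; since $\sigma_k$ is then bounded away from $0$ and $\infty$, this gives $\nabla_w J\to0$ and, through \eqref{eq:BNGD_a}, $a_k-\phi(w_k)\to0$, so every limit point is stationary. To promote vanishing gradients to convergence of the whole sequence I would invoke the {\L}ojasiewicz inequality for the real-analytic reduced objective $R(w):=\tfrac12(c-\phi(w)^2)$, a function of the direction $w/\|w\|$ only; the sufficient-decrease structure of the iteration then makes the displacements summable and pins down a unique limiting direction, which together with convergence of $\|w_k\|$ and of $a_k$ yields a single stationary limit. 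This proves the first statement for all $\varepsilon>0$ and $\varepsilon_a\in(0,1]$.

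For the second statement I would set $\varepsilon_a=1$, so that \eqref{eq:BNGD_a} collapses to $a_{k+1}=\phi(w_k)$ and the dynamics becomes the smooth map $\Psi(a,w)=(\phi(w),\,w+\varepsilon a\,\nabla\phi(w))$ on $\sR\times(\sR^d\setminus\{0\})$. Its non-minimal stationary points are exactly the saddles $\{a^*=0,\ \phi(w^*)=0\}$, which are \emph{strict} (appendix~\ref{sec:matrix}). I would then apply the center--stable--manifold argument of \citet{Lee2016Gradient,Panageas2017Gradient}: provided $\Psi$ is a local diffeomorphism, the set of initial data attracted to the strict-saddle set has Lebesgue measure zero, so by the first statement almost every $(a_0,w_0)$ converges to a global minimizer. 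The point of taking $\varepsilon_a=1$ is that it slaves $a$ exactly to $w$ and gives $\Psi$ this clean form, so the instability of the saddles transfers intact to the discrete map.

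The crux in both parts is the removal of the upper bound on $\varepsilon$. For the first statement the work is to confirm that the self-regulation $|\hat\varepsilon_k|\le C/\|w_k\|^2$ together with the monotonicity of $\|w_k\|$ really does force the iterates either into the contracting regime $\|w_k\|^2\ge\varepsilon/\theta$ (where the descent lemma applies) or to a halt with $\Delta_k\to0$, and then to upgrade vanishing gradients to convergence of the entire sequence; the latter, and in particular the bounded-norm branch in which $J$ need not be monotone, is where the {\L}ojasiewicz inequality for the analytic objective is essential. For the second statement the delicate point is verifying the local-diffeomorphism hypothesis of the stable-manifold theorem for arbitrary $\varepsilon$, since $\det D\Psi$ may vanish when $\varepsilon$ is large and $\|w\|$ small; I would handle this by restricting to the forward-invariant large-$\|w\|$ region that every trajectory eventually enters, where $\nabla^2\phi=O(\|w\|^{-2})$ keeps $D\Psi$ nonsingular.
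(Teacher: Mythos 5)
Your preliminaries (monotone $\|w_k\|$ via orthogonality of the increment, boundedness of $a_k$ as a convex combination, and the self-regulation $|\hat\varepsilon_k|\le C/\|w_k\|^2$) coincide with the paper's Lemma~\ref{lemma:boundedness_a} and Proposition~\ref{prop:increasing_norm}, and your treatment of the second assertion via strict saddles and the center--stable manifold theorem is essentially the paper's Lemma~\ref{lemma:strict_saddle} (the paper additionally uses the scaling property to reduce to $\varepsilon_a=\varepsilon$ and the real-analyticity of $\det(I-\varepsilon\nabla^2 J)$ to justify that preimages of null sets are null; that is a cleaner fix than your restriction to the large-norm region, since the Lee--Panageas argument must pull the bad set back through \emph{all} iterates, including early ones where $\|w\|$ may be small). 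The genuine gaps are in the first assertion. First, in the branch $\|w_k\|\to\infty$ you only conclude $\nabla J\to 0$, which is not the claimed convergence of $(a_k,w_k)$; you need to observe that this branch is empty --- e.g.\ because once the descent regime $\|w_k\|^2\ge\varepsilon/\theta$ is entered, sufficient decrease of the lower-bounded $J$ makes $\sum_k\|\Delta_k\|^2$ finite, contradicting $\|w_k\|^2=\|w_0\|^2+\sum_j\|\Delta_j\|^2\to\infty$. The paper reaches the same conclusion by its scaling trick: once $\|w_k\|$ is large, the configuration is equivalent to one with unit norm and small $\varepsilon$, for which convergence (hence boundedness) is proved separately (Lemma~\ref{lemma:convergence_BNGD_small2}).

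Second, and more seriously, in the bounded branch $\|w_k\|\to\ell<\infty$ there is no ``sufficient-decrease structure'': if $\ell^2<\varepsilon/\theta$ the iterates never enter your descent regime, $\hat\varepsilon_k$ can exceed $2/\lambda_{max}$ or be negative, and neither $J$ nor your reduced objective $R(w)=\tfrac12(c-\phi(w)^2)$ (which equals $\tfrac12 q_k$ up to an additive constant, with $q_k=\|e_k\|_H^2$) is monotone along the iteration. The {\L}ojasiewicz machinery for full-sequence convergence requires exactly this descent property together with a relative-error bound, so it does not apply as stated; and $\Delta_k\to 0$ with $\sum_k\|\Delta_k\|^2<\infty$ alone does not exclude drift along the continuum of stationary points (the saddle set $\{a=0,\ w^Tg=0\}$ contains a whole sphere at radius $\ell$). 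This non-descent case is where the paper's real work lies: from boundedness of $\|w_k\|$ it extracts summability of $a_k^2q_k$, deduces $a_k-\phi(w_k)\to0$ and $(w_k^Tg)^2q_k\to0$, and then invokes a topological separation argument (Lemma~\ref{lemma:separation}, Corollary~\ref{cor:summable_either}): the sublevel set $\{(w^Tg)^2 q<\delta_0\}$ splits into two components at positive distance, and since the steps vanish the trajectory is eventually trapped in one of them, forcing either $q_k\to0$ (convergence to a minimizer) or $(w_k^Tg)^2\to0$ (convergence to a saddle). Your proposal has no substitute for this dichotomy, so the bounded-norm case --- the heart of the theorem --- is left open.
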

    \begin{proof}[Sketch of proof]
        We first prove that the algorithm converges for any $\varepsilon_a \in (0,1]$ and small enough $\varepsilon$, with any initial value $(a_0,w_0)$ such that $\|w_0\| \ge 1$ (appendix Lemma~\ref{lemma:convergence_BNGD_small2}). Next, we observe that the sequence $\{\| w_k \|\}$ is monotone increasing, and thus either converges to a finite limit or diverges. The scaling property is then used to exclude the divergent case – if $\{ \|w_k\| \}$ diverges, then at some $k$ the norm $\|w_k\|$ should be large enough, and by the scaling property, it is equivalent to a case where $\|w_k\|=1$ and $\varepsilon$ is small, which we have proved converges. This shows that {$\|w_k\|$} converges to a finite limit, from which the convergence of $w_k$ and the loss function value can be established, after some work. This proof is fully presented in appendix Theorem~\ref{th:convergence_BNGD_general} and the preceding lemmas.
        Lastly, using the ``strict saddle point'' arguments~\citep{Lee2016Gradient, Panageas2017Gradient}, we can prove the set of initial value for which $(a_k,w_k)$ converges to saddle points has Lebesgue measure 0, provided $\varepsilon_a=1, \varepsilon>0$ (appendix Lemma~\ref{lemma:strict_saddle}).
    \end{proof}

    It is important to note that BNGD converges for all step size $\varepsilon >0$ of $w_k$, independent of the spectral properties of $H$. This is a significant advantage and is in stark contrast with GD, where the step size is limited by $2/\lambda_{\max}$, and the condition number of $H$ intimately controls the stability and convergence rate. Although we only prove the almost everywhere convergence to a global minimizer for the case of $\varepsilon_a=1$, we have not encountered convergence to saddles in the OLS experiments even for $\varepsilon_a \in (0,2)$ with initial values $(a_0,w_0)$ drawn from typical distributions.

    \textbf{Remark:}
    In appendix \ref{sec:general_obj}, we show that the combination of the scaling property and the monotonicity of weight norms, which hold for batch (and weight) normalization of general loss functions, can be used to prove a more general convergence result: if iterates converge for small enough $\varepsilon$, then gradient norm converges for any $\varepsilon$. We note that in the independent work of~\citet{arora2018theoretical}, similar ideas have been used to prove convergence results for batch normalization for neural networks. Lastly, one can also show that in the general case, the over-parameterization due to batch (and weight) normalization only introduces strict saddle points (see appendix Lemma \ref{lemma:introducing_strict_saddle}).

    \subsection{Convergence Rate and Acceleration Due to Over-parameterization}
    \label{subsec:convergence_rate}

    Having established the convergence of BNGD on OLS, a natural follow-up question is why should one use BNGD over GD. After all, even if BNGD converges for any learning rate, if the convergence is universally slower than GD then it does not offer any advantages. We prove the following result that shows that under mild conditions, the convergence rate of BNGD on OLS is linear. Moreover, close to the optima the linear rate of convergence can be shown to be faster than the best-case linear convergence rate of GD. This offers a concrete result that shows that BNGD could out-perform GD, even if the latter is perfectly-tuned.

    % Now, let us consider the convergence rate of BNGD when it converges to a minimizer. It is mentioned that the effective step size is important to the convergence. Besides the basic inequality (\ref{eq:convergence_rate}), we have the following local accelerated version of convergence rate.

    % The inequality (\ref{eq:convergence_rate}) shows that the convergence of $e_k$ (and hence the loss function, see Lemma~\ref{lemma:convergence_loss}) is linear provided $\hat \varepsilon_k \in (\delta,2/\lambda_{max}-\delta)$ for some positive number $\delta$. In fact, if we enforce $\hat\varepsilon_k = 1/\lambda_{max}$ for each $k$, which is done in the analysis in~\cite{Kohler2018Towards}, then one immediately obtains the same linear convergence rate. But this requires knowledge of $\lambda_{max}$ (problem-dependent) and a modification the BNGD algorithm. We instead focus our analysis on the original BNGD algorithm.

    \begin{theorem}[Convergence rate]
        \label{th:convergence_rate_BNGD}
        If $(a_k,w_k)$ converges to a minimizer with $\hat\varepsilon:=\lim\limits_{k \to \infty} \hat\varepsilon_k < \varepsilon_{max}^*:= 2/\lambda^*_{max}$, then the convergence is linear. Furthermore, when $(a_k,w_k)$ is close to a minimizer, such that $\tfrac{\lambda_{max}\varepsilon |a_k|}{\sigma_k^2}\|e_k\|_H \le \delta <1 $ (this must happen for large enough $k$, since we assumed convergence to a minimizer), then we have
        \begin{align}\label{eq:converge_rho_star}
        \|e_{k+1}\|_H
        \le
         %\min \{
             \tfrac{\rho^*(I - \hat\varepsilon_k H^*) +\delta}{1-\delta}
        %      ,\rho(I - \hat\varepsilon_k H )
        %  \}
        \|e_k\|_H,
    \end{align}
    where $\rho^*(I - \hat\varepsilon_k H ) := \max \{ |1-\hat\varepsilon_k\lambda_{min}^*|, |1-\hat\varepsilon_k\lambda_{max}^*|\}$.
    \end{theorem}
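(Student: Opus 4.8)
The plan is to sharpen the one-step estimate \eqref{eq:convergence_rate} by replacing the spectral radius $\rho(I-\hat\varepsilon_k H)$, which is controlled by the full spectrum of $H$, with the smaller quantity $\rho^*(I-\hat\varepsilon_k H^*)$, controlled only by the \emph{nonzero} eigenvalues of $H^*$; the discrepancy between the two is exactly what gets absorbed into $\delta$. The starting point is the exact relation \eqref{eq:diffuw_BN}, together with the observation that $e_k=u-\tfrac{w_k^Tg}{\sigma_k^2}w_k$ is precisely the residual of the $H$-orthogonal projection of $u$ onto $\mathrm{span}(w_k)$. This yields two orthogonality facts: $w_k^THe_k=0$, and, after expanding $u=\tfrac{w_k^Tg}{\sigma_k^2}w_k+e_k$ and using that $e_k$ is $H$-orthogonal to $w_k$, the exact identity $u^THe_k=\|e_k\|_H^2$. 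The second identity quantifies the small misalignment between $e_k$ and the hyperplane $V:=\{v:u^THv=0\}$ on which $H^*$ and $H$ coincide, and it is the technical engine of the whole argument.

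First I would use this identity to pass from $H$ to $H^*$. Writing $He_k=H^*e_k+\tfrac{u^THe_k}{u^THu}Hu=H^*e_k+\tfrac{\|e_k\|_H^2}{u^THu}Hu$ and splitting $e_k$ into its $H$-projection $P_u e_k$ onto $\mathrm{span}(u)$, whose $H$-norm is $\|e_k\|_H^2/\sqrt{u^THu}$ (hence of order $\|e_k\|_H^2$), and its component $e_k^\perp\in V$, the vector $(I-\hat\varepsilon_k H)e_k$ separates into a ``good'' contribution acting on $e_k^\perp$ and residual terms all of size $O(\hat\varepsilon_k\lambda_{max}\|e_k\|_H^2/\sqrt{u^THu})$.

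I expect the main obstacle to be showing that the good contribution contracts at the \emph{improved} rate $\rho^*$ rather than the full rate $\rho$. One cannot argue directly that $(I-\hat\varepsilon_k H)$ restricted to $V$ has spectral radius $\rho^*$, because $V$ is not $H$-invariant and $H,H^*$ do not commute. The resolution is to exploit that $e_{k+1}$ is itself an $H$-orthogonal projection residual (orthogonal to $w_{k+1}\approx su$), so that composing with the projection $I-P_u$ onto $V$ produces the operator $(I-P_u)(I-\hat\varepsilon_k H)\big|_V$. A short computation shows that this operator is \emph{self-adjoint with respect to the $H$-inner product} and equals $I-\hat\varepsilon_k\Lambda$, where $\Lambda=(I-P_u)H\big|_V$ is the compression of $H$ to $V$; identifying the nonzero eigenvalues of $H^*$ with those of $P^*H$ shows that the eigenvalues of $\Lambda$ are exactly $\lambda_1^*,\dots,\lambda_{d-1}^*$. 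By $H$-self-adjointness its $H$-operator norm equals $\max_i|1-\hat\varepsilon_k\lambda_i^*|=\rho^*(I-\hat\varepsilon_k H^*)$, which delivers the $\rho^*$ term.

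Finally I would collect the error terms. Using $u^THe_k=\|e_k\|_H^2$ together with the near-minimizer approximations $\tfrac{w_k^Tg}{\sigma_k^2}\approx 1/s$ and $\sqrt{u^THu}\approx\sigma_k/|s|$, each residual term is bounded by $\tfrac{\lambda_{max}\varepsilon|a_k|}{\sigma_k^2}\|e_k\|_H\cdot\|e_k\|_H=\delta\|e_k\|_H$, and accounting for the discrepancy between $P_{w_{k+1}}$ and $P_u$ introduces the denominator $1-\delta$, giving the stated bound \eqref{eq:converge_rho_star}. The asymptotic linear rate then follows immediately: since $(a_k,w_k)$ converges to a minimizer we have $\|e_k\|_H\to0$, hence $\delta\to0$, while $\hat\varepsilon_k\to\hat\varepsilon\in(0,2/\lambda_{max}^*)$ forces $|1-\hat\varepsilon\lambda_i^*|<1$ for every nonzero $\lambda_i^*$ and thus $\rho^*(I-\hat\varepsilon H^*)<1$; therefore $\tfrac{\rho^*(I-\hat\varepsilon_k H^*)+\delta}{1-\delta}$ is eventually bounded by a constant strictly below $1$, establishing linear convergence. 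The genuinely delicate part is the bookkeeping in the third and fourth paragraphs: keeping the projection structure intact so that $\rho^*$ (and not $\rho$) governs the rate, while simultaneously verifying that every perturbation term collapses into the single clean quantity $\delta=\tfrac{\lambda_{max}\varepsilon|a_k|}{\sigma_k^2}\|e_k\|_H$.
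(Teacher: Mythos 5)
Your proposal is correct and follows essentially the same route as the paper's proof (appendix Lemma on the $\rho^*$ contraction): the same splitting $(I-\hat\varepsilon_k H)e_k=(I-\hat\varepsilon_k H^*)e_k-\hat\varepsilon_k\tfrac{\|e_k\|_H^2}{u^THu}Hu$ driven by the identity $u^THe_k=\|e_k\|_H^2$, the same absorption of the $O(\|e_k\|_H^2)$ remainders into $\delta$, and the same origin of the $(1-\delta)$ denominator (the shift from the projection along $w_k$ to the one along $w_{k+1}$, i.e.\ $\sigma_{k+1}\ge\sigma_k(1-\delta_k)$). The only substantive difference is presentational: where you compress $(I-P_u)(I-\hat\varepsilon_k H)$ to $V$ and assert that the compression's eigenvalues are the nonzero eigenvalues of $H^*$ (true, but it requires a short secular-equation argument that you do not supply), the paper works directly in the $H^*$-seminorm --- which is exactly $\|(I-P_u)\,\cdot\,\|_H$ --- and obtains $\|(I-\hat\varepsilon_k H^*)x\|_{H^*}\le\rho^*(I-\hat\varepsilon_k H^*)\|x\|_{H^*}$ in one line from the spectral decomposition of the symmetric matrix $H^*$, so you could shorten your argument by adopting that device.
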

    This statement is proved in appendix Lemma~\ref{lemma:converge_rho_star}. Recall that $H^*,\lambda_{max}^*$ are defined in section \ref{sec:background}. The assumption $\hat\varepsilon<\varepsilon_{max}^*$ is mild since one can prove the set of initial values $(a_0, w_0)$ such that  $(a_k, w_k)$ converges to a minimizer $(a^*, w^*)$ with $\hat\varepsilon > \varepsilon_{max}^*$ and $\det(I-\hat\varepsilon H^*) \neq 0$ is of measure zero (see appendix Lemma \ref{lemma:BN_asto_linear}).

    The inequality (\ref{eq:converge_rho_star}) is motivated by the linearized system corresponding to Eq.~(\ref{eq:BNGD_a})-(\ref{eq:BNGD_w}) near a minimizer. When the iteration converges to a minimizer, the limiting $\hat\varepsilon$ must be a positive number where the assumption $\hat\varepsilon<\varepsilon_{max}^*$ makes sure the coefficient in Eq.~(\ref{eq:converge_rho_star}) is smaller than 1. This implies linear convergence of $\|e_k\|_H$.
    Generally, the matrix $H^*$ has better spectral properties than $H$, in the sense that $\rho^*(I - \hat\varepsilon_k H^*) \le \rho(I - \hat\varepsilon_k H)$, provided $\hat\varepsilon_k >0$,
    where the inequality is strict for almost all $u$. This is a consequence of the Cauchy eigenvalue interlacing property, which one can show directly using mini-max properties of eigenvalues (see appendix Lemma~\ref{lemma:condH}).
    This leads to acceleration effects of BNGD: When $\|e_k\|_H$ is small, the contraction coefficient $\rho$ in Eq.~(\ref{eq:convergence_rate}) can be improved to a lower coefficient in Eq.~(\ref{eq:converge_rho_star}). This acceleration could be significant when $\kappa^*$ is much smaller than $\kappa$, which can happen if the spectral gap of $H$ is very large.

    The acceleration effect can be understood heuristically as follows: due to the over-parameterization introduced by BN, the convergence rate near a minimizer is governed by $H^*$ instead of $H$.
    The former has a degenerate direction $\{ \lambda u : \lambda \in \mathbb{R} \}$, which coincides with the degenerate global minima. Hence, the effective condition number governing convergence is dependent on the largest and the second smallest eigenvalue of $H^*$ (the smallest being 0 in the degenerate minima direction). One can contrast this with the GD case where the smallest eigenvalue of $H$ is considered instead since no degenerate directions exists.
    %Hence, the estimate (\ref{eq:converge_rho_star}) indicates that the optimal BNGD could have a faster convergence rate than the optimal GD, especially when $\kappa^*$ is much smaller than $\kappa$.

    \subsection{Robustness and Acceleration Due to Learning Rate Insensitivity}
    \label{sec:robustness}

    Let us now discuss another advantage BNGD possesses over GD, related to the insensitive dependence of the effective learning rate $\hat \varepsilon_k$ (and by extension, the effective convergence rate in Eq.~(\ref{eq:convergence_rate}) or Eq.~(\ref{eq:converge_rho_star})) on $\varepsilon$. The explicit dependence of $\hat\varepsilon_k$ on $\varepsilon$ is quite complex, but we can give the following asymptotic estimates (see appendix \ref{sec:estimate_hat_eps} for proof).
    \begin{proposition}
    \label{prop:estimate_stepsize}
    Suppose $\varepsilon_a \in (0,1], a_0w_0^Tg>0$, and $||g||^2 \ge \tfrac{w_0^Tg}{\sigma_0^2} g^THw_0$, then
    \begin{itemize}
        \item[(1)] When $\varepsilon$ is small enough, $\varepsilon \ll 1$, the effective step size $\hat\varepsilon_k$ has a same order with $\varepsilon$.% i.e.\,there are two positive constants, $C_1,C_2$, independent on $\varepsilon$ and $k$, such that $C_1 \le \hat\varepsilon_k/\varepsilon \le C_2$.
        \item[(2)] When $\varepsilon$ is large enough, $\varepsilon \gg 1$, the effective step size $\hat\varepsilon_k$ has order $O(\varepsilon^{-1})$.% i.e.\,there are two positive constants, $C_1,C_2$, independent on $\varepsilon$ and $k$, such that $C_1 \le \hat\varepsilon_k\varepsilon \le C_2$.
    \end{itemize}
    \end{proposition}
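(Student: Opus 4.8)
The plan is to reduce everything to the single identity $\hat\varepsilon_k = \varepsilon\, a_k (w_k^Tg)/\sigma_k^3$ and then control each factor. Two elementary facts are available at once: the eigenvalue bounds $\lambda_{min}\|w_k\|^2 \le \sigma_k^2 \le \lambda_{max}\|w_k\|^2$, and, writing $g = Hu$ and $\|u\|_H=\sqrt{u^THu}$, the Cauchy--Schwarz estimate $|w_k^Tg| = |w_k^THu| \le \sigma_k\|u\|_H$. Moreover, setting $d_k := g - (w_k^Tg/\sigma_k^2)Hw_k$ (the direction in Eq.~\eqref{eq:BNGD_w}), one checks $w_k^Td_k = 0$, so that $\|w_{k+1}\|^2 = \|w_k\|^2 + \varepsilon^2 a_k^2\|d_k\|^2/\sigma_k^2$; in particular $\|w_k\|$ is non-decreasing, exactly as used in the proof of Theorem~\ref{th:convergence_BNGD}. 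Combining the first two facts already yields the clean upper bound $\hat\varepsilon_k \le \varepsilon\,\|u\|_H\max(|a_0|,\|u\|_H)/(\lambda_{min}\|w_k\|^2)$, once we know $a_k$ is bounded.

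I would first record that the amplitude stays bounded and sign-definite. Since $\varepsilon_a\in(0,1]$, the update $a_{k+1} = (1-\varepsilon_a)a_k + \varepsilon_a (w_k^Tg/\sigma_k)$ is a convex combination of $a_k$ and $w_k^Tg/\sigma_k \in [-\|u\|_H,\|u\|_H]$, so $|a_k|\le\max(|a_0|,\|u\|_H)$ for all $k$. The hypotheses then keep the numerator $a_k(w_k^Tg)$ positive and bounded below: the assumption $\|g\|^2\ge (w_0^Tg/\sigma_0^2)\,g^THw_0$ is exactly $g^Td_0\ge0$, which together with $a_0w_0^Tg>0$ seeds an induction giving $a_k>0$ and $w_k^Tg\ge w_0^Tg>0$ along the whole orbit (heuristically, since $d_k/\sigma_k$ equals $\nabla_w(w^Tg/\sigma)$ evaluated at $w_k$, a positive $a_k$ makes the step ascend $w^Tg/\sigma$, i.e.\ improve the alignment $w_k^Tg/\sigma_k$, which feeds back to keep $a_{k+1}>0$).

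With these bounds both regimes follow by tracking how $\|w_k\|$ scales with $\varepsilon$. For regime (1), $\varepsilon\ll1$, one argues (using Theorem~\ref{th:convergence_BNGD} together with the scaling property) that the accumulated growth $\sum_k \varepsilon^2 a_k^2\|d_k\|^2/\sigma_k^2$ stays small, so $\sigma_k^2$ remains comparable to $\sigma_0^2$ uniformly in $k$; the upper bound above is then of order $\varepsilon$, and the matching lower bound $\hat\varepsilon_k \ge c\varepsilon$ follows since $a_k,w_k^Tg$ are bounded below while $\sigma_k^3$ is bounded above, giving $\hat\varepsilon_k = \Theta(\varepsilon)$. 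For regime (2), $\varepsilon\gg1$, the very first step inflates the norm: provided the initial data is non-degenerate ($a_0\ne0$, $d_0\ne0$), $\|w_1\|^2 = \|w_0\|^2 + \varepsilon^2 a_0^2\|d_0\|^2/\sigma_0^2 = \Theta(\varepsilon^2)$, and monotonicity gives $\sigma_k^2\ge\lambda_{min}\|w_1\|^2 = \Theta(\varepsilon^2)$ for all $k\ge1$; substituting into the upper bound yields $\hat\varepsilon_k = O(\varepsilon^{-1})$ for $k\ge1$. The single iterate $k=0$ is exceptional, with $\hat\varepsilon_0=\Theta(\varepsilon)$, which is precisely the self-correcting mechanism whereby a huge nominal step is immediately damped.

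The hard part is the uniform-in-$k$ lower bound on the shape factor $a_k(w_k^Tg)/\sigma_k^3$, i.e.\ ruling out that either the amplitude $a_k$ or the alignment $w_k^Tg/\sigma_k$ collapses along the trajectory: this is exactly what the two hypotheses are engineered to prevent, and promoting the seed inequality $g^Td_0\ge0$ into a trajectory-wide monotonicity of $w_k^Tg$ is the delicate point. In the large-$\varepsilon$ regime one must additionally accept that the argument only delivers the upper estimate $O(\varepsilon^{-1})$ for $k\ge1$; here the scaling property (Proposition~\ref{prop:scaling}(2)), under which $\hat\varepsilon_k$ is invariant, is the natural device for transferring the required norm-growth control between the large-$\varepsilon$ case and a bounded, small-$\varepsilon$ one.
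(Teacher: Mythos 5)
Your proposal is correct and follows essentially the same route as the paper's proof in Appendix~\ref{sec:estimate_hat_eps}: reduce $\hat\varepsilon_k$ to $\varepsilon/\|w_k\|^2$ up to constants (via boundedness and sign-definiteness of $a_k$ and $w_k^Tg/\sigma_k$, which the paper supplies through the induction in Lemma~\ref{lemma:convergence_BNGD_small}), show $\|w_k\|^2 = \|w_0\|^2 + O(\varepsilon)$ for small $\varepsilon$, and for large $\varepsilon$ use the first-step inflation $\|w_1\|^2 = \|w_0\|^2 + \Theta(\varepsilon^2)$ together with the hypothesis $\|g\|^2 \ge \tfrac{w_0^Tg}{\sigma_0^2}g^THw_0$ (equivalently $g^Td_0\ge 0$, ensuring $a_1w_1^Tg>0$) to restart the small-step argument at $(a_1,w_1)$. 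The ``delicate point'' you flag is exactly where the paper invokes Lemma~\ref{lemma:convergence_BNGD_small}, whose induction yields a uniform lower bound $w_k^Tg \ge \min\{w_0^Tg,\delta\}$ and the monotone alignment $w_k^Tg/\sigma_k$ in the small-effective-step regime, which is all that is needed.
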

    Observe that for finite $k$, $\hat \varepsilon_k $ is a differentiable function of $\varepsilon$. Therefore, the above result implies, via the mean value theorem, the existence of some $\varepsilon_0 > 0$ such that $d\hat \varepsilon_k / d\varepsilon |_{\varepsilon = \varepsilon_0}= 0$. Consequently, there is at least some small interval of the choice of learning rates $\varepsilon$ where the performance of BNGD is insensitive to this choice.

    In fact, empirically this is one commonly observed advantage of BNGD over GD, where the former typically allows for a variety of (large) learning rates to be used without adversely affecting performance. The same is not true for GD, where the convergence rate depends sensitively on the choice of learning rate. We will see later in Section~\ref{sec:experiments} that although we only have a local insensitivity result above, the interval of this insensitivity is actually quite large in practice.

    Furthermore, with some additional assumptions and approximations, the explicit dependence of $\hat\varepsilon_k$ on $\epsilon$ can be characterized in a quantitative manner.
    Concretely, we quantify the insensitivity of step size characterized by the interval in which the $\hat\varepsilon$ is close to the optimal step size $\varepsilon_{opt}$ (or the maximal allowed step size $\varepsilon_{max}$ in GD, since $\varepsilon_{opt}$ is very close to $\varepsilon_{max}$ when $\kappa$ is large). Proposition~\ref{prop:estimate_stepsize} indicates that this interval is approximately $[C_1 \varepsilon_{max}, \tfrac{C_2}{\varepsilon_{max}}]$, which crosses a magnitude of $\tfrac{C_2}{C_1 \varepsilon_{max}^2}$, where $C_1, C_2$ are positive constants.

    We set $\varepsilon_a=1, a_0={w_0^Tg}/{\sigma_0}$ (which is the value in the second step if we set $a_0=0$), $\|w_0\|=\|u\|=1$, where Theorem~\ref{th:convergence_BNGD} gives the linear converge result for almost all initial values and the convergence rate can be quantified by the limiting effective learning rate $\hat\varepsilon:=\lim\limits_{k\to\infty}\hat\varepsilon_k = \tfrac{\varepsilon}{\|w_\infty\|^2}$. Consequently, we need to estimate the magnitude $\|w_\infty\|^2$.
    The BNGD iteration implies the following equality,
    % \begin{align}\label{eq:iter_normw}
    %     \|w_{k+1}\|^2 &=\|w_k\|^2  +
    % \tfrac{\varepsilon^2}{\|w_k\|^2} %\big(\tfrac{w_{k-1}^T g}{\sigma_k}\big)^2
    % \tfrac{a_k^2\|w_k\|^2}{\sigma_k^2}
    % \big\| e_k\big\|_{H^2}^2 \nonumber\\
    % &=:
    % \|w_k\|^2  +
    % \tfrac{\varepsilon^2}{\|w_k\|^2} \beta_k ,
    % \end{align}
    \begin{align}\label{eq:iter_normw}
        \|w_{k+1}\|^2 &=
    \|w_k\|^2  +
    \tfrac{\varepsilon^2}{\|w_k\|^2} \beta_k ,
    \end{align}
    where $\beta_k$ is defined as
    $\beta_k:=\tfrac{a_k^2\|w_k\|^2}{\sigma_k^2}
    \big\| e_k\big\|_{H^2}^2$.
    The earlier convergence results motivate the following plausible approximation:
    we assume $\beta_k$ linearly converges to zero and the iteration of $\|w_k\|^2$ can be approximated by $\xi(k+1)$ which obeys the following ODE (whose discretization formally matches Eq.~\eqref{eq:iter_normw}, assuming the aforementioned convergence rate $\rho$):
    \begin{align}
        \xi(0)=\|w_1\|^2, \qquad \dot \xi(t)  = \tfrac{\varepsilon^2\beta_0 \rho^{2t}}{\xi(t)}.
    \end{align}
    Its solution is $\xi^2(t) = \xi^2(0) + \tfrac{\varepsilon^2 \beta_0}{|\ln \rho|}(1-\rho^{2t})$,
    where $\rho\in(0,1)$ depends on $\varepsilon$ and is self-consistently determined by the limiting effective step size, ~i.e. $\rho$ is the spectral radius of $I-\tfrac{\varepsilon}{\xi(\infty)}H$ and $\xi(\infty)$ in turn depends on $\rho$.
    Analyzing the dependence of $\xi(\infty)$ on $\varepsilon$ can give an estimate of the insensitivity interval, which is now $[\varepsilon_{max}, \tfrac1{\beta_0 \varepsilon_{max}}]$, since $\hat\varepsilon \approx \varepsilon$ when $\varepsilon\ll 1$, and $\hat\varepsilon \approx \tfrac{1}{\beta_0\varepsilon}$ when $\varepsilon\gg 1$. (see appendix \ref{sec:estimate_hat_eps}.) Therefore, the magnitude of the interval of insensitivity varies inversely with $\beta_0$. Below, we quantify this magnitude in an average sense.
    \begin{definition}\label{def:Omega}
        The average magnitude of the insensitivity interval of BNGD with $\varepsilon_a=1,a_0=\tfrac{w_0^Tg}{\sigma_0}$ (or $a_0=0$) is defined as $\Omega_H = {1}/{(\bar\beta_H\varepsilon_{max}^2)}$, where $\bar\beta_H$ is the geometric average of $\beta_0$ over $w_0$ and $u$, which we take to be independent and uniformly on the unit sphere $\mathbb{S}^{d-1}$,
        \begin{align}\label{eq:definition_bar_beta}
            \bar\beta_H
            :=
            \exp\big(
                \mathbb{E}_{w_0,u}
                \ln\big[ \big(\tfrac{w_0^T Hu}{w_0^THw_0}\big)^2
                \big\| e_0\big\|_{H^2}^2 \big]
                \big).
            %\le
            %\mathbb{E}_{w_0,u} \big[ \big(\tfrac{w_0^T Hu}{w_0^THw_0}\big)^2 \big\| e_0\big\|_{H^2}^2 \big].
        \end{align}
    \end{definition}
    Note that we use the geometric average rather than the arithmetic average because we are measuring a ratio. Although we can not calculate the value of $\Omega_H$ analytically, we have the following lower bound (see appendix \ref{sec:omega}):
    \begin{proposition}
        \label{prop:estimate_omega}
        For positive definite matrix $H$ with minimal and maximal eigenvalues $\lambda_{min}$ and $\lambda_{max}$ respectively, the $\Omega_H$ defined in Definition \ref{def:Omega} satisfies $\Omega_H \ge \tfrac{d}{C}$, where
        \begin{align}\label{eq:const_C_over}
            C:= 4 \tfrac{Tr[H^2]}{d\lambda^2_{min} } \tfrac{Tr[H]}{d\lambda_{max}} \exp\big(\tfrac{2\ln \kappa}{\kappa-1} (1-\tfrac{Tr[H]}{d\lambda_{min}})\big),% = O(1/d),\\
            %\Omega_H &= \tfrac{\lambda^2_{max}(H)}{4 \bar\beta_H} \ge O(d),
        \end{align}
        $\kappa=\tfrac{\lambda_{max}}{\lambda_{min}}$ is the condition number of $H$.
    \end{proposition}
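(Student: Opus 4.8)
The plan is to convert the claimed bound on $\Omega_H$ into an explicit upper bound on the geometric average $\bar\beta_H$, and then estimate $\bar\beta_H$ by splitting $\ln\beta_0$ into pieces, each of which reduces to the expectation of a quadratic form over the sphere. Since $\varepsilon_{max}=2/\lambda_{max}$, the target $\Omega_H\ge d/C$ is equivalent to $\bar\beta_H\le C\lambda_{max}^2/(4d)$, and substituting the definition of $C$ this is exactly $\bar\beta_H\le \tfrac{\lambda_{max}\,Tr[H]\,Tr[H^2]}{d^3\lambda_{min}^2}\exp\!\big(\tfrac{2\ln\kappa}{\kappa-1}(1-\tfrac{Tr[H]}{d\lambda_{min}})\big)$. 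Using $g=Hu$ and $\|u\|=\|w_0\|=1$, I would first record $\beta_0=t^2\|e_0\|_{H^2}^2$ with $t:=\tfrac{w_0^THu}{w_0^THw_0}$ and $e_0=u-tw_0=(I-P)u$, where $P$ is the $H$-orthogonal projection onto $\mathrm{span}(w_0)$.

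Because the logarithm turns the product into a sum, the geometric average factorizes as $\bar\beta_H=\exp(\mathbb{E}\ln t^2)\cdot\exp(\mathbb{E}\ln\|e_0\|_{H^2}^2)$, and I would bound the two factors separately. For the second factor I would use that the geometric mean is dominated by the arithmetic mean (Jensen), together with $H^2\preceq\lambda_{max}H$ and the contraction $\|e_0\|_H\le\|u\|_H$ of the $H$-orthogonal projection, to obtain $\exp(\mathbb{E}\ln\|e_0\|_{H^2}^2)\le\mathbb{E}[\|e_0\|_{H^2}^2]\le\lambda_{max}\,\mathbb{E}[u^THu]=\lambda_{max}Tr[H]/d$, the last identity using $\mathbb{E}[uu^T]=I/d$ for $u$ uniform on $\mathbb{S}^{d-1}$. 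For the first factor I would write $\mathbb{E}\ln t^2=2\mathbb{E}\ln|w_0^THu|-2\mathbb{E}\ln(w_0^THw_0)$ and treat numerator and denominator differently: Jensen applied to the numerator gives $\exp(2\mathbb{E}\ln|w_0^THu|)\le\mathbb{E}[(w_0^THu)^2]=Tr[H^2]/d^2$, again via $\mathbb{E}[uu^T]=I/d$.

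The denominator is where the exponential factor in $C$ is produced, and this is the main obstacle, since it requires a lower bound on the geometric mean of the quadratic form, $\exp(\mathbb{E}\ln(w_0^THw_0))$, rather than the familiar upper bound. The tool is concavity of the logarithm: on $[\lambda_{min},\lambda_{max}]$ the graph of $\ln$ lies above its chord, so for $X=w_0^THw_0\in[\lambda_{min},\lambda_{max}]$ one has $\ln X\ge \ln\lambda_{min}+\tfrac{\ln\kappa}{\lambda_{max}-\lambda_{min}}(X-\lambda_{min})$. Taking expectations and using $\mathbb{E}X=Tr[H]/d$ and $\lambda_{max}-\lambda_{min}=\lambda_{min}(\kappa-1)$ yields $\exp(\mathbb{E}\ln(w_0^THw_0))\ge\lambda_{min}\exp\!\big(\tfrac{\ln\kappa}{\kappa-1}(\tfrac{Tr[H]}{d\lambda_{min}}-1)\big)$; squaring and inverting gives $\exp(-2\mathbb{E}\ln(w_0^THw_0))\le\lambda_{min}^{-2}\exp\!\big(\tfrac{2\ln\kappa}{\kappa-1}(1-\tfrac{Tr[H]}{d\lambda_{min}})\big)$, which is precisely the exponential factor divided by $\lambda_{min}^2$.

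Multiplying the three estimates gives $\bar\beta_H\le(\lambda_{max}Tr[H]/d)\cdot(Tr[H^2]/d^2)\cdot\lambda_{min}^{-2}\exp(\cdots)$, which is exactly the required upper bound, whence $\Omega_H\ge d/C$. Besides the concavity/chord estimate, the only points needing care are the integrability of $\mathbb{E}\ln|w_0^THu|$ near the measure-zero set $\{w_0^THu=0\}$ (a logarithmic singularity against a smooth density, hence integrable; and in any case the Jensen step only uses the one-sided bound), and verifying that the one-dimensional sphere expectations of the relevant quadratic forms are finite so that the factorization of $\mathbb{E}\ln\beta_0$ into a sum is legitimate.
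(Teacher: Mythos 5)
Your proposal is correct and follows essentially the same route as the paper's proof of Lemma~\ref{lemma:estimate_bar_beta_H}: the geometric mean of $\beta_0$ is split additively in the log, the factors $(w_0^THu)^2$ and $\|e_0\|_{H^2}^2$ are bounded above by their arithmetic means ($Tr[H^2]/d^2$ and $\lambda_{max}Tr[H]/d$ via $\mathbb{E}[uu^T]=I/d$ and the $H$-orthogonal projection contraction), and the denominator $w_0^THw_0$ is bounded below in geometric mean by the chord estimate for $\ln$ on $[\lambda_{min},\lambda_{max}]$, yielding exactly the exponential factor in $C$. Your added remark on the integrability of $\ln|w_0^THu|$ is a small point of extra care that the paper leaves implicit, but it does not change the argument.
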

    As a consequence of the above, if the eigenvalues of $H$ are sampled from a given continuous distribution on $[\lambda_{min},\lambda_{max}]$, such as the uniform distribution, then by law of large numbers, $\Omega_H = O(d)$ for large $d$. This result suggests that the magnitude of the interval on which the performance of BNGD is insensitive to the choice of the learning rate increases linearly in dimension, implying that this robustness effect of BNGD is especially useful for high dimensional problems. Interestingly, although we only derived this result for the OLS problem, this linear scaling of insensitivity interval is also observed in neural networks experiments, where we varied the dimension by adjusting the width of the hidden layers. See Section \ref{sec:experiments_d}.

    The insensitivity to learning rate choices can also lead to acceleration effects if one have to use the same learning rate for training weights with different effective conditioning. This may arise in deep learning applications where each layer's gradient magnitude varies widely, thus requiring very different learning rates to achieve good performance. In this case, BNGD's large range of learning rate insensitivity allows one to use common values across all layers without adversely affecting the performance. This is again in contrast to GD, where such insensitivity is not present. See Section \ref{sec:experiments_nn} for some experimental validation of this claim.

    \section{Experiments}
    \label{sec:experiments}

    Let us first summarize our key findings and insights from the analysis of BNGD on the OLS problem.
    \begin{enumerate}
        \item A scaling law governs BNGD, where certain configurations can be deemed equivalent.
        \item BNGD converges for any learning rate $\varepsilon>0$, provided that $\varepsilon_a \in (0,1]$. In particular, different learning rates can be used for the BN variables $(a)$ compared with the remaining trainable variables $(w)$.
        \item There exists intervals of $\varepsilon$ for which the performance of BNGD is not sensitive to the choice of $\varepsilon$, and the magnitude of this interval grows with dimension.
    \end{enumerate}
    In the subsequent sections, we first validate numerically these claims on the OLS model, and then show that these insights go beyond the simple OLS model we considered in the theoretical framework. In fact, much of the uncovered properties are observed in general applications of BNGD in deep learning.

\subsection{Experiments on OLS}
\label{sec:experiment_OLS}

    Here we test the convergence and stability of BNGD for the OLS model.
    Consider a diagonal matrix $H = \text{diag}(h)$ where $h = (1,...,\kappa)$ is a increasing sequence. The scaling property (Proposition~\ref{prop:scaling}) allows us to set the initial value $w_0$ having same norm with $u$, $\|w_0\| = \|u\| = 1$. Of course, one can verify that the scaling property holds strictly in this case.
    % Note that varying the norm of $w_0$ is same as decoupling step size for $w$ and $a$.
    % \recheck{Is this last sentence useful elsewhere?}
    % (More examples are given in appendix)

    Figure~\ref{fig:100d_logvarcond} gives examples of $H$ with different condition numbers $\kappa$. We tested the loss function of BNGD, compared with the optimal GD (i.e.\,GD with the optimal step size $\varepsilon_{opt}$), in a large range of step sizes $\varepsilon_a$ and $\varepsilon$, and with different initial values of $a_0$.  Another quantity we observe is the effective step size $\hat\varepsilon_k$ of BN. The results are encoded by four different colors: whether $\hat\varepsilon_k$ is close to the optimal step size $\varepsilon_{opt}$, and whether loss of BNGD is less than the optimal GD.
    The results indicate that the optimal convergence rate of BNGD can be better than GD in some configurations, consistent with the statement of Theorem \ref{th:convergence_rate_BNGD}. Recall that this acceleration phenomenon is ascribed to the conditioning of $H^*$ which is better than $H$.
    %This advantage of BNGD is significant when the effective condition number discrepancy between $H$ and $H^*$ is large. However, if this difference is small, the acceleration is imperceptible. This is consistent with our analysis in section~\ref{subsec:convergence_rate}.

    \begin{figure}[thb!]
        % Requires \usepackage{graphicx}
        \center
        \includegraphics[width=8.cm]{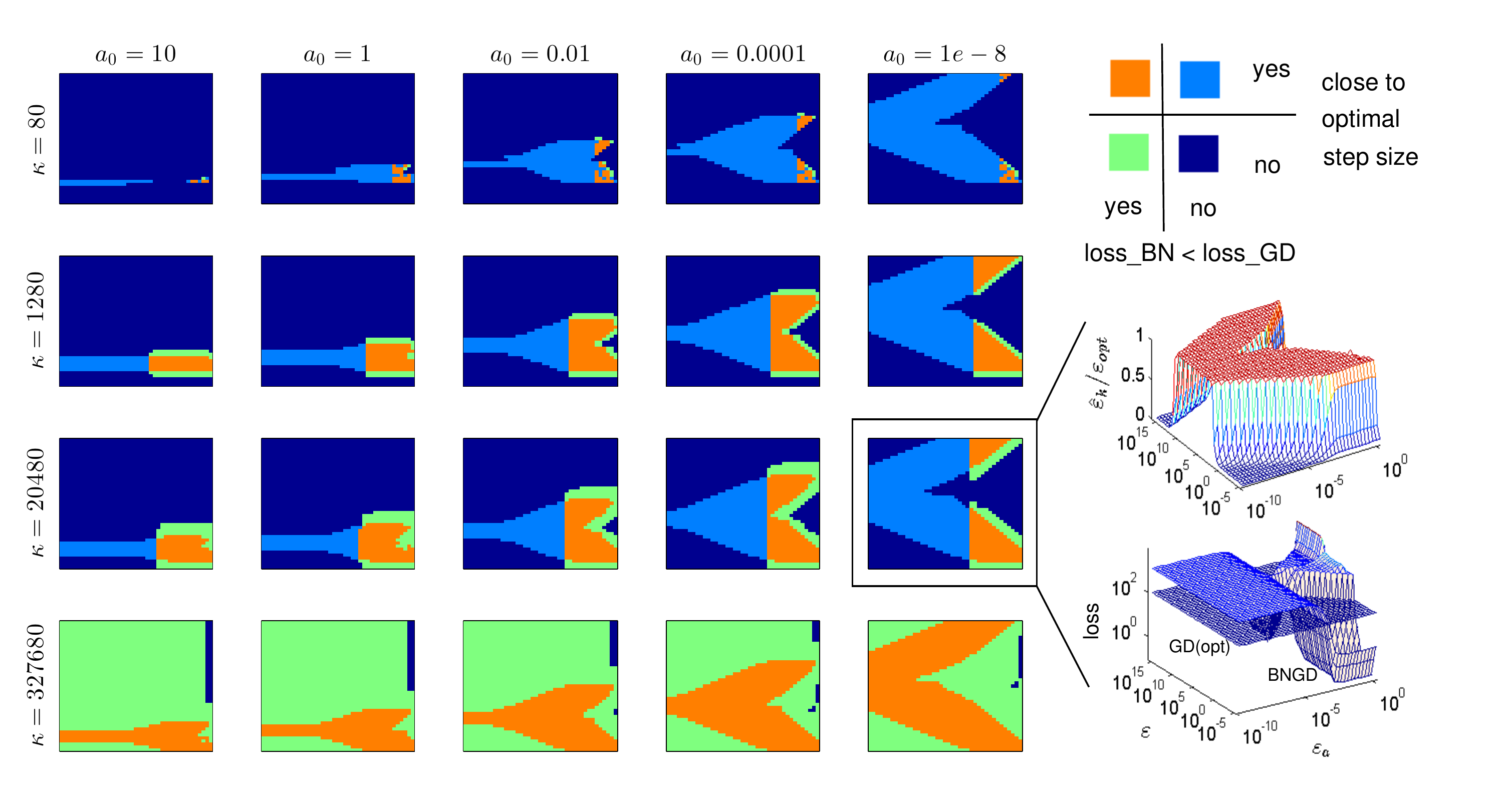}\\
        \caption{Comparison of BNGD and GD on OLS model. The results are encoded by four different colors: whether $\hat\varepsilon_k$ is close to the optimal step size $\varepsilon_{opt}$ of GD, characterized by the inequality $0.8 \varepsilon_{opt} <\hat\varepsilon_k<\varepsilon_{opt}/0.8$, and whether loss of BNGD is less than the optimal GD. Parameters: $H=$ diag(logspace(0,log10($\kappa$),100)), $u$ is randomly chosen uniformly from the unit sphere in $\mathbb{R}^{100}$, $w_0$ is set to $Hu/\|Hu\|$. The GD and BNGD iterations are executed for $k=2000$ steps with the same $w_0$.
        In each image, the range of $\varepsilon_a$ (x-axis) is 1.99 * logspace(-10,0,41), and the range of $\varepsilon$ (y-axis) is logspace(-5,16,43).
        Observe that the performance of BNGD is less sensitive to the condition number, and its advantage is more pronounced when the latter is big.
        }
        \label{fig:100d_logvarcond}
      \end{figure}

    Another important observation is a region such that $\hat\varepsilon$ is close to $\varepsilon_{opt}$, in other words, BNGD significantly extends the range of ``optimal'' step sizes. Consequently, we can choose step sizes in BNGD at greater liberty to obtain almost the same or better convergence rate than the optimal GD.
    However, the size of this region is inversely dependent on the initial condition $a_0$. Hence, this suggests that small $a_0$ at first steps may improve robustness. On the other hand, small $\varepsilon_a$ will weaken the performance of BN. The phenomenon suggests that improper initialization of the BN parameters weakens the power of BN. This experience is encountered in practice, such as~\citep{Cooijmans2016Recurrent}, where higher initial values of BN parameter are detrimental to the optimization of RNN models.

\subsection{Experiments on the Effect of Dimension}
\label{sec:experiments_d}

    In order to validate the approximate results in Section \ref{sec:robustness}, we compute numerically the dependence of the performance of BNGD on the choice of the learning rate $\varepsilon$. Observe from Figure \ref{fig:dvar_Omega} that the quantitative predictions of $\Omega$ in Definition \ref{def:Omega} is consistent with numerical experiments, and the linear-in-dimension scaling of the magnitude of the insensitivity interval is observed. Perhaps more interestingly, the same scaling is also observed in (stochastic) BNGD on fully connected neural networks trained on the MNIST dataset. This suggests that this scaling is relevant, at least qualitatively, beyond the regimes considered in the theoretical parts of this paper.

    % New tests of BNGD on OLS model with step size $\varepsilon_a=1, a_0=0$. Note that the geometric average is used rather than the arithmetic mean. The results confirm that the width of optimal step size mainly depends on the dimension. However, the performance is also effected by the spectrum of $H$. Although small enough step size and large enough step size could have the same effective step size, but in this case the large step size is better because it's optimization course ...

    % The dependence of width $\Omega$ on the dimension $d$....

    \begin{figure}[thb!]
        % Requires \usepackage{graphicx}
        \center
        \includegraphics[width=3.6cm]{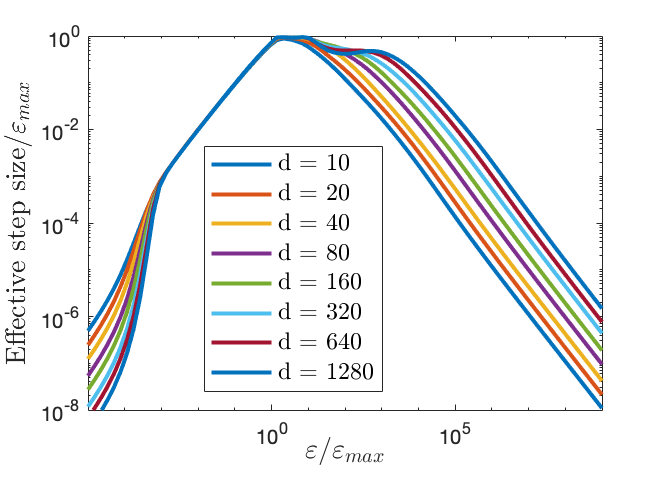}
        \includegraphics[width=3.6cm]{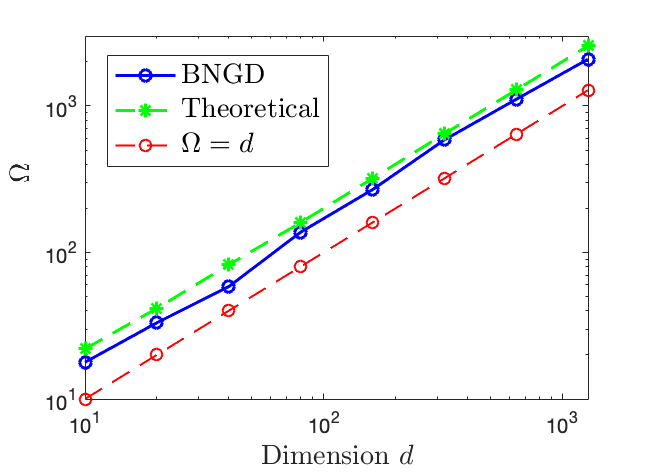}\\
        \includegraphics[width=3.6cm]{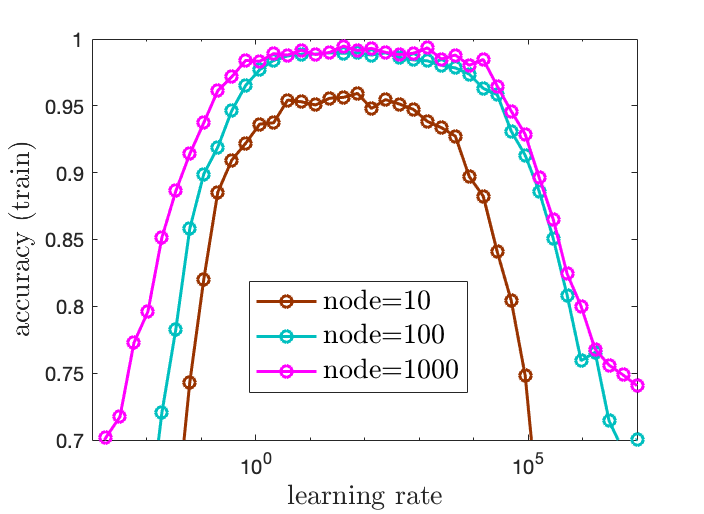}
        \includegraphics[width=3.6cm]{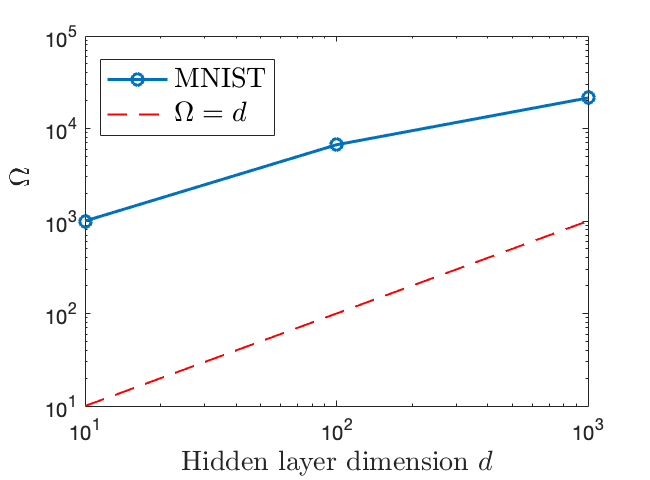}\\
        \caption{Effect of dimension. (Top line) Tests of BNGD on OLS model with step size $\varepsilon_a=1, a_0=0$. Parameters: $H$=diag(linspace(1,10000,d)), $u$ and $w_0$ is randomly chosen uniformly from the unit sphere in $\mathbb{R}^{d}$. The BNGD iterations are executed for $k=5000$ steps. The values are averaged over 500 independent runs.
        (Bottom line) Tests of stochastic BNGD on MNIST dataset, fully connected neural network with one hidden layer and softmax mean-square loss. The separated learning rate for BN Parameters is lr\_a=10, The performance is characterized by the accuracy at the first epoch (averaged over 10 independent runs). The magnitude $\Omega$ is approximately measured for reference.}
        \label{fig:dvar_Omega}
    \end{figure}

% \subsection{Further Deep Learning Experiments}
\subsection{Further Neural Network Experiments}
\label{sec:experiments_nn}

    We conduct further experiments on deep learning applied to standard classification datasets: MNIST~\citep{LeCun1998Gradient}, Fashion MNIST~\citep{Xiao2017Fashion} and CIFAR-10~\citep{Krizhevsky2009Learning}. The goal is to explore if the other key findings outlined at the beginning of this section continue to hold for more general settings.
    For the MNIST and Fashion MNIST dataset, we use two different networks:
    (1) a one-layer fully connected network (784 $\times$ 10) with softmax mean-square loss;
    (2) a four-layer convolution network (Conv-MaxPool-Conv-MaxPool-FC-FC) with ReLU
    activation function and cross-entropy loss.
    For the CIFAR-10 dataset, we use a five-layer convolution network (Conv-MaxPool-Conv-MaxPool-FC-FC-FC). All the trainable parameters are randomly initialized by the Glorot scheme \citep{Glorot2010Understanding} before training. For all three datasets, we use a minibatch size of 100 for computing stochastic gradients.
    In the BNGD experiments, batch normalization is performed on all layers, the BN parameters are initialized to transform the input to zero mean/unit variance distributions, and a small regularization parameter $\epsilon=$1e-3 is added to variance $\sqrt{\sigma^2 +\epsilon}$ to avoid division by zero.

    \textbf{Scaling property}
    Theoretically, the scaling property~\ref{prop:scaling} holds for any layer using BN. However, it may be slightly biased by the regularization parameter $\epsilon$. Here, we test the scaling property in practical settings.
    Figure~\ref{fig:mnist_m1_bnsp_lra1_scaling} gives the loss and accuracy of network-(2) (2CNN+2FC) at the first epoch with different learning rate. The norm of all weights and biases are rescaled by a common factor $\eta$. We observe that the scaling property remains true for relatively large $\eta$. However, when $\eta$ is small, the norm of weights are small. Therefore, the effect of the $\epsilon$-regularization in $\sqrt{\sigma^2 +\epsilon}$ becomes significant, causing the curves to be shifted.
    \begin{figure}[thb!]
      % Requires \usepackage{graphicx}
      \center
      \includegraphics[width=8.cm]{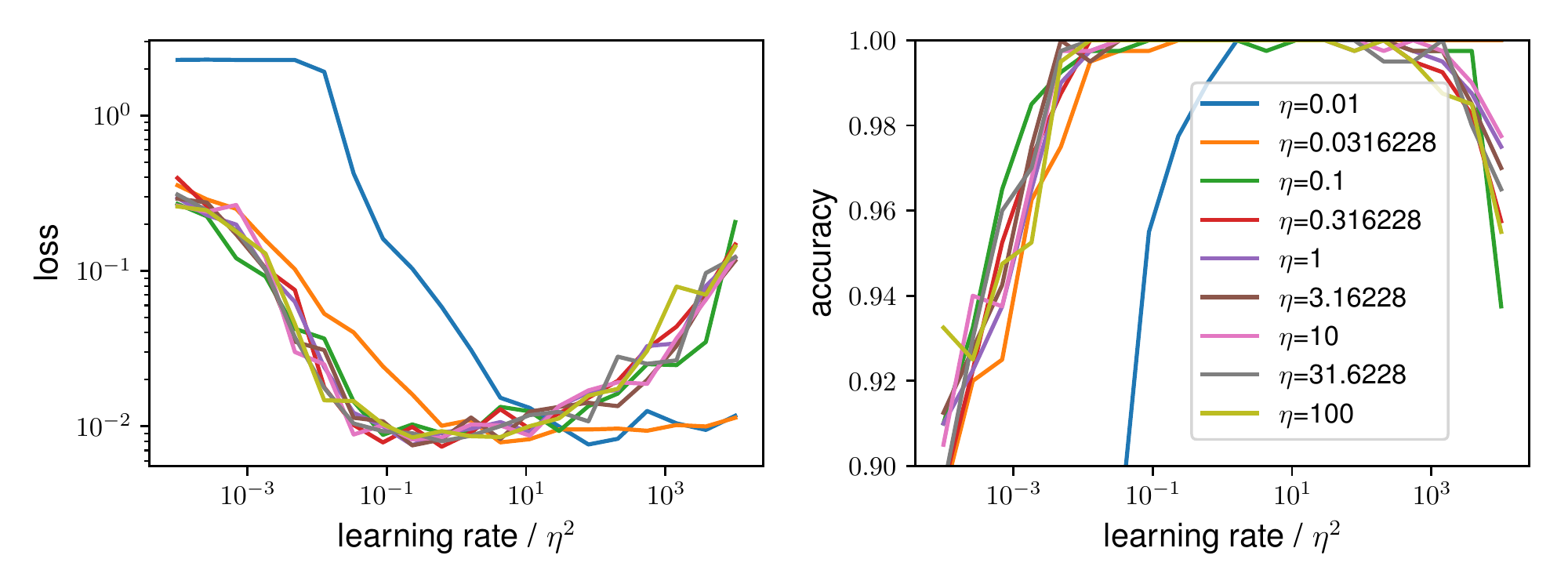}\\
      \caption{Tests of scaling property of the 2CNN+2FC network on MNIST dataset. BN is performed on all layers, and $\epsilon$=1e-3 is added to variance $\sqrt{\sigma^2 +\epsilon}$. All the trainable parameters (except the BN parameters) are randomly initialized by the Glorot scheme, and then multiplied by a same parameter $\eta$.}
      \label{fig:mnist_m1_bnsp_lra1_scaling}
    \end{figure}

    \textbf{Stability for large learning rates}
    We use the loss value at the end of the first epoch to characterize the performance of BNGD and GD methods. Although the training of models have generally not converged at this point, it is enough to extract some relative rate information.
    Figure~\ref{fig:dataset_compare_loss} shows the loss value of the networks on the three datasets. It is observed that GD and BNGD with identical learning rates for weights and BN parameters exhibit a maximum allowed learning rate, beyond which the iterations becomes unstable. On the other hand, BNGD with separate learning rates exhibits a much larger range of stability over learning rate for non-BN parameters, consistent with our theoretical results on OLS problem

    \textbf{Insensitivity of performance to learning rates}
    Observe that BN accelerates convergence more significantly for deep networks, whereas for one-layer networks, the best performance of BNGD and GD are similar. Furthermore, in most cases, the range of optimal learning rates in BNGD is quite large, which is in agreement with the OLS analysis (see Section~\ref{sec:robustness}). This phenomenon is potentially crucial for understanding the acceleration of BNGD in deep neural networks. Heuristically, the ``optimal'' learning rates of GD in distinct layers (depending on some effective notion of ``condition number'') may be vastly different. Hence, GD with a shared learning rate across all layers may not achieve the best convergence rates for all layers at the same time. In this case, it is plausible that the acceleration of BNGD is a result of the decreased sensitivity of its convergence rate on the learning rate parameter over a large range of its choice.

    \begin{figure}[thb!]
      % Requires \usepackage{graphicx}
      \center
      \includegraphics[width=4cm]{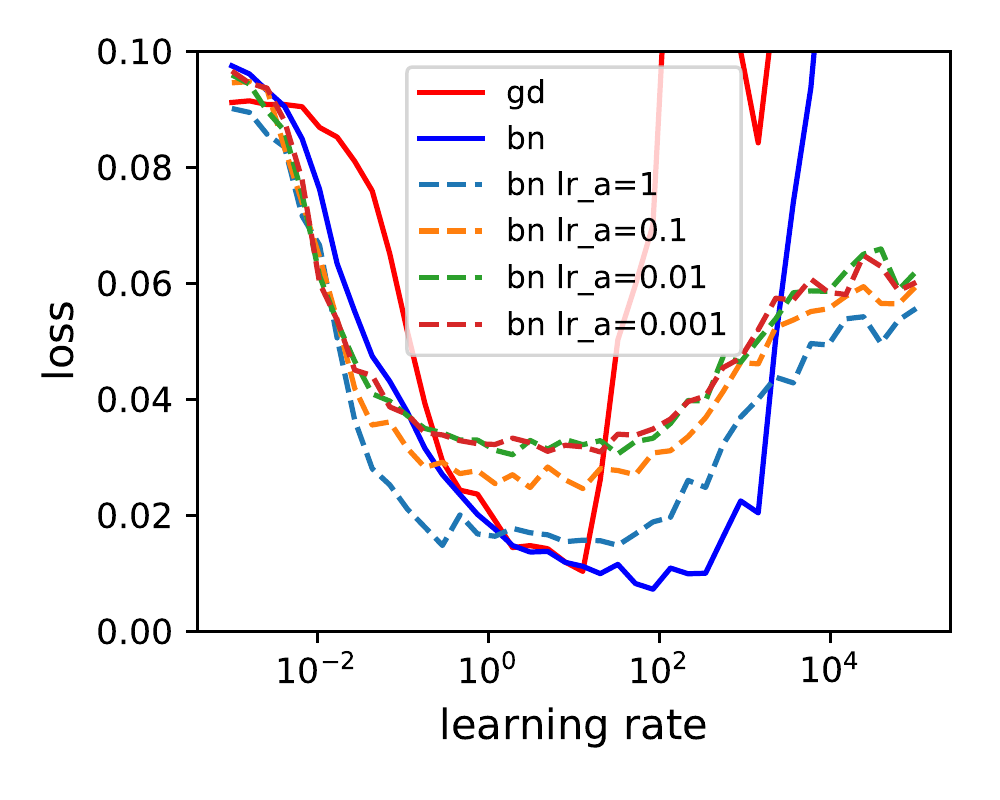}
      ~\includegraphics[width=4cm]{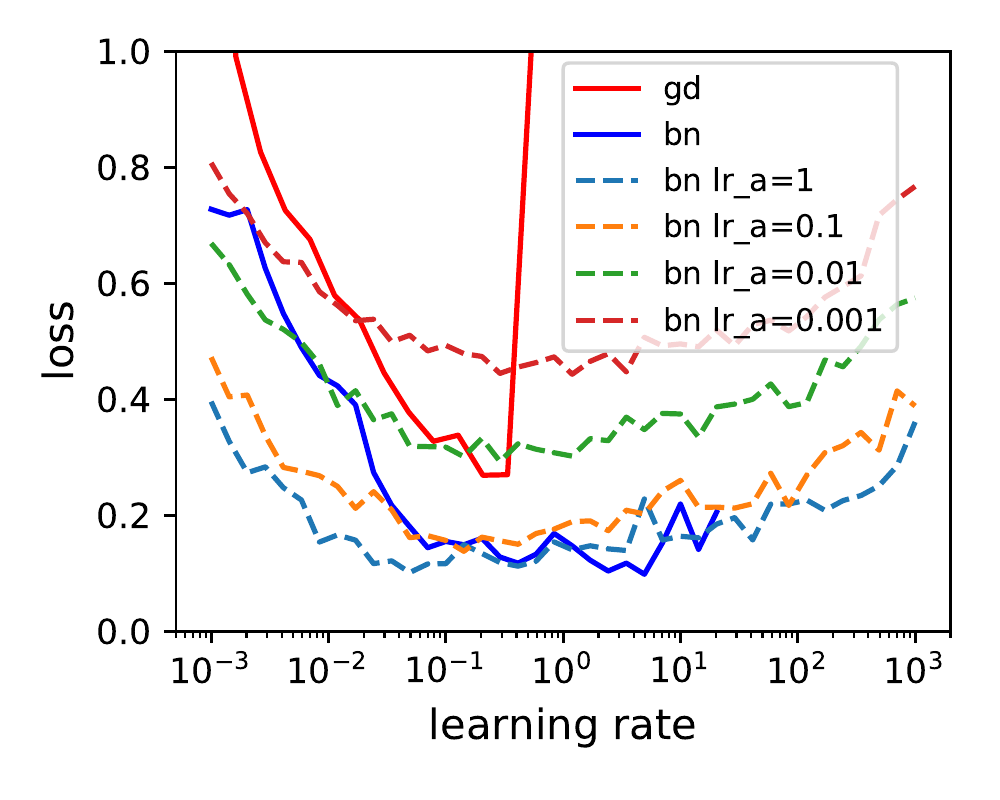}\\
      \includegraphics[width=4cm]{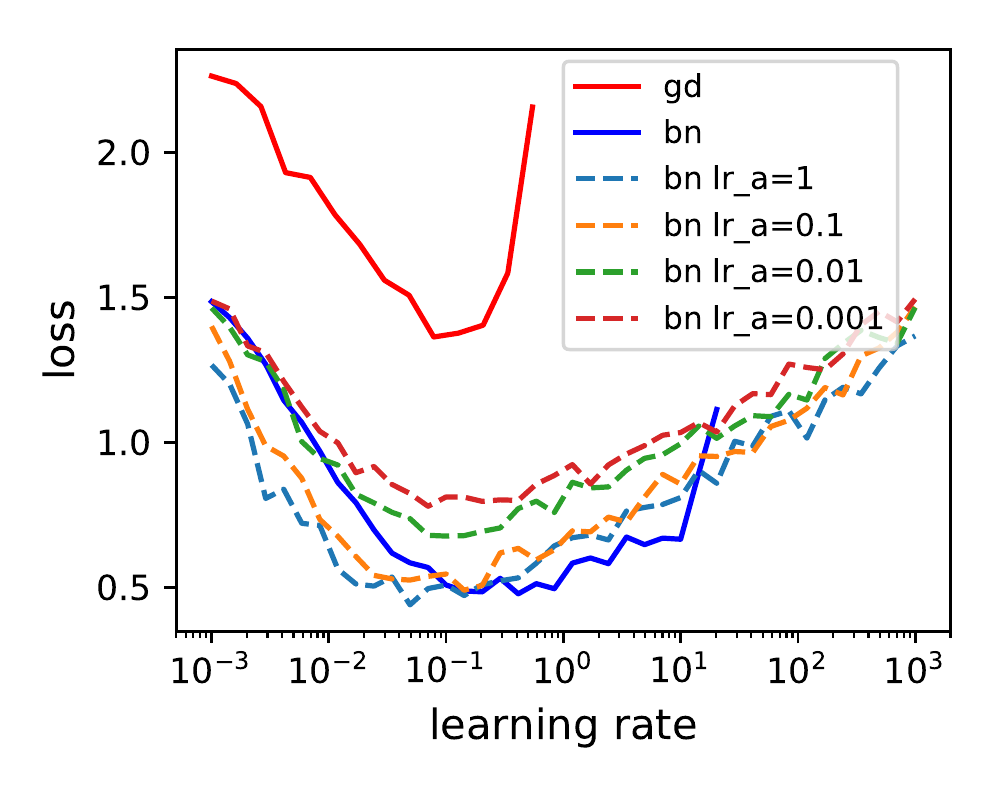}\\
    %   \includegraphics[width=3.5cm]{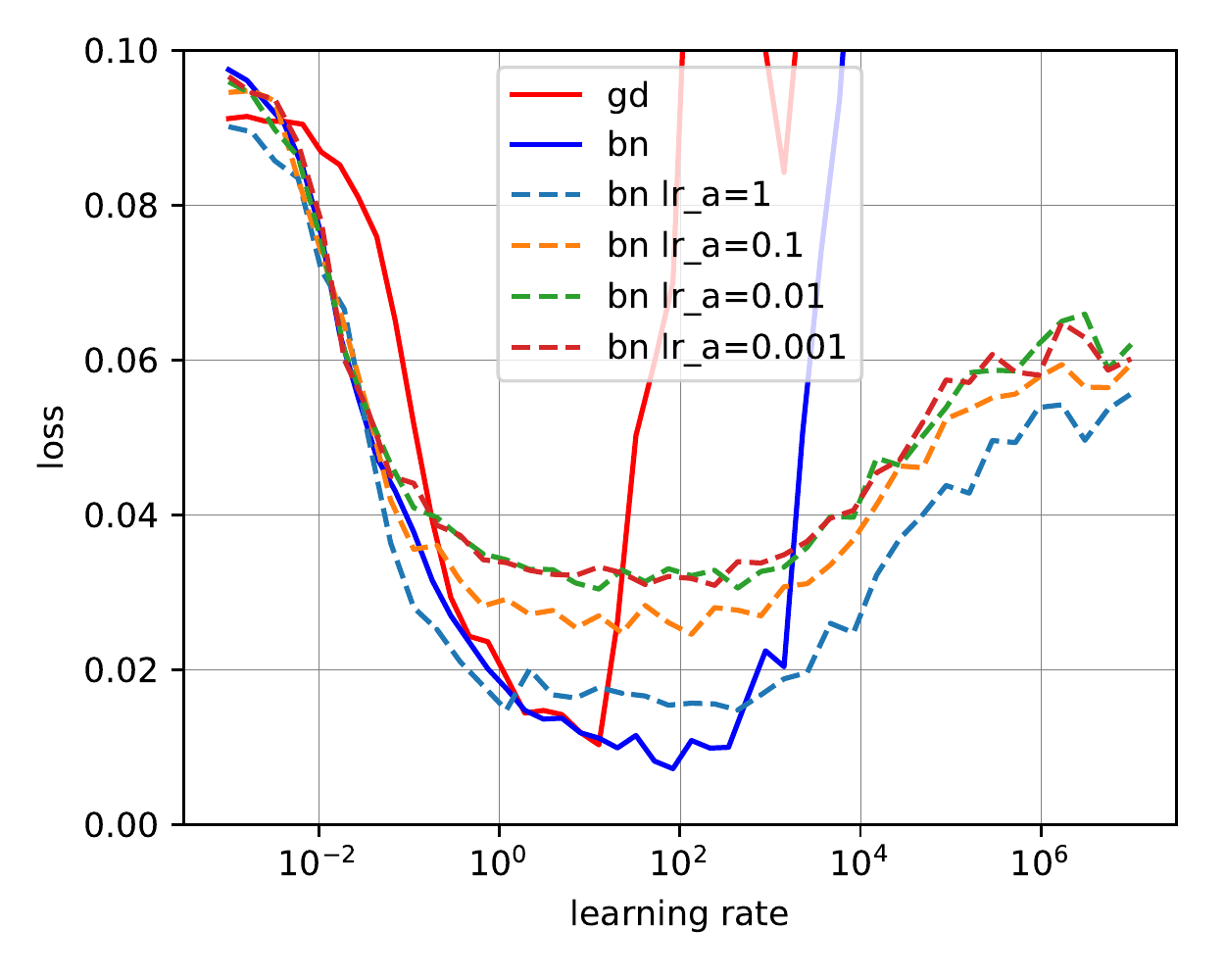}\\
    %   ~\includegraphics[width=6.4cm]{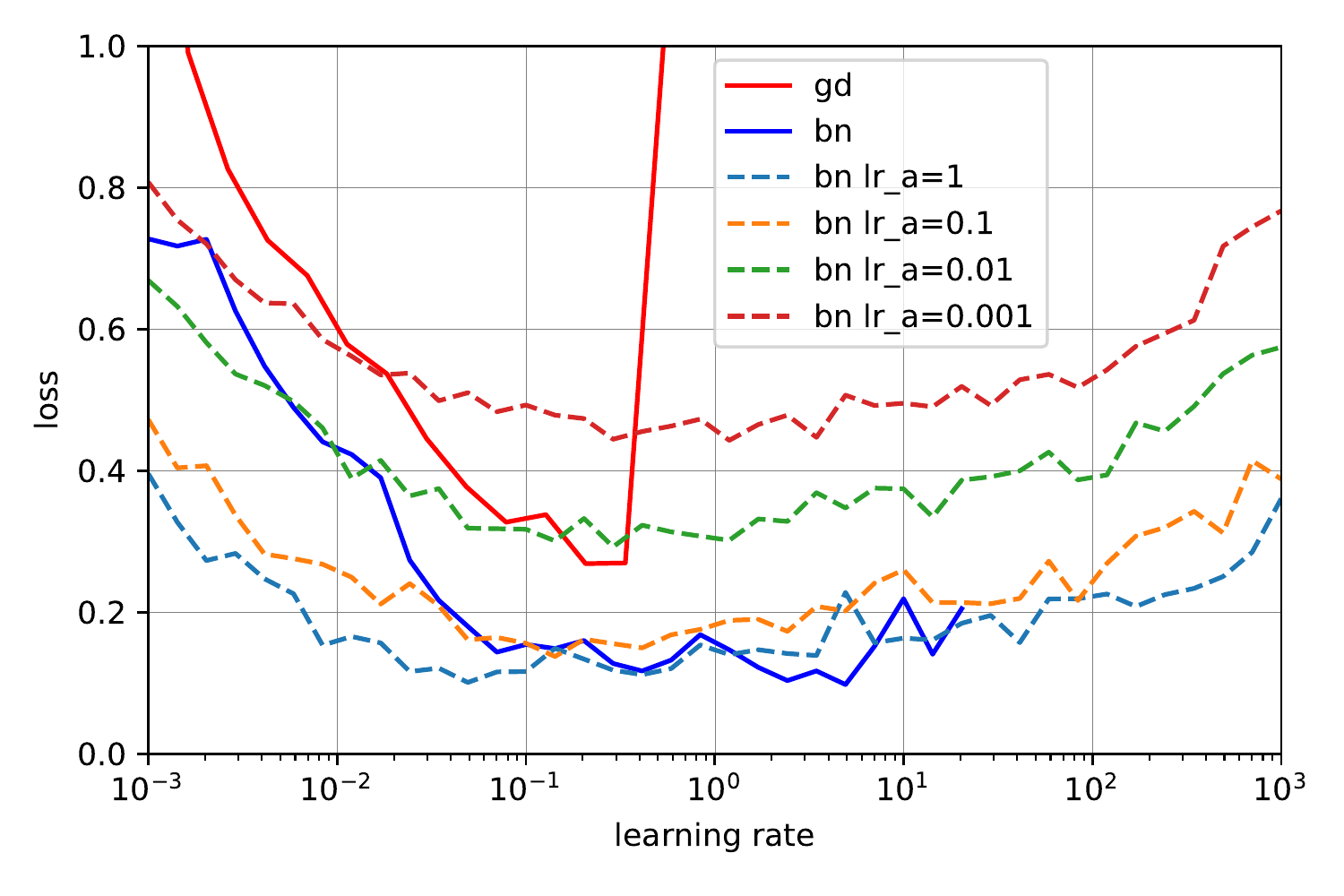}\\
    %   \includegraphics[width=6.5cm]{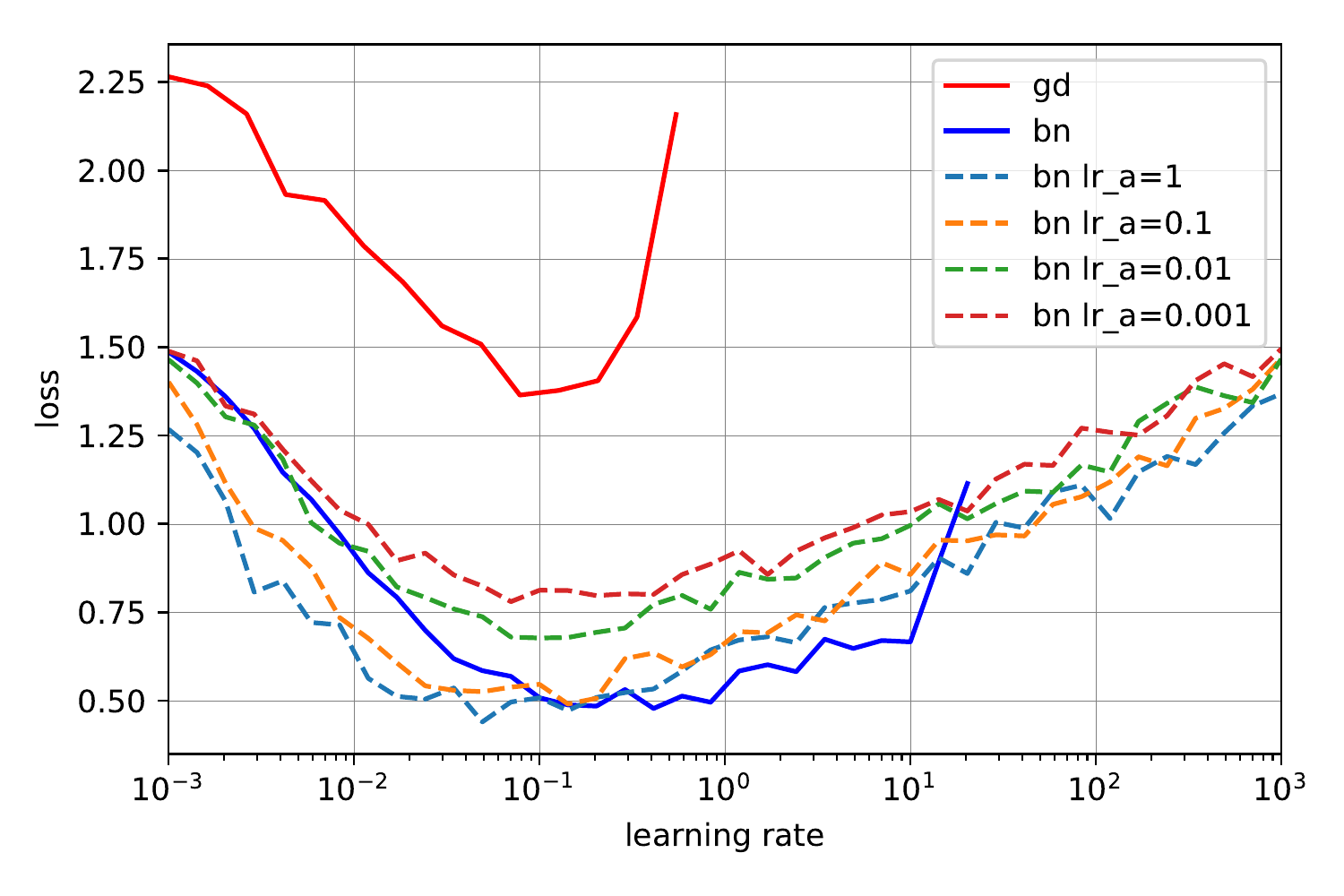}\\
      \caption{Performance of BNGD and GD method on MNIST (network-(1), 1FC), Fashion MNIST (network-(2), 2CNN+2FC) and CIFAR-10 (2CNN+3FC) datasets. The performance is characterized by the loss value at the first epoch. In the BNGD method, both the shared learning rate schemes and separated learning rate scheme (learning rate lr\_a for BN parameters) are given. The values are averaged over 5 independent runs.}
      \label{fig:dataset_compare_loss}
    \end{figure}

    % \begin{figure}[thb!]
    %     % Requires \usepackage{graphicx}
    %     \center
    %     \includegraphics[width=6.4cm]{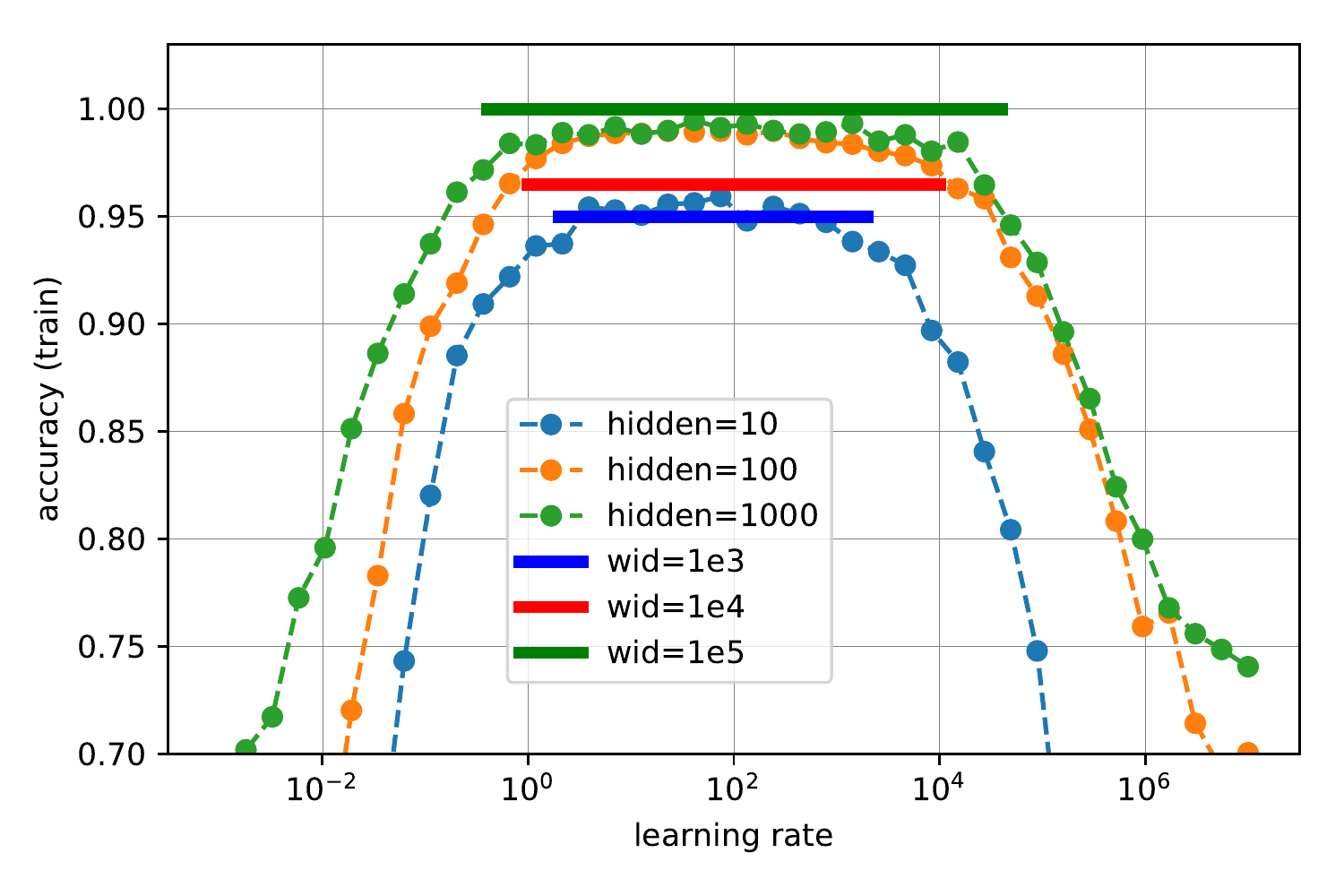}\\
    %     \caption{Effect of network's width when using BNGD method on MNIST (network-(?), 2FC) datasets. The separated learning rate for BN Parameters is lr\_a=10, The performance is characterized by the accuracy value at epoch=1. The values are averaged over 10 independent runs. The bold lines are given for reference.}
    %     \label{fig:mnist_m5_compare_acc}
    % \end{figure}

    \section{Conclusion}
    \label{sec:conclusion}

    In this paper, we analyzed the dynamical properties of batch normalization on OLS, chosen for its simplicity and the availability of precise characterizations of GD dynamics. Even in such a simple setting, we saw that BNGD exhibits interesting non-trivial behavior, including scaling laws, robust convergence properties, acceleration, as well as the insensitivity of performance to the choice of learning rates.
    At least in the setting considered here, our analysis allows one to concretely answer the question of why BNGD can achieve better performance than GD. Although these results are derived only for the OLS model, we show via experiments that these are qualitatively, and sometimes quantitatively valid for more general scenarios. These point to promising future directions towards uncovering the dynamical effect of batch normalization in deep learning and beyond.

\bibliography{icml2019_conference}
\bibliographystyle{icml2019}

\onecolumn
\newpage

% \begin{appendices}
%     \section{ctionFoo}

%     % \chapter{Foobar}

%     \end{appendices}

\appendix

% %% new page
% \mbox{}
% \newpage
% \clearpage

{\Large
\begin{center}
    A Quantitative Analysis of the Effect of Batch Normalization on Gradient Descent\\
    Appendix
\end{center}
}
{
\section{Batch and more general normalization on general objective functions}
\label{sec:general_obj}

Here we consider the generalized versions of batch normalization on general problems, including but not limited to deep neural networks. Consider a smooth loss function $J_0(w_1, ..., w_m)$ and its normalized version $ J(\gamma_1,...,\gamma_m, w_1,...,w_m)$,
\begin{align}\label{eq:loss_generality}
    J(\gamma_1,...,\gamma_m, w_1,...,w_m)
=
J_0 \big( \gamma_1\tfrac{w_1}{\|w_1\|_{S_1}},...,\gamma_m\tfrac{w_m}{\|w_m\|_{S_m}} \big), w_i \neq 0, i=1,...,m.
\end{align}
Here the normalizing matrices $S_i, i=1,...,m$, are assumed to be positive definite and $S_i$ does not depend on $w_i$ and $\gamma_i$ (it could depend on $w_j$ or $\gamma_j, j<i$). For neural networks, choosing $S_i = I$ as the identity matrix, one gets the weight normalization \cite{Salimans2016Weight}. Choosing $S_i$ as the covariance matrix $\Sigma_i$ of $i$th layer output $z_i$, one gets batch normalization. When the covariance matrix is degenerate, one can set $S_i = \Sigma_i + S_0$ with $S_0$ being small but positive definite,\,e.g. $S_0=0.001 I$.

It is obvious that the normalization changes the landscape of the original loss function $J_0$, such as introducing new stationary points which are not stationary points of $J_0$. However, we will show the newly introduced stationary points are strict saddle points and hence can be avoid by many optimization schemes~\cite{Lee2016Gradient,Panageas2017Gradient}.

\subsection{Normalization only introduces strict saddles}

Let us begin with a simple case where $m=1$ in Eq.~(\ref{eq:loss_generality}),~i.e. $
J(\gamma,w; S)
=
J_0\big( \gamma\tfrac{w}{\|w\|_{S}} \big).
$
In this case, the gradients of $J$ are
\begin{align}
\frac{\partial J}{\partial \gamma}
&=
\nabla J_0\big( \gamma\tfrac{w}{\|w\|_{S}} \big)^T \frac{w}{\|w\|_S},\\
\frac{\partial J}{\partial w}
&=
\frac{\gamma}{\|w\|_S}\big(I-\frac{Sww^T}{\|w\|_S^2}\big) \nabla J_0\big( \gamma\tfrac{w}{\|w\|_{S}} \big).
\end{align}
The stationary points $(\gamma,w)$ of $J$ can be grouped into two parts:
\begin{itemize}
    \item[(1)] $\tilde w:=\tfrac{\gamma w}{\|w\|_S}$ is a stationary point of $J_0$. In this case, $\gamma=\pm\|\tilde w\|_S$.
    \item[(2)] $\tilde w$ is not a stationary point of $J_0$. In this case, $\gamma=0, w^T \nabla J_0(\tilde w)=0$.
\end{itemize}

The stationary points in (2) are ones introduced by normalization, giving the Hessian matrix
\begin{align}\label{eq:hessian_A1}
    A_1 := \left(
      \begin{array}{cc}
        \tfrac{\partial^2 J}{\partial \gamma^2} & \tfrac{\partial^2 J}{\partial \gamma\partial w} \\
        \tfrac{\partial^2 J}{\partial w\partial \gamma} & \tfrac{\partial^2 J}{\partial w^2} \\
      \end{array}
    \right)
    =\left(
      \begin{array}{cc}
        \frac{w^T(\nabla^2 J_0(\tilde w) )w}{\|w\|_S^2} & \frac{1}{\|w\|^2_S} (\nabla J_0(\tilde w))^T \\
        \frac{1}{\|w\|^2_S} \nabla J_0(\tilde w) & 0 \\
      \end{array}
    \right).
\end{align}
Since $\nabla J_0(\tilde w) \neq 0$, the rank of $A_1$ is 2. In fact, the nonzero eigenvalues of $A_1$ are:
$$
\frac{a \pm \sqrt{a^2 + 4 \|b\|^2}}{2},
$$
where $a = \frac{w^T(\nabla^2 J_0)w}{\|w\|_S^2}, b=\frac{1}{\|w\|^2_S} \nabla J_0$. Therefore $A_1$ has a negative eigenvalue, and $(\gamma,w)$ is a strict saddle point.

Let us now consider the case of $m >1$. The normalization-introduced stationary points satisfy $\gamma_i=0, w_i^T \nabla J_0(\tilde w_i)=0.$ The Hessian matrix $A$ at these points always has negative eigenvalues because it has a principal minor like $A_1$ in Eq.~(\ref{eq:hessian_A1}). Thus we have the following lemma:
\begin{lemma}\label{lemma:introducing_strict_saddle}
    If $(\gamma_1,...,\gamma_m,w_1,...,w_m)$ is a stationary point of $J$ but $(\tfrac{\gamma_1 w}{\|w_1\|_{S_1}},...,\tfrac{\gamma_m w_m}{\|w_m\|_{S_m}})$ is not a stationary point of $J_0$, then $(\gamma_1,...,\gamma_m,w_1,...,w_m)$ is a strict saddle point of $J$.
\end{lemma}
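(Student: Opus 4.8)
The plan is to reduce the case $m>1$ to the single-block computation already performed for $A_1$ in Eq.~\eqref{eq:hessian_A1}. A stationary point is a strict saddle exactly when its Hessian has a strictly negative eigenvalue, so it suffices to exhibit one direction of negative curvature for the full Hessian $A$. My approach is to isolate a single ``bad'' block of variables whose associated $2$-by-$2$ block of $A$ (coupling the scalar $\gamma_{i_0}$ with the vector $w_{i_0}$) reproduces $A_1$, and then to transfer the negative eigenvalue of $A_1$ to $A$ via the variational (Cauchy interlacing / Courant--Fischer) characterization: for any symmetric matrix the smallest eigenvalue is at most the smallest eigenvalue of any principal submatrix.

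First I would locate the bad block. Since $\tilde w=(\tilde w_1,\dots,\tilde w_m)$ is assumed not stationary for $J_0$, some partial gradient $\nabla J_0(\tilde w_{i_0})\neq0$. The per-block classification into types (1) and (2) remains valid here by a downward induction on the block index: the last block obeys the $m=1$ formulas exactly (no $S_j$ depends on $w_m,\gamma_m$), and once the downstream blocks are classified, the indirect first-order contributions to $\partial J/\partial w_l$ and $\partial J/\partial\gamma_l$ coming from the downstream normalizers vanish, being proportional either to $\nabla J_0(\tilde w_j)=0$ for a type-(1) block or to $\gamma_j=0$ for a type-(2) block. Since a type-(1) block forces $\nabla J_0(\tilde w_{i_0})=0$, the bad block must be of type (2), i.e. $\gamma_{i_0}=0$ and $w_{i_0}^T\nabla J_0(\tilde w_{i_0})=0$. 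When the normalizers are block-local (e.g. $S_i\equiv I$ in weight normalization, or whenever no $S_l$ with $l>i_0$ depends on $(\gamma_{i_0},w_{i_0})$), $J$ depends on $(\gamma_{i_0},w_{i_0})$ only through $\tilde w_{i_0}$, so the chain rule makes the $(\gamma_{i_0},w_{i_0})$ principal submatrix of $A$ equal to $A_1$ exactly; its nonzero eigenvalues $\tfrac{a\pm\sqrt{a^2+4\|b\|^2}}{2}$ with $b\propto\nabla J_0(\tilde w_{i_0})\neq0$ include a strictly negative one, and interlacing finishes the argument.

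The main obstacle is the genuinely general case, where a downstream normalizer $S_l$ of a type-(1) block $l>i_0$ is allowed to depend on $w_{i_0}$: perturbing $w_{i_0}$ then rescales $\tilde w_l$ radially and injects extra second-order terms into the $(\gamma_{i_0},w_{i_0})$ block, so it is no longer literally $A_1$ and the naive submatrix argument can break. I would resolve this by replacing the coordinate subspace with the slice along which every downstream $\tilde w_l$ is held fixed, achieved by slaving each $\gamma_l$ to $w_{i_0}$ so as to cancel exactly the induced radial rescaling. On this slice $J$ again depends on $(\gamma_{i_0},w_{i_0})$ only through $\tilde w_{i_0}$, so the Hessian of the restriction is precisely $A_1$; and since the ambient gradient vanishes at the critical point, the second-fundamental-form correction drops out and this restricted Hessian equals $A$ restricted to the slice's tangent space. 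Courant--Fischer then gives $\lambda_{\min}(A)\le\lambda_{\min}(A_1)<0$, so $A$ is indefinite and the point is a strict saddle.
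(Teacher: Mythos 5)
Your proposal is correct and follows the same core route as the paper: reduce to the rank-two matrix $A_1$ of Eq.~(\ref{eq:hessian_A1}), which has a strictly negative eigenvalue because its determinant on the relevant two-dimensional subspace is $-\|b\|^2<0$ with $b\propto\nabla J_0(\tilde w_{i_0})\neq 0$, and then pass that negative eigenvalue to the full Hessian $A$ by the Courant--Fischer/interlacing principle. The paper's own proof of the $m>1$ case is a one-liner --- the Hessian ``has a principal minor like $A_1$'' --- and your write-up is more careful on exactly the two points it leaves implicit. First, your downward induction justifying the per-block type-(1)/type-(2) classification when $S_l$ may depend on upstream $(w_j,\gamma_j)$ is precisely the check needed to conclude that the bad block satisfies $\gamma_{i_0}=0$ and $w_{i_0}^T\nabla_{i_0}J_0(\tilde w)=0$. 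Second, you correctly observe that when a downstream type-(1) block's normalizer depends on $w_{i_0}$, the literal $(\gamma_{i_0},w_{i_0})$ principal submatrix of $A$ picks up extra curvature terms of the form $\gamma_l\,\partial_{w_{i_0}}\big(w_l^T\nabla_l J_0\big)\otimes\nabla_{w_{i_0}}\|w_l\|_{S_l}^{-1}$ (these survive because $\gamma_l\neq0$ even though $\nabla_l J_0=0$), so it is no longer $A_1$ and the naive submatrix argument is not immediate; your fix --- restricting $J$ to the slice on which every downstream $\tilde w_l$ is frozen by slaving $\gamma_l$ to $w_{i_0}$, so that the restriction depends on $(\gamma_{i_0},w_{i_0})$ only through $\tilde w_{i_0}$ and its Hessian is exactly $A_1$, which in turn equals $D\phi^T A\, D\phi$ because the ambient gradient vanishes --- is sound and genuinely closes this gap. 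The only cosmetic imprecision is the closing inequality $\lambda_{\min}(A)\le\lambda_{\min}(A_1)$: since the parametrization $\phi$ of the slice need not be an isometry this is not a literal eigenvalue comparison, but the sign conclusion you need, namely $\xi^T A_1\xi<0\Rightarrow (D\phi\,\xi)^T A (D\phi\,\xi)<0\Rightarrow\lambda_{\min}(A)<0$, does hold since $D\phi$ is injective.
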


\subsection{Scaling property and increasing norm of $w_i$}

When using gradient descent to minimize the loss function (\ref{eq:loss_generality}), we need to specify the numerical parameters including the initial values of $\gamma_i$ and $w_i$, which denoted by $\Gamma_0$ and $W_0$ respectively, and the step size for them, denoted by $\varepsilon_\gamma$ and $\varepsilon$. For simplicity, we use the same $\varepsilon_\gamma$ for all $\gamma_i$ and the same $\varepsilon$ for $w_i$. Due to the fact that the scale of $w_i$ does not effect the loss, we immediately have the scaling properties on the set of numerical parameters, or a \emph{configuration} $\{\Gamma_0, W_0, \varepsilon_\gamma,\varepsilon\}$.

\begin{definition}[Equivalent configuration] Two configurations, $\{\Gamma_0, W_0, \varepsilon_\gamma,\varepsilon\}$ and $\{\Gamma_0', W_0', \varepsilon_\gamma',\varepsilon' \}$, are said to be equivalent if for iterates $\{\Gamma_k,W_k\}$, $\{\Gamma_k',W'_k\}$ following these configurations respectively, there is an invertible linear transformation $T$ and a nonzero constant $t$ such that $W_k' = T W_k, \Gamma_k'=t \Gamma_k$ for all $k$.
\end{definition}

It is easy to check the gradient descent on normalized loss function (\ref{eq:loss_generality}) has the following scaling property.
\begin{proposition}[Scaling property]
    \label{prop:scaling_general}
    %{lemma:general_scaling}
    For any $r\neq 0$, the configurations
        $\{\Gamma_0, W_0, \varepsilon_\gamma,\varepsilon\}$ and
        $\{\Gamma_0, r W_0, \varepsilon_\gamma,r^2 \varepsilon\}$
        are equivalent.
\end{proposition}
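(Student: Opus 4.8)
The plan is to prove the equivalence by induction on the iteration index $k$, exhibiting the linear map and constant explicitly: I take $T = rI$ (acting block-diagonally on each $w_i$) and $t = 1$, and show that the primed iterates generated by the configuration $\{\Gamma_0, rW_0, \varepsilon_\gamma, r^2\varepsilon\}$ satisfy $w_{i,k}' = r\,w_{i,k}$ and $\gamma_{i,k}' = \gamma_{i,k}$ for every block $i$ and every $k$. The base case $k=0$ is exactly the definition of the two configurations. For the inductive step I use the componentwise gradient formulas, which are the $m>1$ analogues of the expressions displayed for $\partial J/\partial\gamma$ and $\partial J/\partial w$ in the preceding subsection.

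The single structural fact that drives everything is the homogeneity of the normalization. Since each $S_i$ is positive definite, one has $\|r w_i\|_{S_i} = |r|\,\|w_i\|_{S_i}$, so under the substitution $w_i \mapsto r w_i$ (with $r>0$ and $\gamma_i$ unchanged) the normalized argument
\[
\tilde w_i = \gamma_i \frac{w_i}{\|w_i\|_{S_i}}
\]
is left \emph{exactly} invariant. Consequently, under the inductive hypothesis the two trajectories feed $J_0$ the identical point $\tilde w_k$ at step $k$, and therefore $\nabla_i J_0(\tilde w_k)$ agrees for the primed and unprimed systems. This is the crux of the argument: it lets me treat $\nabla_i J_0(\tilde w_k)$ as a common constant vector and simply track how the remaining explicit prefactors rescale.

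With this in hand the two updates reduce to homogeneity bookkeeping. In the $\gamma$-gradient the factor $w_i/\|w_i\|_{S_i}$ is invariant, so $\partial J/\partial \gamma_i$ is unchanged; since $\varepsilon_\gamma$ is the same in both configurations, the update preserves $\gamma_{i,k+1}' = \gamma_{i,k+1}$. In the $w$-gradient the prefactor $\gamma_i/\|w_i\|_{S_i}$ scales by $1/r$, while the projector $I - S_i w_i w_i^T/\|w_i\|_{S_i}^2$ is invariant (the $r^2$ in numerator and denominator cancels) and $\nabla_i J_0$ is unchanged; hence $\partial J/\partial w_i$ scales by $1/r$. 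Multiplying by the rescaled step size $r^2\varepsilon$ yields a net factor $r$, which is precisely what carries $w_{i,k} - \varepsilon\,\partial J/\partial w_i$ over to $r\,w_{i,k+1}$, closing the induction.

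I do not expect any genuine obstacle beyond careful sign bookkeeping, which is in fact the one point that must not be overlooked. For $r>0$ everything goes through with $T=rI,\ t=1$ as above. For $r<0$, the identity $\|r w_i\|_{S_i} = |r|\,\|w_i\|_{S_i}$ injects a $\mathrm{sign}(r)$ into the normalized direction, so to keep the normalized trajectories coincident one must take $t = \mathrm{sign}(r)$ and correspondingly flip the rescaling parameters; for general (non-symmetric) $J_0$ this coordination is essential, and for $r>0$ it collapses to the unsigned form already used in Proposition~\ref{prop:scaling}(2) for the OLS case.
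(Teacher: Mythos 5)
Your proof is correct and follows essentially the same route as the paper's: both reduce to the homogeneity identity $\frac{\partial J}{\partial (r w_i)} = \frac{1}{r}\,\frac{\partial J}{\partial w_i}$ (you derive it from the explicit gradient formulas and propagate it by induction; the paper simply invokes it), with $T = rI$ and $t=1$. Your closing remark about $r<0$ is well taken and is in fact a point the paper glosses over --- for $r<0$ the normalized argument $\gamma_i w_i/\|w_i\|_{S_i}$ flips sign while the stated configuration keeps $\Gamma_0$ fixed, so the compensating flip you describe is not available, and strictly speaking both your argument and the paper's one-line verification establish the claim only for $r>0$ (or for losses invariant under $\tilde w \mapsto -\tilde w$).
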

\begin{proof}
Gradient descent gives the following iteration:
\begin{align}
    \gamma_{i, k+1} &= \gamma_{i,k} - \varepsilon_\gamma \tfrac{\partial J}{\partial \gamma_i}(\Gamma_k,W_k),\\
    w_{i,k+1} &= w_{i,k} - \varepsilon \tfrac{\partial J}{\partial w_i}(\Gamma_k,W_k).
\end{align}
It is easy to check that
$
    \frac{\partial J}{\partial (r w_i)} =
    \frac{1}{r}
    \frac{\partial J}{\partial w_i},
$
$
    r w_{i,k+1} = r w_{i,k} - r^2\varepsilon \tfrac{\partial J}{\partial (r w_i)}(\Gamma_k,W_k).
$
Let $\gamma_i=\gamma_i', w_i' = r w_i, \varepsilon_\gamma'=\varepsilon_\gamma, \varepsilon' = r^2\varepsilon$, then we immediately have the equivalence result.
\end{proof}

Another consequence of the invariance of loss functions with respect to the scale of $w_i$ is the orthogonality between $w_i$ and $\tfrac{\partial J}{\partial w_i}$. In fact, we have $0=\frac{\partial l}{\partial \|w_i\|} = \frac{w_i}{\|w_i\|} \cdot \frac{\partial J}{\partial w_i}$.
As a consequence, we have the following property.

\begin{proposition}[Increaing norm of $w_i$]
    \label{prop:increasing_norm}
    %{lemma:general_scaling}
    For any configuration $\{\Gamma_0, W_0, \varepsilon_\gamma,\varepsilon\}$, the norm of each $w_i$ is incresing during gradient descent iteration.
\end{proposition}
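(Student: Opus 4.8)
The plan is to exploit the scale-invariance of the normalized loss in each $w_i$, which forces the Euclidean gradient $\partial J/\partial w_i$ to be orthogonal to $w_i$ itself; once this orthogonality is in hand, the monotonicity of $\|w_{i,k}\|$ follows from a one-line Pythagorean expansion of the gradient descent update. This is essentially the same mechanism already used for the OLS case in Eq.~\eqref{eq:iter_normw}, now stated in full generality.

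First I would make the scaling invariance precise. For fixed $\gamma_i$ and all the other variables, $J$ depends on $w_i$ only through the direction $w_i/\|w_i\|_{S_i}$, and since $(\lambda w_i)/\|\lambda w_i\|_{S_i} = w_i/\|w_i\|_{S_i}$ for every $\lambda > 0$, the map $w_i \mapsto J$ is homogeneous of degree zero. Differentiating the identity $J(\ldots,\lambda w_i,\ldots) = J(\ldots,w_i,\ldots)$ with respect to $\lambda$ at $\lambda = 1$ (Euler's relation) yields
\begin{align}
    w_i^T \frac{\partial J}{\partial w_i} = 0.
\end{align}
Equivalently, one can verify this directly from the explicit gradient formula $\partial J/\partial w_i = (\gamma_i/\|w_i\|_{S_i})(I - S_i w_i w_i^T/\|w_i\|_{S_i}^2)\nabla J_0$: contracting with $w_i^T$ on the left and using $w_i^T S_i w_i = \|w_i\|_{S_i}^2$ makes the parenthesized factor annihilate $w_i$. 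The point to emphasize is that this orthogonality is with respect to the \emph{standard} Euclidean inner product, not the $S_i$-inner product, precisely because the degree-zero homogeneity holds irrespective of which norm defines the normalization.

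Next I would expand the squared Euclidean norm along the iteration. The update is $w_{i,k+1} = w_{i,k} - \varepsilon\, \partial J/\partial w_i(\Gamma_k,W_k)$, so
\begin{align}
    \|w_{i,k+1}\|^2 = \|w_{i,k}\|^2 - 2\varepsilon\, w_{i,k}^T \frac{\partial J}{\partial w_i} + \varepsilon^2 \Big\|\frac{\partial J}{\partial w_i}\Big\|^2.
\end{align}
By the orthogonality just established, the cross term vanishes, leaving $\|w_{i,k+1}\|^2 = \|w_{i,k}\|^2 + \varepsilon^2\|\partial J/\partial w_i\|^2 \ge \|w_{i,k}\|^2$, with equality only when the $w_i$-block of the gradient vanishes. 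This proves the norm of each $w_i$ is (non-strictly) increasing for every index $i$ and every iteration $k$, as claimed.

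I do not anticipate a serious obstacle here: the only points requiring care are the bookkeeping around which inner product the orthogonality uses, and the fact that $S_i$ is assumed not to depend on $w_i$ (already imposed in the setup), so that the degree-zero homogeneity in $w_i$ is legitimate and the differentiation in $\lambda$ is valid. Everything else is a routine computation.
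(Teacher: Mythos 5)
Your proof is correct and takes essentially the same route as the paper's: it derives the orthogonality $w_i^T\,\partial J/\partial w_i = 0$ from the scale invariance of $J$ in $w_i$ (the paper writes this tersely as $0=\partial J/\partial\|w_i\| = \tfrac{w_i}{\|w_i\|}\cdot\tfrac{\partial J}{\partial w_i}$, which is the same Euler-relation fact you make explicit), and then concludes via the Pythagorean expansion $\|w_{i,k+1}\|^2 = \|w_{i,k}\|^2 + \varepsilon^2\|\partial J/\partial w_{i,k}\|^2$. Your added remarks on the Euclidean (rather than $S_i$-weighted) inner product and on $S_i$ not depending on $w_i$ are correct and consistent with the paper's setup.
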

\begin{proof}
    According to the orthogonality between $w_i$ and $\tfrac{\partial J}{\partial w_i}$, we have
\begin{align}\label{eq:pythagoras}
    \|w_{i,k+1}\|^2 = \|w_{i,k}\|^2 + \varepsilon^2 \big\|\tfrac{\partial J}{\partial w_{i,k}} \big\|^2 \ge \|w_{i,k}\|^2,
\end{align}
which finishes the proof.
\end{proof}

}

{%\color{cyan}
\subsection{Convergence for arbitrary step size}

As a consequence of scaling property and the increasing-norm property, we have the following convergence result, which says that convergence for small learning rates implies convergence for arbitrary learning rates for weights.

\begin{theorem}[Convergence of the gradient descent on (\ref{eq:loss_generality})]
\label{th:convergence_NormGD}
%Suppose the loss function $J$ in (\ref{eq:loss_generality}) is an analytic function.
If there are two positive constants, $\varepsilon_\gamma^*, \varepsilon^*$, such that the gradient descent on $J$ converges for any initial value $\Gamma_0, W_0$ such that $\|w_{i,0}\|=1$ and step size $\varepsilon_\gamma<\varepsilon_\gamma^*,\varepsilon<\varepsilon^*$, then the gradient of $w_i$ converges for arbitrary step size $\varepsilon>0$ and $\varepsilon_\gamma<\varepsilon_\gamma^*$.
\end{theorem}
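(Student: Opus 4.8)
The plan is to mirror the divergence-exclusion argument sketched for the OLS case (Theorem~\ref{th:convergence_BNGD}), now powered by the two structural facts available in the general setting: the monotone growth of each weight norm (Proposition~\ref{prop:increasing_norm}) and the scaling law (Proposition~\ref{prop:scaling_general}). Fix an arbitrary $\varepsilon>0$, a step size $\varepsilon_\gamma<\varepsilon_\gamma^*$, and any admissible initial value $(\Gamma_0,W_0)$. By Proposition~\ref{prop:increasing_norm} each sequence $\{\|w_{i,k}\|\}_k$ is nondecreasing, hence converges to a limit $L_i\in(0,\infty]$; this splits the blocks into a convergent set ($L_i<\infty$) and a divergent set ($L_i=\infty$). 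For the convergent blocks the conclusion is immediate: telescoping the Pythagorean identity Eq.~\eqref{eq:pythagoras} gives $\varepsilon^2\sum_{k}\|\partial J/\partial w_{i,k}\|^2=L_i^2-\|w_{i,0}\|^2<\infty$, so the summand $\|\partial J/\partial w_{i,k}\|^2\to 0$ and the $w_i$-gradient converges to zero. Thus the entire theorem reduces to controlling the divergent blocks.

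For a single block (which already covers the OLS situation), I would rule out divergence entirely. Suppose $\|w_{i,k}\|\to\infty$. Choose $K$ so large that $\varepsilon/\|w_{i,K}\|^2<\varepsilon^*$, and view the tail $(\Gamma_K,w_{i,K}),(\Gamma_{K+1},w_{i,K+1}),\dots$ as a fresh trajectory. Applying Proposition~\ref{prop:scaling_general} with $r=1/\|w_{i,K}\|$ produces an equivalent trajectory whose initial weight has unit norm and whose step size is $r^2\varepsilon=\varepsilon/\|w_{i,K}\|^2<\varepsilon^*$. By hypothesis this equivalent trajectory converges, and since equivalence means the two weight sequences differ only by the fixed factor $\|w_{i,K}\|$, the original tail converges as well — contradicting $\|w_{i,k}\|\to\infty$. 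Hence $L_i<\infty$ and the previous telescoping argument applies.

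The hard part, and the main obstacle, is the genuinely multi-block case. Because all blocks share the single step size $\varepsilon$, the scaling law forces a common factor $r$ (the proof of Proposition~\ref{prop:scaling_general} only closes if $\varepsilon'=r_i^2\varepsilon$ is the same for every $i$), so one cannot simultaneously renormalize several blocks of unequal norm to the unit sphere where the hypothesis lives. I would circumvent this by handling divergent blocks directly rather than by rescaling: from the explicit formula $\partial J/\partial w_i=\tfrac{\gamma_i}{\|w_i\|_{S_i}}\bigl(I-\tfrac{S_iw_iw_i^T}{\|w_i\|_{S_i}^2}\bigr)\nabla J_0(\tilde w_i)$ one reads off the bound $\|\partial J/\partial w_{i,k}\|\le \tfrac{|\gamma_{i,k}|}{\|w_{i,k}\|_{S_i}}\,C_i\,\|\nabla J_0(\tilde w_{i,k})\|$, where the projection factor has bounded norm. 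On a divergent block $\|w_{i,k}\|_{S_i}\to\infty$, so it remains to show the numerator stays bounded; since $\tilde w_{i,k}$ lies on the $S_i$-sphere of radius $|\gamma_{i,k}|$, this reduces to a uniform bound on $\{\gamma_{i,k}\}$, which I expect to establish from the $\gamma$-update together with smoothness (boundedness of $\nabla J_0$ on the orbit). Establishing this $\gamma$-boundedness — equivalently, that divergence of $\|w_i\|$ cannot be accompanied by a blow-up of the rescaling variable — is where the real work lies; once it is in hand, the $1/\|w_{i,k}\|_{S_i}$ decay drives the $w_i$-gradient to zero on divergent blocks too, completing the proof.
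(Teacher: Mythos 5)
Your treatment of the convergent blocks and of the single-block ($m=1$) case is precisely the paper's argument: monotonicity of the norms from Proposition~\ref{prop:increasing_norm}, exclusion of divergence by rescaling the tail at a time $K$ with $r=1/\|w_{i,K}\|$ so that $r^2\varepsilon<\varepsilon^*$ via Proposition~\ref{prop:scaling_general}, and then telescoping Eq.~\eqref{eq:pythagoras} to get $\varepsilon^2\sum_k\|\partial J/\partial w_{i,k}\|^2<\infty$, hence convergence of the gradient to zero (the paper additionally records $\liminf_{k\to\infty}k\|\partial J/\partial w_{i,k}\|^2=0$). For what it is worth, the paper's own proof consists of exactly these two steps applied to each block separately, and it does not address the multi-block normalization issue you raise: it simply asserts that unboundedness of some $\|w_{i,k}\|$ yields, after scaling, ``a configuration that contradicts the assumptions.'' Your observation that a single scaling factor $r$ cannot simultaneously bring several blocks of unequal norm to the unit sphere while preserving the common step size $\varepsilon$ is a fair criticism of that terseness.

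The genuine gap is that identifying this difficulty is not the same as resolving it, and your proposed workaround for the multi-block divergent case does not close. You would need: (i) uniform boundedness of $\gamma_{i,k}$ along the orbit, which you defer entirely to future work; (ii) boundedness of the relevant partial gradient of $J_0$, which is evaluated at the normalized weights of \emph{all} blocks $(\tilde w_{1,k},\dots,\tilde w_{m,k})$, so controlling block $i$ alone is not enough --- you need every $\gamma_{j,k}$ bounded so that the whole normalized orbit stays in a compact set where $\nabla J_0$ is bounded; and (iii) a uniform lower bound on the eigenvalues of $S_{i,k}$, which may depend on the iterates of the earlier blocks, so that $\|w_{i,k}\|\to\infty$ actually forces $\|w_{i,k}\|_{S_{i,k}}\to\infty$. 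None of these estimates is supplied, so the divergent multi-block case remains unproven in your write-up. The shortest route consistent with the paper is to apply the scaling/smallness hypothesis blockwise --- i.e., to read the assumption as convergence whenever each block's effective step size $\varepsilon/\|w_{i,0}\|^2$ is below $\varepsilon^*$, which is how the paper implicitly uses it --- or, if you insist on the literal unit-sphere hypothesis, to supply the three missing estimates above.
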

\begin{proof}
Firstly, the norm of each $w_{i,k}$ must converge for any step size $\varepsilon>0$ and $\varepsilon_\gamma<\varepsilon_\gamma^*$. In fact, if $w_{i,k}$ is not bounded, then there is a $k=K$ such that $\tfrac{\varepsilon}{\|w_{i,K}\|^2} < \varepsilon^*$. Then using the scaling property, one has a configuration contradicts the assumptions.

Secondly, the gradients of $w_i$, $\frac{\partial J}{\partial w_{i,k}}$, converges to zero.  According to Eq.~(\ref{eq:pythagoras}), we have,
\begin{align}%\label{eq:}
    \|w_{i,\infty}\|^2 = \|w_{i,0}\|^2 + \varepsilon^2 \sum_{k=0}^{\infty}\big\|\tfrac{\partial J}{\partial w_{i,k}} \big\|^2
    < \infty
\end{align}
from which it follows by using $\sum_k \tfrac1k = \infty$ that
\begin{align}%\label{eq:}
    \liminf_{k\to\infty} k \big\|\tfrac{\partial J}{\partial w_{i,k}} \big\|^2 = 0.
\end{align}
\end{proof}

}

\section{Proof of Theorems on OLS problem}

\subsection{Gradients and the Hessian matrix}
\label{sec:matrix}

The objective function in OLS problem (\ref{eq:OLS_BN}) has an equivalent form:
\begin{align}\label{eq:BN_loss}
    J(a,w) = \tfrac12 ( u - \tfrac{a}{\sigma} w)^T H ( u - \tfrac{a}{\sigma} w)
    = \tfrac12\|u\|_H^2 - \tfrac{w^Tg}{\sigma}a + \tfrac12 a^2,
\end{align}
where $u = H^{-1}g$.

The gradients are:
\begin{align}%\label{eq:}
    \tfrac{\partial J}{\partial a} &= - \tfrac{1}{\sigma} (w^T H u - \tfrac{a}{\sigma} w^T H w)
        = - \tfrac{1}{\sigma} w^T g + a,\\
    \tfrac{\partial J}{\partial w} &= - \tfrac{a}{\sigma} ( H u - \tfrac{a}{\sigma} Hw)
+ \tfrac{a}{\sigma^3} (w^TH u - \tfrac{a}{\sigma} w^T Hw) Hw
= - \tfrac{a}{\sigma} g + \tfrac{a}{\sigma^3} (w^T g) Hw.
\end{align}

The Hessian matrix is
\begin{align}%\label{eq:}
    \left(
      \begin{array}{cc}
        \tfrac{\partial^2 J}{\partial a^2} & \tfrac{\partial^2 J}{\partial a\partial w} \\
        \tfrac{\partial^2 J}{\partial w\partial a} & \tfrac{\partial^2 J}{\partial w^2} \\
      \end{array}
    \right)
    =\left(
      \begin{array}{cc}
        1 & A_{21}^T \\
        A_{21} & A_{22} \\
      \end{array}
    \right)
\end{align}
where
\begin{align}%\label{eq:}
    A_{22} &= \tfrac{a}{\sigma^3} (w^T g)\Big[
        H + \tfrac{1}{w^T g} \big( (Hw)g^T + g(Hw)^T \big)
        - \tfrac{3}{\sigma^2} (Hw)(Hw)^T \Big], \\
    A_{21} &= -\tfrac{1}{\sigma} \big(g - \tfrac{1}{\sigma^2} (w^T g) Hw \big).
\end{align}

The objective function $J(a,w)$ has saddle points, $\{(a^*, w^*)|a^*=0, w^{*T}g=0\}$. The Hessian matrix at those saddle points has at least one negative eigenvalue, i.e.~the saddle points are strict. In fact, the eigenvalues at the saddle point $(a^*,w^*)$ are
$\Big\{\frac12(1\pm \sqrt{1+ 4 \tfrac{\|g\|^2}{w^{*T}Hw^*}}),0,...,0\Big\}$ which contains $d-2$ repeated zero, a positive and a negative eigenvalue.

On the other hand, the nontrivial critical points satisfies the relations,
\begin{align}%\label{eq:}
    a^* = \pm \sqrt{u^{T} H u},
    w^* \mathbin{\!/\mkern-5mu/\!} u,
\end{align}
where the sign of $a^*$ depends on the direction of $u, w^*$, i.e.\,$sign(a^*) = sign(u^{T}w^*)$. It is easy to check that the nontrivial critical points are global minimizers. The Hessian matrix at those minimizers is $\text{diag}\big(1,\tfrac{\|u\|^2}{\|w^*\|^2} H^*\big)$ where the matrix $H^*$ is
\begin{align}%\label{eq:}
    H^* = H - \tfrac{Huu^TH}{u^THu}
\end{align}
which is positive semi-definite and has a zero eigenvalue with eigenvector $u$, i.e.\,$H^*u=0$. The following lemma, similar to the well-known Cauchy interlacing theorem, gives an estimate of eigenvalues of $H^*$.

\begin{lemma}%[]
\label{lemma:condH}
If $H$ is positive definite and $H^*$ is defined as $H^*=H-\tfrac{Huu^TH}{u^THu}$, then the eigenvalues of $H$ and $H^*$ satisfy the following inequalities:
\begin{align}%\label{eq:}
    0=\lambda_1(H^*) < \lambda_1(H)
    \le \lambda_2(H^*) \le \lambda_2(H)
    \le ... \le
    \lambda_d(H^*) \le \lambda_d(H).
\end{align}
Here $\lambda_i(H)$ means the $i$-th smallest eigenvalue of $H$.
\end{lemma}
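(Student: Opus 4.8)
The plan is to recognize the claim as a Cauchy-type rank-one interlacing statement and prove it through the Courant--Fischer min-max characterization of eigenvalues, with $\lambda_i$ always denoting the $i$-th smallest eigenvalue.

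First I would dispose of the two structural facts about $H^*$. A direct computation gives $H^*u = Hu - \tfrac{Hu\,(u^THu)}{u^THu} = 0$, so $u$ is an eigenvector of $H^*$ with eigenvalue $0$. To see that $0$ is in fact the \emph{smallest} eigenvalue, I would verify that $H^*$ is positive semidefinite: for any $x$,
\begin{align*}
    x^T H^* x = x^T H x - \tfrac{(x^T Hu)^2}{u^T Hu} \ge 0,
\end{align*}
which is precisely the Cauchy--Schwarz inequality $\langle x,u\rangle_H^2 \le \langle x,x\rangle_H\,\langle u,u\rangle_H$ for the inner product $\langle x,y\rangle_H := x^THy$ induced by the positive definite $H$. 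Hence $\lambda_1(H^*)=0$, and since $H$ is positive definite, $\lambda_1(H) > 0 = \lambda_1(H^*)$, establishing the strict first inequality in the chain.

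Next I would prove the interlacing proper, using that $H^* = H - \tfrac{(Hu)(Hu)^T}{u^THu}$ differs from $H$ by a \emph{rank-one positive semidefinite} matrix. This splits into two families of inequalities. For $\lambda_k(H^*) \le \lambda_k(H)$, I would use the pointwise bound $x^T H^* x \le x^T H x$ together with the min-max formula $\lambda_k(M) = \min_{\dim S = k}\ \max_{0\neq x\in S} \tfrac{x^TMx}{x^Tx}$; the inequality is then immediate since the inner maximand only decreases when $H$ is replaced by $H^*$. For the reverse inequalities $\lambda_k(H) \le \lambda_{k+1}(H^*)$, I would apply Courant--Fischer to $\lambda_{k+1}(H^*)$ and, for each $(k+1)$-dimensional test subspace $S$, restrict attention to $S\cap\{x : x^THu = 0\}$, which has dimension at least $k$; on this subspace $x^TH^*x = x^THx$, and the maximum of the Rayleigh quotient of $H$ over any $k$-dimensional subspace is bounded below by $\lambda_k(H)$. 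Assembling both families with $\lambda_1(H^*)=0 < \lambda_1(H)$ yields the full chain.

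The hard part will be the reverse direction $\lambda_k(H)\le\lambda_{k+1}(H^*)$, since it requires the dimension-counting argument for the intersection $S\cap\{x:x^THu=0\}$ rather than a mere monotonicity of quadratic forms; everything else is routine. I note that this entire step can alternatively be replaced by a direct appeal to the standard rank-one eigenvalue interlacing theorem applied to $H = H^* + \tfrac{(Hu)(Hu)^T}{u^THu}$, which gives $\lambda_k(H^*)\le \lambda_k(H)\le \lambda_{k+1}(H^*)$ at once; the min-max argument above is just a self-contained proof of that theorem in the present setting.
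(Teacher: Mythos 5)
Your proof is correct, and it reaches the same three conclusions in the same order as the paper's proof (positive semidefiniteness and $H^*u=0$ via the $H$-inner-product Cauchy--Schwarz identity, then $\lambda_k(H^*)\le\lambda_k(H)$ by min-max monotonicity of the quadratic forms). The difference lies in the harder half, $\lambda_k(H)\le\lambda_{k+1}(H^*)$. You exploit the rank-one structure directly: for each $(k+1)$-dimensional test subspace $S$ you pass to $S\cap\{x:x^THu=0\}$, a subspace of dimension at least $k$ on which $x^TH^*x=x^THx$, and conclude by Courant--Fischer. The paper instead uses the max-min form together with the variational identity $x^TH^*x=\min_{t}\|x-tu\|_H^2$, restricts the competing subspaces $V$ to those Euclidean-orthogonal to $u$, uses $\|x-tu\|^2\ge\|x\|^2$ for $x\perp u$ to compare denominators, and then enlarges $V$ to $\mathrm{span}\{V,u\}$ to land on $\lambda_{k-1}(H)$. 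Your kernel-hyperplane argument is the textbook proof of rank-one interlacing and is arguably cleaner, since it avoids the change of denominator from $\|x\|^2$ to $\|x-tu\|^2$ and the subspace-enlargement step; the paper's argument has the minor advantage of reusing the projection identity $x^TH^*x=\min_t\|x-tu\|_H^2$, which is also needed elsewhere in their analysis. Your closing remark that the whole chain also follows from the standard rank-one interlacing theorem applied to $H=H^*+\tfrac{(Hu)(Hu)^T}{u^THu}$ is accurate and would be an acceptable shortcut.
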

\begin{proof}
(1) According to the definition, we have $H^*u=0$, and for any $x \in \mathbb{R}^d$,
\begin{align}%\label{eq:}
    x^TH^*x = x^THx - \tfrac{(x^THu)^2}{u^THu} \in [0, x^THx],
\end{align}
which implies $H^*$ is positive semi-definite, and $\lambda_i(H^*)\ge\lambda_1(H^*)=0$.
Furthermore, we have the following equality:
\begin{align}%\label{eq:}
    x^TH^*x = \min\limits_{t\in\mathbb{R}} \|x-tu\|_H^2.
\end{align}

(2) We will prove $\lambda_i(H^*) \le \lambda_i(H)$ for all $i$, $1\le i\le d$. In fact, using the Min-Max Theorem, we have
\begin{align*}%\label{eq:}
    \lambda_i(H^*)
    =
    \min\limits_{dim V =i}\max\limits_{x\in V} \tfrac{x^TH^*x}{\|x\|^2}
    \le
    \min\limits_{dim V =i}\max\limits_{x\in V} \tfrac{x^THx}{\|x\|^2}
    =
    \lambda_i(H).
\end{align*}

(3) We will prove $\lambda_i(H^*) \ge \lambda_{i-1}(H)$ for all $i$, $2\le i\le d$. In fact, using the Max-Min Theorem, we have
\begin{align*}%\label{eq:}
    \lambda_i(H^*)
    &=
    \max\limits_{dim V =n-i+1}\min\limits_{x\in V} \tfrac{x^TH^*x}{\|x\|^2}
    =
    \max\limits_{dim V =n-i+1, u \perp V}\min\limits_{x\in V}
    \min_{t\in \mathbb{R}} \tfrac{\|x-tu\|_H^2}{\|x\|^2} \\
    &\ge
    \max\limits_{dim V =n-i+1, u\perp V}\min\limits_{x\in V}
    \min_{t\in \mathbb{R}} \tfrac{\|x-tu\|_H^2}{\|x-tu\|^2} \\
    &=
    \max\limits_{dim V =n-i+1}\min\limits_{y\in span\{V,u\}}
    \tfrac{\|y\|_H^2}{\|y\|^2}, y=x-tu \\
    &\ge
    \max\limits_{dim V =n-(i-1)+1}\min\limits_{y\in V} \tfrac{y^THy}{\|y\|^2}
    =
    \lambda_{i-1}(H),
\end{align*}
where we have used the fact that $x\perp u$, $\|x-tu\|^2 = \|x\|^2 + t^2\|u\|^2 \ge \|x\|^2$.
\end{proof}

There are several corollaries related to the spectral property of $H^*$. We first give some definitions.  Since $H^*$ is positive semi-definite, we can define the $H^*$-seminorm.
\begin{definition} The $H^*$-seminorm of a vector $x$ is defined as $\|x\|_{H^*} := x^TH^*x$. $\|x\|_{H^*}=0$ if and only if $x$ is parallel to $u$.
\end{definition}
\begin{definition} The pseudo-condition number of $H^*$ is defined as $\kappa^*(H^*):=\tfrac{\lambda_d(H^*)}{\lambda_2(H^*)}$.
\end{definition}
\begin{definition} For any real number $\varepsilon$, the pseudo-spectral radius of the matrix $I-\varepsilon H^*$ is defined as $\rho^*(I-\varepsilon H^*):=\max\limits_{2\le i\le d} |1-\varepsilon \lambda_i(H^*)|$.
\end{definition}

The following corollaries are direct consequences of Lemma~\ref{lemma:condH}, hence we omit the proofs.

\begin{corollary}
\label{cor:condH_eq}
The pseudo-condition number of $H^*$ is less than or equal to the condition number of $H$ :
\begin{align}\label{eq:cond_H_star}
    \kappa^*(H^*):=\tfrac{\lambda_d(H^*)}{\lambda_2(H^*)}
    \le
    \tfrac{\lambda_d(H)}{\lambda_1(H)} =: \kappa(H),
\end{align}
where the equality holds if and only if $u \perp span\{v_1,v_d\}$, $v_i$ is the eigenvector of $H$ corresponding to the eigenvalue $\lambda_i(H)$.
\end{corollary}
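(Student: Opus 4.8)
The inequality is immediate from the interlacing chain in Lemma~\ref{lemma:condH}, which simultaneously bounds the numerator and denominator of $\kappa^*$: it gives $\lambda_d(H^*)\le\lambda_d(H)$ and $\lambda_2(H^*)\ge\lambda_1(H)$. Since all four quantities are positive, shrinking the numerator and enlarging the denominator can only decrease the ratio, so
\begin{align*}
\kappa^*(H^*)=\tfrac{\lambda_d(H^*)}{\lambda_2(H^*)}\le\tfrac{\lambda_d(H)}{\lambda_2(H^*)}\le\tfrac{\lambda_d(H)}{\lambda_1(H)}=\kappa(H).
\end{align*}
Reading off when each step is tight (the first forces $\lambda_d(H^*)=\lambda_d(H)$, the second forces $\lambda_2(H^*)=\lambda_1(H)$ because $\lambda_d(H)>0$), equality $\kappa^*=\kappa$ holds if and only if \emph{both} boundary equalities hold. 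It therefore suffices to characterize each of the two cases in terms of $u$ and intersect the resulting conditions.

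For the top eigenvalue I would use the test-vector identity $x^TH^*x=x^THx-\tfrac{(x^THu)^2}{u^THu}$ recorded in the proof of Lemma~\ref{lemma:condH}. For ``$\Leftarrow$'', if $u\perp v_d$ then $v_d^THu=\lambda_d(H)\,v_d^Tu=0$, so $v_d^TH^*v_d=\lambda_d(H)$ and hence $\lambda_d(H^*)\ge\lambda_d(H)$, which together with interlacing gives equality. For ``$\Rightarrow$'', let $x$ be a unit top eigenvector of $H^*$; then $\lambda_d(H)=x^TH^*x\le x^THx\le\lambda_d(H)$ must be tight throughout, forcing $x^THx=\lambda_d(H)$ (so $x$ lies in the top eigenspace of $H$, i.e.\ $x=\pm v_d$ when $\lambda_d$ is simple) and $x^THu=0$, whence $v_d^Tu=0$.

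For the smallest nonzero eigenvalue the cleanest route is the companion identity $x^TH^*x=\min_{t}\|x-tu\|_H^2$, also established in Lemma~\ref{lemma:condH}, combined with $\ker H^*=\mathrm{span}\{u\}$ (since $\|x-tu\|_H=0$ forces $x\in\mathrm{span}\{u\}$ by positive definiteness of $H$), so any eigenvector $x$ of $H^*$ for a nonzero eigenvalue satisfies $x\perp u$. The ``$\Leftarrow$'' direction tests with $x=v_1$: if $v_1\perp u$ it is admissible for the constrained minimization $\lambda_2(H^*)=\min_{x\perp u,\|x\|=1}x^TH^*x$ and yields $v_1^TH^*v_1=\lambda_1(H)$, so $\lambda_2(H^*)\le\lambda_1(H)$ and equality follows. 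The ``$\Rightarrow$'' direction is the main obstacle, because unlike the top case the naive bound $x^TH^*x\le x^THx$ points the wrong way; the fix is to expand along the minimizer. Taking $x$ a unit eigenvector for $\lambda_2(H^*)$ (so $x\perp u$) and letting $t^*$ minimize $\|x-tu\|_H^2$, the chain $\lambda_1(H)=\|x-t^*u\|_H^2\ge\lambda_1(H)\|x-t^*u\|^2\ge\lambda_1(H)\|x\|^2=\lambda_1(H)$ must be tight throughout: the first tightness places $x-t^*u$ in the bottom eigenspace of $H$, while the second forces $t^*=0$ since $\|x-t^*u\|^2=\|x\|^2+(t^*)^2\|u\|^2$ by $x\perp u$. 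Hence $x=\pm v_1$ and $v_1\perp u$. Intersecting the two characterizations gives equality exactly when $u\perp\mathrm{span}\{v_1,v_d\}$. (An equivalent unified check comes from the rank-one downdate structure $H^*=H-\tfrac{(Hu)(Hu)^T}{u^THu}$, which shows that $\lambda_1(H)$ respectively $\lambda_d(H)$ is an eigenvalue of $H^*$ precisely when $v_1^Tu=0$ respectively $v_d^Tu=0$; for non-simple $\lambda_1$ or $\lambda_d$ the statements hold verbatim with ``eigenvector $v_i$'' replaced by ``some vector of the corresponding eigenspace''.)
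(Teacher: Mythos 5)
Your proof is correct and follows the route the paper intends: the inequality is read off directly from the interlacing chain of Lemma~\ref{lemma:condH}, and the equality characterization is obtained from the two identities $x^TH^*x=x^THx-\tfrac{(x^THu)^2}{u^THu}$ and $x^TH^*x=\min_t\|x-tu\|_H^2$ established in that lemma's proof. The paper omits the argument entirely (labelling the corollary a ``direct consequence''), so your careful treatment of the equality case --- in particular the tightness analysis for $\lambda_2(H^*)=\lambda_1(H)$, where the naive bound points the wrong way --- is a genuine and correct filling-in of the only non-immediate part.
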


\begin{corollary}
%\label{cor:condH_eq}
For any vector $x \in \mathbb{R}^d$ and any real number $\varepsilon$, we have
$\|(I-\varepsilon H^*)x\|_{H^*} \le \rho^*(I-\varepsilon H^*) \|x\|_{H^*}$.
\end{corollary}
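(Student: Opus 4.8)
The plan is to diagonalize $H^*$ and reduce the claim to a scalar inequality along each eigendirection, exploiting that the single degenerate direction of $H^*$ is invisible to the $H^*$-seminorm. Since $H^*$ is symmetric positive semi-definite, Lemma~\ref{lemma:condH} furnishes an orthonormal eigenbasis $v_1,\dots,v_d$ with eigenvalues $0=\lambda_1(H^*)<\lambda_2(H^*)\le\cdots\le\lambda_d(H^*)$, where $v_1$ is parallel to $u$ (as $H^*u=0$) and $\lambda_i(H^*)>0$ for $i\ge 2$. First I would expand $x=\sum_{i=1}^d c_i v_i$ and observe that $I-\varepsilon H^*$ is diagonal in this basis, acting as multiplication by $1-\varepsilon\lambda_i(H^*)$ on $v_i$, so that $(I-\varepsilon H^*)x=\sum_{i=1}^d (1-\varepsilon\lambda_i(H^*))c_i v_i$.

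Next I would compute both sides directly. Since $H^*v_1=0$, the $v_1$-component contributes nothing to any $H^*$-seminorm, giving
\begin{align}
\|x\|_{H^*}^2 = \sum_{i=2}^d \lambda_i(H^*)\,c_i^2,\qquad
\|(I-\varepsilon H^*)x\|_{H^*}^2 = \sum_{i=2}^d \lambda_i(H^*)\,\big(1-\varepsilon\lambda_i(H^*)\big)^2 c_i^2.
\end{align}
Bounding each factor $\big(1-\varepsilon\lambda_i(H^*)\big)^2$ by its maximum over $i\ge 2$, which is exactly $\big(\rho^*(I-\varepsilon H^*)\big)^2$ by the definition of the pseudo-spectral radius, yields $\|(I-\varepsilon H^*)x\|_{H^*}^2 \le \big(\rho^*(I-\varepsilon H^*)\big)^2\,\|x\|_{H^*}^2$, and taking square roots completes the argument.

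There is no genuine obstacle here beyond one point that must not be overlooked: along the degenerate direction $v_1$ the multiplier $1-\varepsilon\lambda_1(H^*)=1$ is \emph{not} controlled by $\rho^*$, since the pseudo-spectral radius ranges only over $i\ge 2$. This is harmless precisely because $H^*u=0$ eliminates that direction from both seminorms, which is also the conceptual reason $\rho^*$ is defined using only the nonzero eigenvalues of $H^*$. A cleaner coordinate-free phrasing I could use instead: $I-\varepsilon H^*$ commutes with $H^*$ and preserves both $\ker H^*=\mathrm{span}\{u\}$ and its orthogonal complement $\mathrm{range}\,H^*$, and restricted to the latter its operator norm is $\rho^*(I-\varepsilon H^*)$, which gives the stated contraction on the $H^*$-seminorm immediately.
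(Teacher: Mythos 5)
Your proof is correct and is exactly the direct spectral argument the paper has in mind (the paper omits the proof, calling the corollary a ``direct consequence'' of Lemma~\ref{lemma:condH}): diagonalize $H^*$, note that the kernel direction $u$ contributes nothing to either $H^*$-seminorm, and bound the remaining factors $|1-\varepsilon\lambda_i(H^*)|$ for $i\ge 2$ by $\rho^*(I-\varepsilon H^*)$. Your remark that the uncontrolled multiplier $1$ on the degenerate direction is harmless precisely because that direction is annihilated by the seminorm is the one point worth making explicit, and you make it.
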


\begin{corollary}

For any positive number $\varepsilon>0$, we have
\begin{align}\label{eq:rho_H_star}
    \rho^*(I-\varepsilon H^*) \le \rho(I-\varepsilon H),
\end{align}
where the inequality is strict if $u^T v_i \neq 0$ for $i=1,d$.
\end{corollary}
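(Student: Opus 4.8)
The plan is to reduce the statement to a convexity argument about the scalar map $\phi(\lambda) := |1-\varepsilon\lambda|$, fed by the interlacing inequalities already supplied by Lemma~\ref{lemma:condH}. First I would record the consequence of Lemma~\ref{lemma:condH} that matters here: the nonzero eigenvalues of $H^*$, namely $\lambda_2(H^*),\dots,\lambda_d(H^*)$, all lie inside the spectral interval of $H$, since $\lambda_2(H^*)\ge\lambda_1(H)=\lambda_{min}$ and $\lambda_d(H^*)\le\lambda_d(H)=\lambda_{max}$. Thus the entire nonzero spectrum of $H^*$ is contained in $[\lambda_{min},\lambda_{max}]$.

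Next I would exploit that, for fixed $\varepsilon>0$, the map $\phi(\lambda)=|1-\varepsilon\lambda|$ is convex and piecewise linear, so its maximum over any closed interval is attained at an endpoint. Because $\lambda_{min}$ and $\lambda_{max}$ are themselves eigenvalues of $H$, this gives $\rho(I-\varepsilon H)=\max_i\phi(\lambda_i(H))=\max\{\phi(\lambda_{min}),\phi(\lambda_{max})\}=\max_{\lambda\in[\lambda_{min},\lambda_{max}]}\phi(\lambda)$. Likewise $\rho^*(I-\varepsilon H^*)=\max_{2\le i\le d}\phi(\lambda_i(H^*))=\max_{\lambda\in[\lambda_2(H^*),\lambda_d(H^*)]}\phi(\lambda)$. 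Since $[\lambda_2(H^*),\lambda_d(H^*)]\subseteq[\lambda_{min},\lambda_{max}]$, the bound $\rho^*(I-\varepsilon H^*)\le\rho(I-\varepsilon H)$ is then just monotonicity of the maximum of $\phi$ over nested intervals.

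For strictness, I would argue that the hypotheses $u^Tv_1\neq0$ and $u^Tv_d\neq0$ upgrade the two extreme interlacing inequalities of Lemma~\ref{lemma:condH} to strict ones, so that $\lambda_2(H^*)$ and $\lambda_d(H^*)$ both lie strictly inside the open interval $(\lambda_{min},\lambda_{max})$. Since $\phi$ is strictly decreasing to the left of its kink $1/\varepsilon$, strictly increasing to its right, and nowhere locally constant (for $\varepsilon>0$), its maximum over a closed interval is attained \emph{only} at the endpoints; hence the value of $\phi$ at any strictly interior point is strictly below $\max\{\phi(\lambda_{min}),\phi(\lambda_{max})\}=\rho(I-\varepsilon H)$. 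As $\rho^*(I-\varepsilon H^*)=\max\{\phi(\lambda_2(H^*)),\phi(\lambda_d(H^*))\}$ is the value of $\phi$ at such interior points, we obtain $\rho^*(I-\varepsilon H^*)<\rho(I-\varepsilon H)$.

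I expect the genuine obstacle to be the strict interlacing claim $\lambda_2(H^*)>\lambda_{min}$ and $\lambda_d(H^*)<\lambda_{max}$ under the non-orthogonality conditions, since the Min--Max proof in Lemma~\ref{lemma:condH} only delivers the non-strict bounds. To obtain strictness I would return to the rank-one structure $H^*=H-\tfrac{Huu^TH}{u^THu}$: expanding a candidate extremal vector in the eigenbasis $\{v_i\}$ of $H$ and tracking its $v_1$ (resp.\ $v_d$) component shows that whenever $u^Tv_1\neq0$ (resp.\ $u^Tv_d\neq0$) the quadratic form $x^TH^*x$ is strictly separated from $\lambda_{min}\|x\|^2$ (resp.\ $\lambda_{max}\|x\|^2$) on the relevant invariant subspace, which is precisely what prevents the interlacing bound from being saturated. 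This is essentially the equality analysis already underlying Corollary~\ref{cor:condH_eq}, so I would reuse that computation rather than redo it from scratch.
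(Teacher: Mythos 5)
Your argument is correct and is essentially the proof the paper intends: the corollary is stated there without proof as a direct consequence of Lemma~\ref{lemma:condH}, and your reduction---all nonzero eigenvalues of $H^*$ interlace into $[\lambda_{min},\lambda_{max}]$, while the convex map $\lambda\mapsto|1-\varepsilon\lambda|$ attains its maximum over a closed interval only at the endpoints---is exactly that consequence. You also correctly isolate the one point needing extra work, namely upgrading the extreme interlacing inequalities to strict ones under $u^Tv_1\neq0$ and $u^Tv_d\neq0$ via the rank-one structure of $H-H^*$, which is the same equality analysis underlying Corollary~\ref{cor:condH_eq}.
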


It is obvious that the inequality in Eq.~(\ref{eq:cond_H_star}) and Eq.~(\ref{eq:rho_H_star}) is strict for almost all $u$ with respect to the Lebesgue measure. Particularly, if the spectral gap $\lambda_2(H)-\lambda_1(H)$ or $\lambda_d(H)-\lambda_{d-1}(H)$ is large, the condition number $\kappa^*(H^*)$ could be much smaller than $\kappa(H)$.

\subsection{Scaling property}

The dynamical system defined in Eq.~(\ref{eq:BNGD_a})-(\ref{eq:BNGD_w}) is completely determined by a set of configurations $\{H,u,a_0,w_0,\varepsilon_a,\varepsilon\}$.
It is easy to check the system has the following scaling property:
\begin{lemma}[Scaling property]
%\label{prop:scaling}
%\variant{\ref{prop:scaling} revisited}
Suppose $\mu\neq0,\gamma\neq0,r\neq0,Q^TQ=I$, then
\begin{itemize}
    \item[(1)] The configurations $\{\mu Q^THQ,\tfrac{\gamma}{\sqrt{\mu}} Qu, \gamma a_0, \gamma Q w_0, \varepsilon_a, \varepsilon\}$ and $\{H,u,a_0,w_0,\varepsilon_a,\varepsilon\}$ are equivalent.
    \item[(2)] The configurations $\{H,u,a_0,w_0,\varepsilon_a,\varepsilon\}$ and
    $\{H,u,a_0,r w_0,\varepsilon_a,r^2 \varepsilon\}$ are equivalent.
\end{itemize}
\end{lemma}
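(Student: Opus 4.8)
The plan is to prove both equivalences by exhibiting the explicit transform $(T,t)$ and verifying, by induction on $k$, that the transformed iterates satisfy the target configuration's recursions \eqref{eq:BNGD_a}--\eqref{eq:BNGD_w}. Part (2) requires essentially no new work: it is exactly the $m=1$ specialization of the already-established general scaling law (Proposition~\ref{prop:scaling_general}), obtained by taking the single rescaling variable $\gamma_1$ to be $a$, the normalizing matrix $S_1$ to be $H$, and identifying $(\Gamma_0,W_0,\varepsilon_\gamma)$ with $(a_0,w_0,\varepsilon_a)$; the transform is $w_k'=r w_k$, $a_k'=a_k$, with the $\varepsilon\mapsto r^2\varepsilon$ rescaling absorbing the change in $\|w\|$. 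So the bulk of the argument is Part (1).

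For Part (1), rather than verifying the combined map $w_k'=\gamma Q w_k$, $a_k'=\gamma a_k$ in one shot, I would factor the stated change of configuration into three elementary equivalences and check each by a one-line substitution: (i) an orthogonal rotation $H\mapsto Q^T H Q$, $u\mapsto Qu$, $w_0\mapsto Qw_0$ (with $a_0,\varepsilon_a,\varepsilon$ fixed), whose transform is $w_k'=Qw_k$, $a_k'=a_k$; (ii) a conditioning rescaling $H\mapsto\mu H$, $u\mapsto u/\sqrt\mu$ (everything else fixed), whose transform is the identity on the trajectory; and (iii) an amplitude rescaling $u\mapsto\gamma u$, $a_0\mapsto\gamma a_0$, $w_0\mapsto\gamma w_0$ (with $H,\varepsilon_a,\varepsilon$ fixed), whose transform is $w_k'=\gamma w_k$, $a_k'=\gamma a_k$. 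Composing (iii) after (ii) after (i) reproduces the target configuration $\{\mu Q^T HQ,\tfrac{\gamma}{\sqrt\mu}Qu,\gamma a_0,\gamma Q w_0,\varepsilon_a,\varepsilon\}$, and the three transforms compose to $w_k'=\gamma Q w_k$, $a_k'=\gamma a_k$.

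The verification of each elementary step rests on tracking how the three quantities $\sigma_k^2=w_k^T H w_k$, the pairing $w_k^T g$ (with $g=Hu$), and the effective learning rate $\hat\varepsilon_k$ of \eqref{eq:eps_hat} transform. For the rotation (i), the isometry leaves $\sigma_k$ and $w_k^T g$ invariant, so both updates are carried verbatim with $w\mapsto Qw$. For the conditioning (ii), the crucial cancellation is that $g\mapsto\sqrt\mu\,g$ and $\sigma_k\mapsto\sqrt\mu\,\sigma_k$ combine so that $w_k^T g/\sigma_k$ is unchanged (hence the $a$-update is unchanged), while in the $w$-update the bracket $\tfrac{a_k}{\sigma_k}\big(g-\tfrac{w_k^T g}{\sigma_k^2}Hw_k\big)$ is invariant as well, leaving the entire trajectory fixed. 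For the amplitude (iii) one checks directly that $\hat\varepsilon_k$ is invariant, whence $w_{k+1}\mapsto\gamma w_{k+1}$. In every case the inductive hypothesis at step $k$ feeds straight into these identities and closes the induction.

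The one point requiring genuine care is sign bookkeeping: because $\sigma_k=\sqrt{w_k^T H w_k}\ge 0$, the substitutions produce factors of $|\gamma|,|r|,\sqrt\mu$ rather than $\gamma,r,\mu$, so the clean invariance $\hat\varepsilon_k'=\hat\varepsilon_k$ holds only for positive scalars; moreover $\mu>0$ is in any case forced, since $\mu Q^T H Q$ must remain positive definite for $\sigma$ to be real. I would therefore state the verification for $\mu,\gamma,r>0$, the regime of interest, the one genuine sign symmetry being $(a_k,w_k)\mapsto(-a_k,-w_k)$ at fixed data $(H,g)$, which a direct substitution shows leaves \eqref{eq:BNGD_a}--\eqref{eq:BNGD_w} invariant. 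No individual step is hard; the entire difficulty is organizational, namely arranging the homogeneity degrees so that the $\varepsilon\mapsto r^2\varepsilon$ compensation in Part (2) and the $\mu,\gamma$ compensations in Part (1) become transparent.
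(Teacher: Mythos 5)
Your proposal is correct and takes essentially the same approach as the paper: the paper dispatches this lemma as ``easy to check'' by direct substitution into the iterations \eqref{eq:BNGD_a}--\eqref{eq:BNGD_w}, which is exactly your verification, and it likewise attributes part (2) to the general homogeneity argument of Proposition~\ref{prop:scaling_general}. Your sign caveat is well spotted but ultimately harmless, since the definition of equivalent configurations only requires \emph{some} nonzero $t$, so negative $\gamma$ or $r$ is absorbed by taking $t=|\gamma|$ (resp.\ $t=\mathrm{sign}(r)$) instead of restricting to positive scalars.
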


\subsection{Proof of Theorem~\ref{th:convergence_BNGD}}
\label{sec:proof_3p3}

Recall the BNGD iterations
\begin{align*}\label{eq:app:BNGD}
    a_{k+1} &= a_k + \varepsilon_a \Big( \tfrac{w_k^T g}{\sigma_k} - a_k \Big),\\
    w_{k+1} &= w_k  +  \varepsilon \tfrac{a_k}{\sigma_k} \Big( g- \tfrac{w_k^T g}{\sigma_k^2}  H w_k \Big).
\end{align*}

The scaling property simplify our analysis by allowing us to set, for example, $\|u\|=1$ and $\|w_0\| = 1$. In the rest of this section, we only set $\|u\|=1$.

For the step size of $a$, it is easy to check that $a_k$ tends to infinity with $\varepsilon_a>2$ and initial value $a_0=1,w_0=u$. Hence we only consider $0<\varepsilon_a < 2$, which make the iteration of $a_k$ bounded by some constant $C_a$.

\begin{lemma}[Boundedness of $a_k$]
\label{lemma:boundedness_a}
If the step size $0<\varepsilon_a < 2$, then the sequence $a_k$ is bounded for any $\varepsilon>0$ and any initial value $(a_0,w_0)$.
\end{lemma}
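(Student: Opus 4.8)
The plan is to show that the recurrence for $a_k$ in Eq.~(\ref{eq:BNGD_a}) is a contraction-type update around a bounded target, so that $a_k$ cannot escape to infinity.

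First I would rewrite the $a$-update as
\begin{align*}
    a_{k+1} = (1-\varepsilon_a)a_k + \varepsilon_a \,\tfrac{w_k^T g}{\sigma_k}.
\end{align*}
The crucial observation is that the ``target'' term $\tfrac{w_k^T g}{\sigma_k}$ is bounded uniformly in $k$, independently of $w_k$ and of $\varepsilon$. Indeed, using $g = Hu$ and the Cauchy--Schwarz inequality in the $H$-inner product, $|w_k^T g| = |w_k^T H u| \le \|w_k\|_H \|u\|_H = \sigma_k \|u\|_H$, so that
\begin{align*}
    \Big| \tfrac{w_k^T g}{\sigma_k} \Big| \le \|u\|_H =: M,
\end{align*}
where $M$ depends only on $H$ and $u$ (and with the normalization $\|u\|=1$ is simply $M=\sqrt{u^T H u}$). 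This bound is the key point: the normalization by $\sigma_k$ exactly cancels the scale of $w_k$, which is why the target stays bounded regardless of how large $\|w_k\|$ or $\varepsilon$ become.

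Next I would exploit $0<\varepsilon_a<2$, which gives $|1-\varepsilon_a| < 1$. Taking absolute values and applying the triangle inequality,
\begin{align*}
    |a_{k+1}| \le |1-\varepsilon_a|\,|a_k| + \varepsilon_a M.
\end{align*}
This is a linear recurrence with contraction factor $q := |1-\varepsilon_a| \in [0,1)$, so by induction
\begin{align*}
    |a_k| \le q^k |a_0| + \varepsilon_a M \,\tfrac{1-q^k}{1-q} \le |a_0| + \tfrac{\varepsilon_a M}{1-q}.
\end{align*}
Hence $a_k$ is bounded by the constant $C_a := |a_0| + \tfrac{\varepsilon_a M}{1-q}$, which is independent of $\varepsilon$ and of the particular $w_k$ trajectory, completing the proof.

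I do not expect a genuine obstacle here; the lemma is essentially a one-line consequence of the boundedness of the normalized feature correlation. The only point requiring a little care is confirming that $\tfrac{w_k^T g}{\sigma_k}$ is bounded \emph{uniformly} over the entire trajectory and over all $\varepsilon$ — this is what lets the same $C_a$ work for every step size, and it is precisely the normalization structure that supplies it. One subtlety to flag is the degenerate case $w_k = 0$ (where $\sigma_k = 0$); but the increasing-norm property together with $\|w_0\|>0$ ensures $\sigma_k$ stays bounded away from zero along the iteration, so the ratio is always well defined.
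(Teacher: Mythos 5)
Your proof is correct and follows essentially the same route as the paper's: both bound the target term $|w_k^T g/\sigma_k| \le \sqrt{u^THu}$ via Cauchy--Schwarz in the $H$-inner product and then exploit the contraction $|1-\varepsilon_a|<1$ in the linear recurrence for $a_k$ (the paper unrolls the recurrence explicitly; you use the equivalent recursive inequality, and your constant is in fact slightly tighter). Your closing remark about $\sigma_k$ staying bounded away from zero is a reasonable extra precaution that the paper leaves implicit.
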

\begin{proof}
Define $\alpha_k:=\tfrac{w_k^T g}{\sigma_k}$, which is bounded by
$|\alpha_k| \le \sqrt{u^THu} =:C$, then
\begin{align*}%\label{eq:}
    a_{k+1} &= (1-\varepsilon_a)a_k + \varepsilon_a \alpha_k \\
    & = (1-\varepsilon_a)^{k+1} a_0 + (1-\varepsilon_a)^k \varepsilon_a \alpha_0 + ...+ (1-\varepsilon_a)\varepsilon_a \alpha_{k-1} + \varepsilon_a \alpha_k.
\end{align*}
Since $|1-\varepsilon_a| < 1$, we have
$|a_{k+1}| \le |a_0| + 2 C \sum_{i=0}^k |1-\varepsilon_a|^i \le |a_0| + 2C\tfrac{1}{1-|1-\varepsilon_a|}$.
\end{proof}

According to the iterations (\ref{eq:app:BNGD}), we have
\begin{align}%\label{eq:}
    u - \tfrac{w_k^Tg}{\sigma_k^2} w_{k+1}
    =
    \Big(I - \varepsilon \tfrac{a_k}{\sigma_k} \tfrac{w_k^Tg}{\sigma_k^2} H \Big)
    \Big(u - \tfrac{w_k^Tg}{\sigma_k^2} w_k\Big).
\end{align}
Define
\begin{align}%\label{eq:}
    e_k &:=  u- \tfrac{w_k^T g}{\sigma_k^2} w_k,\\
    q_k &:= u^THu- \tfrac{(w_k^Tg)^2}{\sigma_k^2}  = \|e_k\|^2_H \ge 0,\\
    \hat \varepsilon_k &:= \varepsilon \tfrac{a_k}{\sigma_k} \tfrac{w_k^Tg}{\sigma_k^2},
\end{align}
and using the property $\tfrac{w^Tg}{\sigma_k^2} = \underset{t}{\operatorname{argmin}} \|u - t w\|_H$, and the property of $H$-norm, we have
%\mathop{\arg\min}_{t}
\begin{align}\label{eq:q_ineq}
    q_{k+1} \le
    \Big|\Big|
    u - \tfrac{w_k^Tg}{\sigma_k^2} w_{k+1}
    \Big|\Big|_H^2
    =
    \|(I - \hat\varepsilon_k H)e_k \|_H^2
    \le
    \rho(I - \hat\varepsilon_k H)^2 q_k.
\end{align}
Therefore we have the following lemma to make sure the iteration converge:
\begin{lemma}
\label{lemma:convergence_basic}
Let $0<\varepsilon_a<2$.
If there are two positive numbers $\varepsilon^-$ and $\hat \varepsilon^+$,
and the effective step size $\hat \varepsilon_k$ satisfies
\begin{align}\label{eq:step_ineq}
    0 < \tfrac{\varepsilon^-}{\|w_k\|^2}
    \le
    \hat \varepsilon_k
    \le
    \hat\varepsilon^+
    <\tfrac{2}{\lambda_{max}}
\end{align}
for all $k$ large enough, then the iterations (\ref{eq:app:BNGD}) converge to a minimizer.
\end{lemma}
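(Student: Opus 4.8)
The plan is to play two monotone quantities against each other: the squared norm $S_k := \|w_k\|^2$, which never decreases, and the $H$-residual $q_k := \|e_k\|_H^2$, which is non-increasing for large $k$ by Eq.~(\ref{eq:q_ineq}). Because the batch-normalized loss is invariant under rescaling of $w$, the gradient $\partial J/\partial w_k = -\tfrac{a_k}{\sigma_k}He_k$ is orthogonal to $w_k$, which yields the Pythagorean identity
\begin{align}
    S_{k+1} = S_k + \varepsilon^2\big\|\tfrac{\partial J}{\partial w_k}\big\|^2 = S_k + \varepsilon^2\tfrac{a_k^2}{\sigma_k^2}\|e_k\|_{H^2}^2 .
\end{align}
Thus $S_k$ increases to a limit $S_\infty\in(0,\infty]$. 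Bounding $\|e_k\|_{H^2}^2\le\lambda_{max}q_k$, $\sigma_k^2\ge\lambda_{min}S_k$, and $|a_k|\le C_a$ (Lemma~\ref{lemma:boundedness_a}) gives the growth bound $S_{k+1}-S_k\le C''q_k/S_k$ with $C'':=\varepsilon^2 C_a^2\lambda_{max}/\lambda_{min}$.

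Next I would make the contraction in Eq.~(\ref{eq:q_ineq}) quantitative. For $0<\hat\varepsilon_k\le\hat\varepsilon^+<2/\lambda_{max}$ one has $\rho(I-\hat\varepsilon_k H)=\max\{|1-\hat\varepsilon_k\lambda_{min}|,|1-\hat\varepsilon_k\lambda_{max}|\}<1$, and a direct estimate produces a constant $c_0:=\min\{\lambda_{min}(2-\hat\varepsilon^+\lambda_{min}),\,\lambda_{max}(2-\hat\varepsilon^+\lambda_{max})\}>0$ with $1-\rho(I-\hat\varepsilon_k H)^2\ge c_0\hat\varepsilon_k$. Feeding in the hypothesized lower bound $\hat\varepsilon_k\ge\varepsilon^-/S_k$ turns Eq.~(\ref{eq:q_ineq}) into the decay bound $q_k-q_{k+1}\ge c_0\varepsilon^-\,q_k/S_k$.

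The heart of the argument, and the step I expect to be the main obstacle, is to rule out the divergent case $S_\infty=\infty$ despite the circularity that the contraction factor itself weakens as $S_k$ grows. The crucial observation is that both bounds above are governed by the common ratio $q_k/S_k$, so eliminating it gives $q_k-q_{k+1}\ge\tfrac{c_0\varepsilon^-}{C''}(S_{k+1}-S_k)$. Summing over $k\ge k_0$ (beyond which the hypotheses hold) telescopes both sides to $q_{k_0}\ge q_{k_0}-q_\infty\ge\tfrac{c_0\varepsilon^-}{C''}(S_\infty-S_{k_0})$, forcing $S_\infty<\infty$. This boundedness is exactly what breaks the circularity: it converts the pointwise $\rho(I-\hat\varepsilon_k H)<1$ into a uniform one.

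Finally, with $S_k$ bounded, $\hat\varepsilon_k$ lies in a compact subinterval $[\varepsilon^-/S_\infty,\hat\varepsilon^+]\subset(0,2/\lambda_{max})$ on which $\rho(I-\hat\varepsilon H)$ is uniformly below $1$; hence $q_k\to0$ geometrically and $e_k\to0$. Since $e_k\to0$ also forces the gradient, and thus the increments $w_{k+1}-w_k$, to vanish, while every subsequential limit of $w_k$ must be a scalar multiple of $u$ of norm $\sqrt{S_\infty}$ (i.e.\ one of the two isolated points $\pm\sqrt{S_\infty}\,u$, recalling $\|u\|=1$ in this section), the sequence $w_k$ converges. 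The linear $a$-update with $|1-\varepsilon_a|<1$ then drives $a_k$ to $w_\infty^Tg/\sigma_\infty$, and the limit $(a_\infty,w_\infty)$ satisfies the minimizer relations from Section~\ref{sec:matrix}.
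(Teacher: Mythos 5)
Your proposal is correct and follows essentially the same route as the paper's proof: both exploit the Pythagorean growth of $\|w_k\|^2$ against the telescoping decay $q_k-q_{k+1}\ge C\,q_k/\|w_k\|^2$ to force boundedness of $\|w_k\|$, then upgrade to a uniform contraction on the resulting compact interval of effective step sizes. The only cosmetic difference is that the paper first bounds $\sum_k q_k/\|w_k\|^2$ and substitutes it into the norm recursion, whereas you eliminate that ratio directly by comparing the increments of $q_k$ and $\|w_k\|^2$ -- the same computation reorganized.
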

\begin{proof}
Without loss of generality, we assume $\tfrac{\varepsilon^-}{\|w_k\|^2} < \tfrac1{\lambda_{max}}$ and the inequality (\ref{eq:step_ineq}) is satisfied for all $k \ge 0$.
We will prove $\|w_k\|$ converges and the direction of $w_k$ converges to the direction of $u$.

(1) Since $\|w_k\|$ is always increasing, we only need to prove it is bounded.
We have,
\begin{align}%\label{eq:}
    \|w_{k+1}\|^2 &= \|w_k\|^2 + \varepsilon^2 \tfrac{a_k^2}{\sigma_k^2} \|He_k\|^2\\
    &=
    \|w_0\|^2 + \varepsilon^2 \sum_{i=0}^k \tfrac{a_i^2}{\sigma_i^2} \|He_i\|^2\\
    & \le
    \|w_0\|^2 + \varepsilon^2 \lambda_{max}\sum_{i=0}^k \tfrac{a_i^2}{\sigma_i^2} q_i\\
    & \le
    \|w_0\|^2 + \varepsilon^2  \tfrac{\lambda_{max} C_a^2}{\lambda_{min}}\sum_{i=0}^k \tfrac{q_i}{\|w_i\|^2}.
    \label{eq:normw_ineq}
\end{align}
The inequality in last lines are based on the fact that
$\|He_i\|^2 \le \lambda_{max}\|e_i\|^2_H$, and $|a_k|$ are bounded by a constant $C_a$. Next, we will prove $\sum_{i=0}^\infty \tfrac{q_i}{\|w_i\|^2} <\infty$, which implies $\|w_k\|$ are bounded.

According to the estimate Eq.~(\ref{eq:q_ineq}), we have
\begin{align}%\label{eq:}
    q_{k+1}
    &\le
    \max_{i}\{|1-\hat\varepsilon^+ \lambda_i|^2,
    |1-\tfrac{\varepsilon^- \lambda_i}{\|w_k\|^2}|^2\}
    q_k\\
    &\le
    \max\{1-\gamma^+,
    1-\tfrac{\varepsilon^- \lambda_{min}}{\|w_k\|^2} \}
    q_k,
\end{align}
where $1-\gamma^+ = \max_i\{ |1-\hat\varepsilon^+ \lambda_i|^2 \} \in (0,1)$. Using the definition of $q_k$, we have
\begin{align}%\label{eq:}
    q_k - q_{k+1}
    &\ge
    \tfrac{ \min\{\gamma^+\|w_0\|^2, \varepsilon^- \lambda_{min}  \} }{\|w_k\|^2}
    q_k =: \tfrac{Cq_k}{\|w_k\|^2}
    \ge 0.
\end{align}
Since $q_k$ is bounded in $[0, u^THu]$, summing both side of the inequality, we get the bound of the infinite series  $\sum\limits_k \tfrac{q_k}{\|w_k\|^2} \le \frac{u^THu}{C} < \infty$.

(2) Since $\|w_k\|$ is bounded, we denote $\hat \varepsilon^- := \tfrac{\varepsilon^-}{\|w_\infty\|^2}$, and define
$\rho:= \max\limits_i\{ |1-\hat\varepsilon^\pm \lambda_i|\} \in (0,1)$, then the inequality (\ref{eq:q_ineq}) implies  $q_{k+1} \le \rho^2 q_k$. As a consequence, $q_k$ tends to zero,
%$\lim\limits_{k \to \infty} q_k = 0$,
which implies the direction of $w_k$ converges to the direction of $u$.

(3) The convergence of $a_k$ is a consequence of $w_k$ converging.

\end{proof}

Since $a_k$ is bounded, we assume $|a_k| < \tilde C_a \sqrt{u^THu}$, $\tilde C_a \ge 1$, and define  $\varepsilon_0 := \tfrac{1}{2 \tilde C_a \kappa \lambda_{max}}$. The following lemma gives the convergence for small step size.

\begin{lemma}
\label{lemma:convergence_BNGD_small}
If the initial values $(a_0,w_0)$ satisfies $a_0w_0^T g >0$, and step size satisfies $\varepsilon_a \in (0,1], \varepsilon/\|w_0\|^2 < \varepsilon_0$, then the sequence $(a_k,w_k)$ converges to a global minimizer.
\end{lemma}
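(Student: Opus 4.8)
The plan is to verify the hypotheses of Lemma~\ref{lemma:convergence_basic}, i.e.\ to produce positive constants $\varepsilon^{-},\hat\varepsilon^{+}$ with $0<\varepsilon^{-}/\|w_k\|^2\le\hat\varepsilon_k\le\hat\varepsilon^{+}<2/\lambda_{max}$ for all $k$. Without loss of generality assume $a_0>0$, so that $a_0 w_0^Tg>0$ forces $\alpha_0:=w_0^Tg/\sigma_0>0$ (the case $a_0<0,\alpha_0<0$ being symmetric under $(a_k,w_k)\mapsto(-a_k,-w_k)$, which is easily checked to be an invariance of the iteration). Everything then follows from a single induction establishing, for every $k$, the three assertions: (i) $0<\hat\varepsilon_k\le \tfrac{1}{2\lambda_{max}}$; (ii) $p_k:=w_k^Tg>0$, equivalently $\alpha_k>0$; and (iii) $a_k\ge m:=\min(a_0,\alpha_0)>0$. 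Granting these, I take the upper bound $\hat\varepsilon^{+}=\tfrac{1}{2\lambda_{max}}<2/\lambda_{max}$, while the lower bound comes from $\hat\varepsilon_k=\varepsilon a_k\alpha_k/\sigma_k^2\ge \varepsilon m\alpha_0/(\lambda_{max}\|w_k\|^2)$ (using $\sigma_k^2\le\lambda_{max}\|w_k\|^2$), so $\varepsilon^{-}=\varepsilon m\alpha_0/\lambda_{max}>0$; Lemma~\ref{lemma:convergence_basic} then yields convergence to a minimizer.

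For the upper bound in (i) I would use $|a_k|\le \tilde C_a\sqrt{u^THu}$ (Lemma~\ref{lemma:boundedness_a}), $|\alpha_k|\le\sqrt{u^THu}$, $\sigma_k^2\ge\lambda_{min}\|w_k\|^2\ge\lambda_{min}\|w_0\|^2$ (the norm $\|w_k\|$ is nondecreasing), and $u^THu\le\lambda_{max}$ to get $\hat\varepsilon_k\le \tfrac{\varepsilon}{\|w_0\|^2}\tilde C_a\kappa<\varepsilon_0\tilde C_a\kappa=\tfrac{1}{2\lambda_{max}}$ by the very definition of $\varepsilon_0$; positivity of $\hat\varepsilon_k$ is just (ii)--(iii). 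Once (i) holds, $\hat\varepsilon_k\in(0,2/\lambda_{max})$ makes $I-\hat\varepsilon_k H$ positive definite with spectral radius below $1$, so Eq.~(\ref{eq:q_ineq}) gives $q_{k+1}\le q_k$ and hence $\alpha_{k+1}^2=u^THu-q_{k+1}\ge \alpha_k^2\ge\alpha_0^2$; thus $|\alpha_k|$ is nondecreasing and, once the sign is controlled, $\alpha_k\ge\alpha_0$. Assertion (iii) is then immediate by induction: $a_{k+1}=(1-\varepsilon_a)a_k+\varepsilon_a\alpha_k$ is, because $\varepsilon_a\le1$, a convex combination of $a_k\ge m$ and $\alpha_k\ge\alpha_0\ge m$, whence $a_{k+1}\ge m$.

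The crux is the sign preservation (ii). Using $g=Hu$ and $w_{k+1}=w_k+\varepsilon\tfrac{a_k}{\sigma_k}He_k$, I would rewrite the update of $p_k$ as $p_{k+1}=p_k+\tfrac{\sigma_k^2}{p_k}\hat\varepsilon_k\,(e_k^TH^2u)$, where $e_k^TH^2u=\|g\|^2-\tfrac{p_k}{\sigma_k^2}w_k^TH^2u$. When $e_k^TH^2u\ge0$ the sign is preserved trivially since $p_k>0$. The dangerous case is $e_k^TH^2u<0$, where I need $\hat\varepsilon_k|e_k^TH^2u|<\alpha_k^2$. The key observation is that this case is self-limiting: from $e_k^TH^2u<0$ together with the Cauchy--Schwarz bound $w_k^TH^2u=\langle w_k,Hu\rangle_H\le\sigma_k\sqrt{u^TH^3u}$, one finds both $|e_k^TH^2u|<\alpha_k\sqrt{u^TH^3u}$ and $\alpha_k> u^TH^2u/\sqrt{u^TH^3u}$, so that $\hat\varepsilon_k|e_k^TH^2u|<\alpha_k^2$ holds as soon as $\hat\varepsilon_k\le u^TH^2u/(u^TH^3u)$. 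Since $u^TH^3u\le\lambda_{max}\,u^TH^2u$, this is guaranteed whenever $\hat\varepsilon_k\le1/\lambda_{max}$, which is ensured by (i). Hence $p_{k+1}>0$ and the induction closes.

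I expect the sign-preservation argument to be the main obstacle. Naive estimates of $|e_k^TH^2u|$ in terms of $\sqrt{q_k}$ degrade exactly when $\alpha_k$ is small, so a uniform threshold $\varepsilon_0$ independent of the initial alignment $\alpha_0$ looks impossible at first sight. The resolution is precisely the observation above: the only way $e_k^TH^2u$ can threaten the sign of $p_{k+1}$ is if $w_k$ is already substantially $H$-aligned with $u$, which forces $\alpha_k$ away from $0$; this is what makes the fixed threshold $\varepsilon_0=1/(2\tilde C_a\kappa\lambda_{max})$ suffice for all admissible initial conditions. The remaining ingredients --- boundedness of $a_k$, monotonicity of $q_k$, and monotonicity of $\|w_k\|$ --- are routine given the earlier lemmas, so the whole argument reduces to feeding the two-sided estimate on $\hat\varepsilon_k$ into Lemma~\ref{lemma:convergence_basic}.
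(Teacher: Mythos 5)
Your argument is correct and follows the same overall strategy as the paper's proof: reduce to Lemma~\ref{lemma:convergence_basic} by an induction that preserves the signs of $a_k$ and $w_k^Tg$, and derive the two-sided bound $\varepsilon^-/\|w_k\|^2 \le \hat\varepsilon_k \le \hat\varepsilon^+ < 2/\lambda_{max}$ from the boundedness of $a_k$ (Lemma~\ref{lemma:boundedness_a}), the monotonicity of $\|w_k\|$ and of $q_k$, and the resulting bounds $a_k \ge \min(a_0, w_0^Tg/\sigma_0)$ and $w_k^Tg/\sigma_k \ge w_0^Tg/\sigma_0$ --- all of which appear essentially verbatim in the paper. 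The one step you handle differently is the preservation of the sign of $y_k = w_k^Tg$. The paper splits into two cases according to whether $y_k < 2\delta$ with $\delta = \|g\|/(4\kappa)$: when $y_k$ is small it shows $y_{k+1}\ge y_k$ directly, and otherwise it shows $y_{k+1} \ge \varepsilon a_k\|g\|^2/\sigma_k + y_k\big(1-\varepsilon a_k\sqrt{g^THg}/\sigma_k^2\big) \ge \tfrac12 y_k$ using $\varepsilon < \varepsilon_0$. Your ``self-limiting'' argument instead observes that the dangerous case $e_k^TH^2u<0$ forces, via Cauchy--Schwarz in the $H$-inner product, both $|e_k^TH^2u| < \alpha_k\sqrt{u^TH^3u}$ and $\alpha_k > u^TH^2u/\sqrt{u^TH^3u} \ge \sqrt{u^TH^3u}/\lambda_{max}$, so that $\hat\varepsilon_k|e_k^TH^2u| < \alpha_k^2$ once $\hat\varepsilon_k \le 1/\lambda_{max}$; I checked this chain and it closes, including the strict inequality needed for $p_{k+1}>0$. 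Your variant is arguably cleaner --- it avoids introducing the threshold $\delta$ and makes explicit why a single $\varepsilon_0$ independent of the initial alignment $w_0^Tg/\sigma_0$ suffices --- but it buys nothing beyond what the paper's case split already delivers, and both routes feed the same two-sided estimate into Lemma~\ref{lemma:convergence_basic}.
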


\textbf{Remark 1:}
If we set $a_0=0$, then we have $w_1 = w_0, a_1 = \varepsilon_a \tfrac{w_0^Tg}{\sigma_0}$, hence $a_1 w_1^Tg >0$ provided $w_0^Tg\neq0$.

\textbf{Remark 2:}
For the case of $\varepsilon_a \in (1,2)$, if the initial value satisfies an additional condition $0<|a_0|\le \varepsilon_a \tfrac{|w_0^Tg|}{\sigma_0}$, then we have $(a_k,w_k)$ converging to a global minimizer as well.

\begin{proof}

Without loss of generality, we only consider the case of $a_0>0, w_0^Tg>0, \|w_0\|\ge 1$.

(1)
We will prove $a_k>0, w_k^Tg>0$ for all $k$. Denote $y_k := w_k^T g$, $\delta=\tfrac{\|g\|}{4\kappa}$.

On the one hand, if $a_k>0, 0 < y_k <2 \delta$, then
\begin{align}%\label{eq:}
    y_{k+1} \ge y_k + \varepsilon \tfrac{a_k}{\sigma_k}\tfrac{\|g\|^2}{2} \ge y_k.
\end{align}
On the other hand, when $a_k>0, y_k>0, \varepsilon < \varepsilon_0$, we have
\begin{align}%\label{eq:}
    y_{k+1}
    &\ge
    \varepsilon \tfrac{a_k \|g\|^2}{\sigma_k}
    +
    y_k
    \Big( 1- \varepsilon \tfrac{a_k}{\sigma_k^2} \sqrt{g^THg}\Big)
    \ge
    \tfrac12 y_k,\\
    a_{k+1}
    &\ge
    \min\{a_k, y_k/\sigma_k\}.
\end{align}
As a consequence, we have $a_k>0, y_k \ge \delta_y := \min\{y_0, \delta\}$ for all $k$ by induction.

(2) We will prove the effective step size $\hat \varepsilon_k$ satisfies the condition in Lemma~\ref{lemma:convergence_basic}.

Since $a_k $ is bounded, $\varepsilon<\varepsilon_0$, we have
\begin{align}%\label{eq:}
    \hat \varepsilon_k &:= \varepsilon \tfrac{a_k}{\sigma_k} \tfrac{w_k^Tg}{\sigma_k^2}
    \le \tfrac{\varepsilon \tilde C_a \lambda_{max}}{\lambda_{min}\|w_k\|^2}
    \le \varepsilon \tilde C_a \kappa
    =:
    \hat \varepsilon^+
    < \tfrac{1}{2\lambda_{max}},
\end{align}
and
\begin{align}%\label{eq:}
q_{k+1}
\le
(1-\hat \varepsilon_k \lambda_{min})^2 q_k
\le
(1-\hat \varepsilon_k \lambda_{min}) q_k
< q_k.
\end{align}
which implies $\tfrac{w_{k+1}^Tg}{\sigma_{k+1}} \ge \tfrac{w_k^Tg}{\sigma_k} \ge \tfrac{w_0^Tg}{\sigma_0}$.
Furthermore, we have $a_k \ge \min\{a_0, \tfrac{w_0^Tg}{\sigma_0}\}$,
and there is a positive constant $\varepsilon^->0$ such that
\begin{align}%\label{eq:}
    \hat \varepsilon_k
    \ge
    \varepsilon \tfrac{a_k}{\lambda_{max}\|w_k\|^2} \tfrac{w_k^Tg}{\sigma_k} \ge \tfrac{\varepsilon^-}{\|w_k\|^2}.
\end{align}

(3) Employing the Lemma~\ref{lemma:convergence_basic}, we conclude that $(a_k, w_k)$ converges to a global minimizer.
\end{proof}

\begin{lemma}
\label{lemma:convergence_BNGD_small2}
If step size satisfies $\varepsilon_a \in (0,1], \varepsilon/\|w_0\|^2 < \varepsilon_0$, then the sequence $(a_k,w_k)$ converges.
\end{lemma}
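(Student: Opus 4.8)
The plan is to reduce everything to Lemma~\ref{lemma:convergence_BNGD_small}, which already yields convergence (to a global minimizer) as soon as $a_0 w_0^T g>0$, by showing that every remaining initial configuration either is a fixed point, or enters the region $\{a_k w_k^T g>0\}$ after finitely many steps, or else converges to a saddle point. Throughout I keep the normalization $\|u\|=1$ afforded by the scaling property, and I exploit the symmetry $(a,w)\mapsto(-a,-w)$ of the iteration (\ref{eq:app:BNGD}), which leaves $a_k w_k^T g$ invariant and lets me assume $a_0\ge 0$. The structural facts I use repeatedly are: $\|w_k\|$ is monotone increasing (from $w_k^T(w_{k+1}-w_k)=0$), so $\varepsilon/\|w_k\|^2\le\varepsilon/\|w_0\|^2<\varepsilon_0$ for all $k$ and the effective step sizes $\hat\varepsilon_k$ stay uniformly small in magnitude; $a_k$ is bounded (Lemma~\ref{lemma:boundedness_a}); and $q_k=\|e_k\|_H^2=u^THu-\alpha_k^2$ is confined to $[0,u^THu]$, where $\alpha_k=w_k^Tg/\sigma_k$.

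First I would dispose of the degenerate cases $a_0 w_0^T g=0$. If $a_0=0=w_0^Tg$ then $(a_k,w_k)\equiv(a_0,w_0)$ is a fixed point and convergence is trivial. If exactly one of them vanishes, a single iteration lands in (or adjacent to) the good region: when $a_0=0$, $w_0^Tg\neq0$ one has $w_1=w_0$ and $a_1=\varepsilon_a w_0^Tg/\sigma_0$, hence $a_1 w_1^Tg>0$ (this is Remark 1); when $w_0^Tg=0$, $a_0>0$ one has $w_1=w_0+\varepsilon\tfrac{a_0}{\sigma_0}g$, so $w_1^Tg=\varepsilon\tfrac{a_0}{\sigma_0}\|g\|^2>0$ while $a_1=(1-\varepsilon_a)a_0\ge0$, which is again the good case (or, if $\varepsilon_a=1$, reduces to the previous sub-case at $k=1$). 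Since $\|w_1\|\ge\|w_0\|$, the hypothesis $\varepsilon/\|w_1\|^2<\varepsilon_0$ is preserved and Lemma~\ref{lemma:convergence_BNGD_small} applies from $k=1$.

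The substantive case is $a_0>0$, $w_0^Tg<0$, where $\hat\varepsilon_0<0$ and therefore $q_k$ may \emph{increase}. Here I would track the pair $(a_k,\alpha_k)$. Because $\varepsilon$ is small and $\|w_{k+1}-w_k\|=\varepsilon\tfrac{|a_k|}{\sigma_k}\|He_k\|$ is $O(\varepsilon)$, the quantity $\alpha_k$ varies slowly, while the averaging recursion $a_{k+1}=(1-\varepsilon_a)a_k+\varepsilon_a\alpha_k$ pulls $a_k$ monotonically toward the negative value $\alpha_k$. This yields a dichotomy. Either $\alpha_k$ stays bounded away from $0$, in which case $a_k$ must cross zero after finitely many steps and thereafter $a_k\alpha_k>0$, so the trajectory enters $\{a_kw_k^Tg>0\}$ and Lemma~\ref{lemma:convergence_BNGD_small} finishes the argument from that step; or $\alpha_k\to0$, in which case the fixed-point relation of the averaging recursion forces $a_k\to0$ as well, so $(a_k,w_k)$ approaches a saddle point. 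During the finite transient in the first alternative the growth of $q_k$ is controlled by $q_{k+1}\le(1+|\hat\varepsilon_k|\lambda_{max})^2 q_k$ with $\sum_k|\hat\varepsilon_k|$ small, so $q_k$ cannot leave $[0,u^THu]$ and no blow-up occurs.

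The main obstacle lies in the second alternative: showing that $\alpha_k\to0$ and $a_k\to0$ actually imply convergence of $w_k$ itself, not merely of $a_k$ and of the scalar $\alpha_k$. For this I would first establish that $\|w_k\|$ is bounded, from $\|w_{k+1}\|^2-\|w_k\|^2$ being a constant multiple of $a_k^2 q_k/\|w_k\|^2$, and then upgrade the resulting square-summability of the increments to genuine summability of $\|w_{k+1}-w_k\|$, which is exactly the ``after some work'' step and is where a finer estimate on the decay rate of $a_k$ (not merely $a_k\to 0$) is needed to make $\{w_k\}$ Cauchy. A second delicate point is quantifying ``$\alpha_k$ varies slowly'' relative to the fixed averaging rate $\varepsilon_a$: one must ensure the $O(\varepsilon)$ drift of $\alpha_k$ over the $O(1/\varepsilon_a)$ steps needed for $a_k$ to change sign does not spoil the crossing, and it is precisely here that the smallness built into $\varepsilon_0$ is consumed.
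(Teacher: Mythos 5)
Your reduction is the same as the paper's: after Lemma~\ref{lemma:convergence_BNGD_small} (and your handling of the degenerate cases, which matches the paper's Remark~1), the only case left is the one where $a_k w_k^Tg \le 0$ persists for \emph{all} $k$. But your treatment of that case has a genuine gap, and you have in fact flagged it yourself: you never prove that $w_k$ converges there. Your dichotomy on $\alpha_k = w_k^Tg/\sigma_k$ is not exhaustive ($\alpha_k$ may satisfy $\liminf_k|\alpha_k|=0$ without converging to $0$ and without staying bounded away from $0$), and even granting the alternative $\alpha_k\to 0$, the resulting $a_k\to 0$ only gives $\|w_{k+1}-w_k\| = O(|a_k|)$, which is not summable without a rate (e.g.\ $a_k\sim 1/k$ would leave $\{w_k\}$ free to wander). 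The ``finer estimate on the decay rate of $a_k$'' that you say is needed is precisely the substance of the proof, and it does not come from refining your dichotomy.

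The paper's missing ingredient is this: under the standing assumption $a_kw_k^Tg<0$ for all $k$, one shows $|a_{k+1}|\le r|a_k|$ for a fixed $r\in(0,1)$, by splitting on whether $a_{k+1}$ keeps or flips the sign of $a_k$. If the sign is kept, then $|a_{k+1}| = (1-\varepsilon_a)|a_k| - \varepsilon_a|w_k^Tg|/\sigma_k \le (1-\varepsilon_a)|a_k|$. If the sign flips, then because $a_{k+1}w_{k+1}^Tg<0$ must still hold, $w_{k+1}^Tg$ must also flip sign relative to $w_k^Tg$; writing out the increment $w_{k+1}^Tg-w_k^Tg$ this forces $\tfrac{|w_k^Tg|}{|a_k|\sigma_k} \le 2\varepsilon\kappa\lambda_{max} < 1$ (this is exactly where the smallness $\varepsilon/\|w_0\|^2<\varepsilon_0$ is consumed), whence $|a_{k+1}|/|a_k| = \varepsilon_a\tfrac{|w_k^Tg|}{|a_k|\sigma_k} - (1-\varepsilon_a)$ is bounded by a constant strictly below $1$. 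Geometric decay of $|a_k|$ then makes $\sum_k\|w_{k+1}-w_k\|$ finite, so $w_k$ is Cauchy and the trajectory converges (to a saddle). This sign-flip bookkeeping, which exploits that the region $\{aw^Tg\le 0\}$ is never left, is the idea absent from your proposal; without it, or some substitute quantitative decay of $a_k$, the argument does not close.
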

\begin{proof}
Thanks to Lemma~\ref{lemma:convergence_BNGD_small}, we only need to consider the case of $a_k w_k^Tg \le 0$ for all $k$, and we will prove the iteration converges to a saddle point in this case. Since the case of $a_k=0$ or $w_k^Tg=0$ is trivial, we assume $a_k w_k^Tg < 0$ below. More specifically , we will prove $|a_{k+1}| < r |a_k|$ for some constant $r \in (0,1)$, which implies convergence to a saddle point.

(1) If $a_k$ and $a_{k+1}$ have a same sign, hence different sign with $w_k^Tg$, then we have
$|a_{k+1}| = |1-\varepsilon_a\|a_k| - \varepsilon_a|w_k^Tg|/\sigma_k \le |1-\varepsilon_a\|a_k|$.

(2) If $a_k$ and $a_{k+1}$ have different signs, then we have
\begin{align}%\label{eq:}
    \tfrac{|w_k^Tg|}{|a_k\sigma_k|}
    \le
    \varepsilon \tfrac{1}{\sigma_k^2} \Big( \|g\|^2 - \tfrac{w_k^Tg}{\sigma_k^2} g^THw_k \Big)
    \le
    2\varepsilon \kappa\lambda_{max}
    <
    1.
\end{align}
Consequently, we get
\begin{align}%\label{eq:}
    \tfrac{|a_{k+1}|}{|a_k|}
    =
    \varepsilon_a \tfrac{|w_k^Tg|}{|a_k\sigma_k|} - (1-\varepsilon_a)
    \le
    2\varepsilon\varepsilon_a \kappa \lambda_{max} - (1-\varepsilon_a) < \varepsilon_a \le 1.
\end{align}

(3) Setting $r := \max(|1-\varepsilon_a|, 2\varepsilon\varepsilon_a \kappa \lambda_{max} - (1-\varepsilon_a))$, we finish the proof.
\end{proof}

To simplify our proofs for Theorem~\ref{th:convergence_BNGD}, we give two lemmas which are obvious but useful.
\begin{lemma}
\label{lemma:series2zero}
If positive series $f_k, h_k$ satisfy $f_{k+1} \le r f_k + h_k, r\in(0,1)$ and $\lim\limits_{k\to \infty}h_k = 0$, then $\lim\limits_{k\to \infty}f_k = 0$.
\end{lemma}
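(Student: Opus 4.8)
The plan is to unroll the recursion in closed form and then show each resulting piece vanishes. Iterating the inequality $f_{k+1} \le r f_k + h_k$ yields the bound
\begin{align}
    f_k \le r^k f_0 + \sum_{j=0}^{k-1} r^{k-1-j} h_j.
\end{align}
The first term $r^k f_0$ tends to $0$ because $r \in (0,1)$ and $f_0$ is a fixed constant, so the problem reduces entirely to controlling the convolution of the summable geometric weights $r^m$ against the null sequence $h_j$.

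To handle that sum, I would fix $\varepsilon > 0$ and choose $N$ so that $h_j < \varepsilon$ for all $j \ge N$ (possible since $h_k \to 0$), then split the sum at index $N$. The tail satisfies $\sum_{j=N}^{k-1} r^{k-1-j} h_j \le \varepsilon \sum_{m=0}^{\infty} r^m = \tfrac{\varepsilon}{1-r}$, a bound uniform in $k$. The head factors as $\sum_{j=0}^{N-1} r^{k-1-j} h_j = r^{k-N}\sum_{j=0}^{N-1} r^{N-1-j} h_j$, which is a fixed finite constant times $r^{k-N} \to 0$. Letting $k \to \infty$ and then $\varepsilon \to 0$, we obtain $\limsup_k f_k \le \tfrac{\varepsilon}{1-r}$ for every $\varepsilon$, so $\limsup_k f_k = 0$; since $f_k \ge 0$ this gives $f_k \to 0$.

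A slicker alternative avoids the explicit convolution estimate. First one shows $f_k$ is bounded: since $h_k \to 0$ it is bounded by some $M$, and induction on $f_{k+1} \le r f_k + M$ gives $f_k \le r^k f_0 + \tfrac{M}{1-r}$, so $L := \limsup_k f_k < \infty$. Taking $\limsup$ of both sides of the recursion and using that $h_k$ is convergent (hence $\limsup(r f_k + h_k) = r\,\limsup_k f_k + \lim_k h_k$ for $r>0$) gives $L \le rL + 0$, i.e. $(1-r)L \le 0$, forcing $L \le 0$ and therefore $L = 0$.

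Neither route presents a genuine obstacle — this is a routine stability lemma for contractive recursions with a vanishing forcing term. The only points requiring mild care are the correct index-splitting of the convolution sum in the first approach (ensuring the head factors out a power of $r$ that decays while the residual sum stays a fixed finite constant), and, in the second approach, establishing boundedness of $f_k$ before invoking the $\limsup$ algebra so that the manipulation $\limsup(r f_k + h_k) = r\,\limsup f_k$ is legitimate.
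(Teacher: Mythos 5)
Your proposal is correct and takes essentially the same route as the paper: the paper simply asserts that the comparison sequence $b_{k+1}=rb_k+h_k$ tends to zero and calls the rest obvious, and your first argument (unrolling the recursion and splitting the geometric convolution against the null sequence $h_j$) is exactly the standard verification of that assertion. Both of your routes are valid; no gaps.
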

\begin{proof}
It is obvious, because the series $b_k$ defined by
$b_{k+1} = r b_k + h_k, b_0 >0$, tends to zeros.
\end{proof}

\begin{lemma}[Separation property]
\label{lemma:separation}
For $\delta_0$ small enough, the set $S := \{ w | y^2 q < \delta_0,\|w\|\ge 1\}$ is composed by two separated parts: $S_1$ and $S_2$, $dist(S_1,S_2) >0$, where in the set $S_1$ one has $y^2 < \delta_1, q > \delta_2$, and in $S_2$ one has $q < \delta_2, y^2 > \delta_1$  for some $\delta_1 >0, \delta_2 >0$. Here $y := w^Tg, q := u^THu - \tfrac{(w^THu)^2}{w^THw} = u^THu - \tfrac{y^2}{w^THw}$.
\end{lemma}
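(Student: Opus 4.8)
The plan is to exploit the fact that $q$ depends only on the \emph{direction} of $w$ while $y$ scales with $\|w\|$, which reduces the whole statement to a compactness argument on the unit sphere. Writing $w = \|w\|\,v$ with $\|v\|=1$, and using $g = Hu$, one checks that $q = u^THu - (v^THu)^2/(v^THv)$ is scale-invariant, i.e. $q = q(v)$, whereas $y = \|w\|\,(v^THu)$. Hence $y^2 q = \|w\|^2 f(v)$, where $f(v) := (v^THu)^2\, q(v) \ge 0$ is continuous on $\mathbb{S}^{d-1}$. The separation statement then becomes a statement about the sublevel set $\{f < \delta_0\}$ of $f$ on the sphere, transported back to $w$-space under the constraint $\|w\|\ge 1$.

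The conceptual heart is that the zero set of $f$ splits into two disjoint compact pieces. Indeed $f(v)=0$ forces either $v^THu=0$ or $q(v)=0$, and for $\|w\|\ge 1$ these are mutually exclusive: on $Z_1 := \{v:\|v\|=1,\ v^THu=0\}$ one has $q(v)=u^THu>0$, while on $Z_2 := \{\pm u/\|u\|\}$ (where $v\parallel u$, so $q(v)=0$) one has $(v^THu)^2=(u^THu/\|u\|)^2>0$. Since $H\succ 0$ and $u\neq 0$, $Z_1\cap Z_2=\emptyset$; both are closed, so $\mathrm{dist}(Z_1,Z_2)>0$ by compactness. This disjointness is the rigorous form of the heuristic that $y$ and $q$ cannot both vanish.

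I would then quantify this. Pick disjoint open neighborhoods $V_1\supset Z_1$, $V_2\supset Z_2$ small enough (by continuity of $q$ and $v\mapsto(v^THu)^2$) that $q(v)>\tfrac12 u^THu=:\delta_2$ on $V_1$, and $q(v)<\delta_2$ together with $(v^THu)^2>\delta_1$ on $V_2$, for a suitable $\delta_1>0$. On the compact complement $K:=\mathbb{S}^{d-1}\setminus(V_1\cup V_2)$, which is disjoint from the zero set of $f$, we have $f\ge m>0$. Choosing $\delta_0<\min\{m,\ \delta_1\delta_2\}$, any $w\in S$ satisfies $f(v)\le \|w\|^2 f(v)=y^2 q<\delta_0<m$ (using $\|w\|\ge1$), so its direction $v$ lies in $V_1$ or $V_2$. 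This induces $S=S_1\sqcup S_2$ by direction. On $S_1$: $q=q(v)>\delta_2$, and $y^2 q<\delta_0$ gives $y^2<\delta_0/\delta_2\le\delta_1$. On $S_2$: $q<\delta_2$, and $y^2=\|w\|^2(v^THu)^2\ge(v^THu)^2>\delta_1$, as required.

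Finally, Euclidean separation follows from angular separation plus the norm bound. Directions of $S_1$, $S_2$ lie in the disjoint sets $V_1$, $V_2$, so the angle $\theta$ between any $v_1\in V_1$ and $v_2\in V_2$ is bounded below by some $\theta_0>0$. For $w_i=\rho_i v_i$ with $\rho_i\ge1$, the identity $\|w_1-w_2\|^2=\rho_1^2+\rho_2^2-2\rho_1\rho_2\cos\theta\ge 2\rho_1\rho_2(1-\cos\theta_0)\ge 2(1-\cos\theta_0)>0$ yields $\mathrm{dist}(S_1,S_2)>0$. The main obstacle I anticipate is the bookkeeping in the third step: ordering the choices of $V_1,V_2,\delta_1,\delta_2,\delta_0$ consistently so that the ``small-$y$/large-$q$'' and ``small-$q$/large-$y$'' descriptions hold simultaneously (and, if one insists on the strict inequalities as stated, shrinking the constants slightly); the genuinely structural ingredient, however, is just the disjointness of the two zero sets of $f$.
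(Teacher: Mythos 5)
Your proof is correct, and it is substantially more than the paper provides: the paper's ``proof'' of this lemma consists of a single sentence (``the proof is based on $H$ being positive'') plus a reference to a figure illustrating the geometry, so your compactness argument is essentially a rigorous completion of the intended geometric picture rather than a different route. The key structural observation -- that $q$ is scale-invariant while $y$ scales with $\|w\|$, so that $y^2q=\|w\|^2 f(v)$ with $f(v)=(v^THu)^2q(v)$ continuous on the sphere, and that the zero set of $f$ splits into the two disjoint compact pieces $\{v^THu=0\}$ and $\{v\parallel u\}$ (disjoint precisely because $u^THu>0$, i.e.\ because $H$ is positive definite) -- is exactly the content the paper gestures at, and your quantification via a uniform lower bound $m$ on the complement of the neighborhoods, together with the choice $\delta_0<\min\{m,\delta_1\delta_2\}$ and the use of $\|w\|\ge 1$, is sound. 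One small point to make explicit: for the final angular-separation step you need $\mathrm{dist}(V_1,V_2)>0$, which does not follow from mere disjointness of open sets; you should take $V_1,V_2$ to be metric $\epsilon$-neighborhoods of $Z_1,Z_2$ with $\epsilon<\tfrac13\,\mathrm{dist}(Z_1,Z_2)$ (your ``small enough'' caveat covers this, but it is the one place where the bookkeeping genuinely matters). With that made precise, the estimate $\|w_1-w_2\|^2\ge 2\rho_1\rho_2(1-\cos\theta_0)\ge 2(1-\cos\theta_0)$ correctly converts angular separation plus the norm constraint into Euclidean separation.
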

\begin{proof}
The proof is based on $H$ being positive. The geometric meaning is  illustrated in Figure~\ref{fig:separation}.
\begin{figure}[thb!]
  % Requires \usepackage{graphicx}
  \center
  \includegraphics[width=7cm]{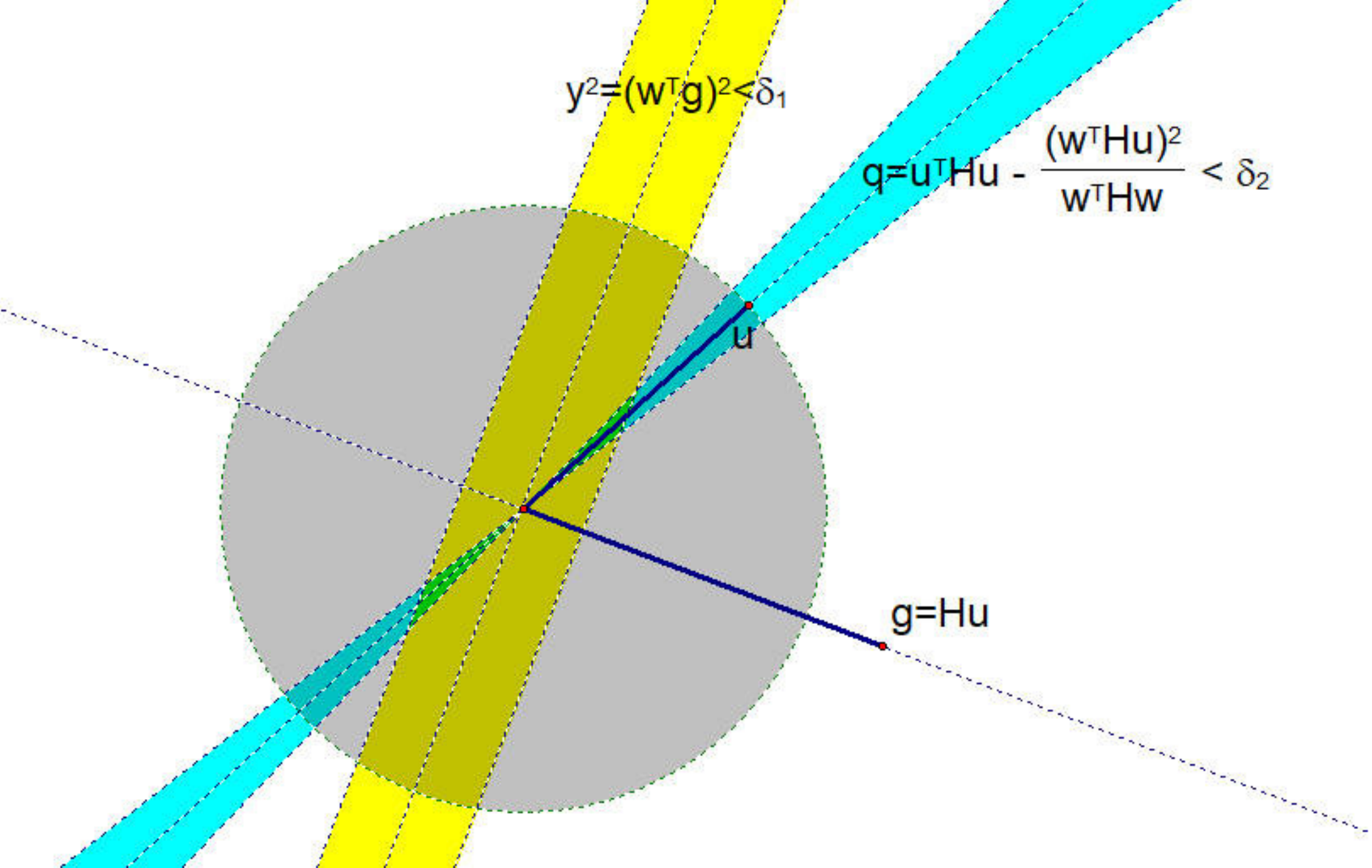}\\
  \caption{The geometric meaning of the separation property }\label{fig:separation}
\end{figure}
\end{proof}

\begin{corollary}
\label{cor:summable_either}
If $\lim\limits_{k\to\infty} \|w_{k+1}-w_k\|=0$, and $\lim\limits_{k\to\infty} (w_k^Tg)^2 q_k =0$, then either $ \lim\limits_{k\to\infty} (w_k^Tg)^2 =0 $
or $\lim\limits_{k\to\infty} q_k =0$.
\end{corollary}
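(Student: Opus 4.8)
The plan is to treat this as a standard topological continuation argument: the hypothesis $(w_k^Tg)^2 q_k \to 0$ forces the tail of $\{w_k\}$ into the sublevel set $S = \{ w : y^2 q < \delta_0, \|w\|\ge 1\}$, the Separation Property (Lemma~\ref{lemma:separation}) splits $S$ into two pieces a fixed positive distance apart, and the vanishing-increment hypothesis $\|w_{k+1}-w_k\|\to 0$ prevents the sequence from ever jumping between the two pieces once the increments are small. Confining the tail to a single piece then immediately yields one of the two limits.

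Concretely, I would first fix a single $\delta_0>0$ small enough that Lemma~\ref{lemma:separation} applies, producing the disjoint decomposition $S=S_1\cup S_2$ together with definite positive constants $\delta_1,\delta_2$ and $d:=dist(S_1,S_2)>0$. Since the weight norm is monotone increasing (recall $\|w_{k+1}\|^2=\|w_k\|^2+\varepsilon^2\tfrac{a_k^2}{\sigma_k^2}\|He_k\|^2\ge\|w_k\|^2$, so $\|w_k\|\ge 1$ in the normalized setting $\|w_0\|\ge1$), the norm constraint in the definition of $S$ is automatic. Because $y_k^2 q_k\to 0$, there is a $K_0$ with $y_k^2 q_k<\delta_0$ for all $k\ge K_0$, hence $w_k\in S$ for all such $k$.

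Next I would use $\|w_{k+1}-w_k\|\to 0$ to choose $K_1$ with $\|w_{k+1}-w_k\|<d$ for all $k\ge K_1$, and set $K=\max(K_0,K_1)$. For $k\ge K$ both $w_k$ and $w_{k+1}$ lie in $S$ and are separated by less than $d$; since $S_1$ and $S_2$ are at distance $d$ from one another, $w_k$ and $w_{k+1}$ must belong to the \emph{same} component. By induction the entire tail $\{w_k:k\ge K\}$ lies wholly in $S_1$ or wholly in $S_2$. In the first case $q_k>\delta_2$ for all $k\ge K$, so $y_k^2\le (y_k^2 q_k)/\delta_2\to 0$; in the second case $y_k^2>\delta_1$, so $q_k\le (y_k^2 q_k)/\delta_1\to 0$. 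In either case one of the two limits vanishes, which is exactly the claim.

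The genuine content of the statement lives entirely in the Separation Property, which I am taking as given; the corollary itself is a clean consequence and I do not expect a real obstacle. The only points requiring care are bookkeeping ones: fixing \emph{one} $\delta_0$ so that $\delta_1,\delta_2,d$ are fixed positive constants independent of $k$ (so that the bounds $y_k^2\le (y_k^2q_k)/\delta_2$ and $q_k\le (y_k^2q_k)/\delta_1$ are legitimate), and verifying $\|w_k\|\ge 1$ so Lemma~\ref{lemma:separation} is applicable; both follow immediately from the monotonicity of $\|w_k\|$ in the normalized regime.
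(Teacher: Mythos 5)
Your proposal is correct and follows essentially the same argument as the paper: use the hypothesis $(w_k^Tg)^2 q_k \to 0$ to trap the tail in $S$, use $\|w_{k+1}-w_k\|\to 0$ together with $\mathrm{dist}(S_1,S_2)>0$ to confine the tail to a single component, and read off the corresponding limit. Your version is, if anything, slightly more careful than the paper's (making the final bounds $y_k^2 \le (y_k^2 q_k)/\delta_2$ and $q_k \le (y_k^2 q_k)/\delta_1$ explicit and checking the $\|w_k\|\ge 1$ hypothesis via the monotone norm), but the route is identical.
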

\begin{proof}

Denote $y_k:= w_k^Tg$.
According to the separation property (Lemma~\ref{lemma:separation}), we can chose a $\delta_0>0$ small enough such that the separated parts of the set $S := \{ w |y^2 q < \delta_0,\|w\|\ge 1\}$,  $S_1$ and $S_2$,  have  $dist(S_1,S_2) > 0$.

Because $y_k^2 q_k$ tends to zero, we have $w_k$ belongs to $S$ for $k$ large enough, for instance $k>k_1$.
On the other hand, because $\|w_{k+1}-w_k\|$ tends to zero, we have  $\|w_{k+1}-w_k\| < dist(S_1,S_2)$ for $k$ large enough, for instance $k>k_2$.
Then consider $k > k_3:=\max(k_1,k_2)$, we have
all $w_k$ belongs to the same part $S_1$ or $S_2$.

If $w_k \in S_1$, ($q_k >\delta_2$), for all $k>k_3$, then we have
$\lim\limits_{k\to\infty} (w_k^Tg)^2=0$.

On the other hand, if $w_k \in S_2$, ($y_k^2 > \delta_1$), for all $k>k_3$, then we have $\lim\limits_{k\to\infty} q_k=0$.

\end{proof}

\begin{theorem}
\label{th:convergence_BNGD_general}
Let $\varepsilon_a \in (0,1]$ and $\varepsilon>0$. The sequence $(a_k,w_k)$ converges for any initial value $(a_0,w_0)$.
\end{theorem}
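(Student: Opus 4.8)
The plan is to upgrade the small-step-size convergence already in hand (Lemma~\ref{lemma:convergence_BNGD_small2}) to arbitrary $\varepsilon>0$ by exploiting two structural facts: the norm $\|w_k\|$ is monotone increasing, and the scaling property (Proposition~\ref{prop:scaling}, part~2) lets one trade a large weight norm for a small effective step size. Throughout I use $\|u\|=1$, the boundedness of $a_k$ (Lemma~\ref{lemma:boundedness_a}), and the Pythagorean identity $\|w_{k+1}\|^2=\|w_k\|^2+\varepsilon^2\tfrac{a_k^2}{\sigma_k^2}\|He_k\|^2$ from Eq.~\eqref{eq:normw_ineq}, which shows $\{\|w_k\|\}$ is nondecreasing and hence either converges to a finite limit or diverges to $+\infty$.

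First I would rule out divergence. Suppose $\|w_k\|\to\infty$. Since the iteration is autonomous, I may treat $(a_K,w_K)$ as a fresh initial condition for any $K$, and by the scaling property the configuration $\{H,u,a_K,w_K,\varepsilon_a,\varepsilon\}$ is equivalent, via $r=1/\|w_K\|$, to $\{H,u,a_K,w_K/\|w_K\|,\varepsilon_a,\varepsilon/\|w_K\|^2\}$, whose initial weight has unit norm. Choosing $K$ so large that $\varepsilon/\|w_K\|^2<\varepsilon_0$, Lemma~\ref{lemma:convergence_BNGD_small2} applies to this rescaled system, so its weight norm, and therefore $\|w_{K+k}\|$, converges as $k\to\infty$. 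This contradicts $\|w_k\|\to\infty$. Hence $\|w_k\|$ converges to a finite, positive limit, and in particular $\sigma_k$ stays in $[\sqrt{\lambda_{min}}\,\|w_0\|,\sqrt{\lambda_{max}}\,\|w_\infty\|]$, bounded away from $0$ and $\infty$.

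Next I would turn norm-convergence into convergence of the iterate. Summing the Pythagorean identity gives $\sum_k\tfrac{a_k^2}{\sigma_k^2}\|He_k\|^2<\infty$, so $\|w_{k+1}-w_k\|\to0$ and $\tfrac{a_k^2}{\sigma_k^2}\|He_k\|^2\to0$. Writing $\alpha_k:=w_k^Tg/\sigma_k$, the vanishing increments force $\alpha_{k+1}-\alpha_k\to0$, and since $a_{k+1}-\alpha_{k+1}=(1-\varepsilon_a)(a_k-\alpha_k)+(\alpha_k-\alpha_{k+1})$ with $1-\varepsilon_a\in[0,1)$, Lemma~\ref{lemma:series2zero} yields $a_k-\alpha_k\to0$. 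Feeding this back, and using $\|He_k\|^2\ge\lambda_{min}q_k$ together with the boundedness of $\sigma_k$, I obtain $\alpha_k^2q_k\to0$, i.e. $(w_k^Tg)^2q_k\to0$. Corollary~\ref{cor:summable_either} (built on the separation Lemma~\ref{lemma:separation}) then splits the analysis: either $q_k\to0$ or $(w_k^Tg)^2\to0$. In the first case $e_k\to0$, so $(w_k^Tg/\sigma_k^2)w_k\to u$; combined with $\|w_k\|\to\|w_\infty\|$ the scalar $w_k^Tg/\sigma_k^2$ has nonzero modulus limit, and because $\|w_{k+1}-w_k\|\to0$ forbids oscillation between the $\pm$ directions, $w_k$ converges to a nonzero multiple of $u$ and $a_k\to\alpha_k$ converges, giving a global minimizer.

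The delicate part is the second (saddle) alternative $(w_k^Tg)^2\to0$, where $a_k\to0$ and $w_k^Tg\to0$. Here vanishing increments alone are \emph{not} enough, because the saddle set $\{a=0,\ w^Tg=0\}$ is a $(d-1)$-dimensional continuum and the iterate could a priori drift along its neutral directions. To pin down a single limit I would establish two things. First, $a_k$ and $y_k:=w_k^Tg$ decay geometrically: near this set the coupled $(a,y)$ subsystem contracts along its stable directions (this is precisely the strict-saddle structure, and the contraction estimate is the bounded-$\|w_k\|$ analogue of the argument in Lemma~\ref{lemma:convergence_BNGD_small2}, the boundedness of $\|w_k\|$ now supplying the smallness that a small $\varepsilon$ provided there). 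Second, the component of the increment $w_{k+1}-w_k=\varepsilon\tfrac{a_k}{\sigma_k}\bigl(g-\tfrac{y_k}{\sigma_k^2}Hw_k\bigr)$ along the neutral directions $g^{\perp}$ is $O(a_ky_k)$, since the $g$-part contributes nothing perpendicular to $g$. Geometric decay then makes $\sum_k|a_ky_k|<\infty$, so the neutral drift is summable and $w_k$ is Cauchy, converging to a single saddle point. Combining the two alternatives proves convergence for every initial value; the degenerate case $g=0$ (where $w_k\equiv w_0$ and $a_k\to0$) is immediate. I expect this saddle endgame to be the main obstacle, as it is the only place where controlling the \emph{location} of the limit, rather than merely the decay of residuals, genuinely requires quantitative decay rates.
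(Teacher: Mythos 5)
Your overall route is the same as the paper's: rule out $\|w_k\|\to\infty$ by combining monotonicity of the norm with the scaling property and the small-step-size Lemma~\ref{lemma:convergence_BNGD_small2}; deduce $\sum_k a_k^2 q_k<\infty$, hence $\|w_{k+1}-w_k\|\to 0$ and $a_k-w_k^Tg/\sigma_k\to 0$ (via Lemma~\ref{lemma:series2zero}); conclude $(w_k^Tg)^2q_k\to 0$ and invoke Corollary~\ref{cor:summable_either} to split into the minimizer and saddle alternatives. Your treatment of the minimizer alternative is a correct and slightly more explicit version of what the paper asserts. You also correctly identify the weakest link: in the saddle alternative the paper simply states that $(a_k,w_k)$ ``converges to a saddle point,'' whereas the available estimates only give $a_k\to0$, $w_k^Tg\to 0$, $\|w_{k+1}-w_k\|\to0$ and $\sum_k a_k^2<\infty$, which does not by itself pin down a single limit point on the $(d-1)$-dimensional saddle continuum. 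Your observation that the $g^{\perp}$ drift per step is $O(|a_k y_k|)$, so that summability of $|a_k y_k|$ would make $w_k$ Cauchy, is a sensible way to close this.

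However, your proposed justification of the needed decay has a gap. You claim the $(a,y)$ subsystem contracts near the saddle set because ``the boundedness of $\|w_k\|$ now supplies the smallness that a small $\varepsilon$ provided'' in Lemma~\ref{lemma:convergence_BNGD_small2}. Boundedness of $\|w_k\|$ does not make the effective step size small: the contraction estimates in that lemma (e.g.\ $2\varepsilon\kappa\lambda_{max}<1$ after rescaling) require $\varepsilon/\|w_k\|^2<\varepsilon_0$, and the only regime in which the saddle endgame is actually needed is precisely $\|w_k\|^2<\varepsilon/\varepsilon_0$ for all $k$ --- otherwise, once the threshold is crossed, Lemma~\ref{lemma:convergence_BNGD_small2} together with the scaling property already yields full convergence of the tail. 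In that remaining regime the effective step size is bounded \emph{below} by $\varepsilon_0$, so the transplanted contraction argument does not apply, and the geometric decay of $a_k$ and $y_k$ (hence $\sum_k|a_ky_k|<\infty$) is not established. You would need an independent argument here, e.g.\ a genuine local linearization of the map near the saddle set at the limiting effective step size, or a summability bound such as $\sum_k y_k^2<\infty$ combined with $\sum_k a_k^2<\infty$ and Cauchy--Schwarz; as written, this step is asserted rather than proved (a criticism that, to be fair, applies equally to the paper's own one-line treatment of this case).
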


\begin{proof}
We will prove $\|w_k\|$ converges, and then prove $(a_k,w_k)$ converges as well.

(1) We prove that $\|w_k\|$ is bounded and hence converges.

In fact, according to the Lemma~\ref{lemma:convergence_BNGD_small2},
once $\|w_k\|^2 \ge \varepsilon/\varepsilon_0$ for some $k$, the rest of the iteration will converge, hence $\|w_k\|$ is bounded.

(2) We prove $\lim\limits_{k\to\infty} \|w_{k+1}-w_k\|=0$, and $\lim\limits_{k\to\infty} (w_k^Tg)^2 q_k = 0$.

The convergence of $\|w_k\|$ implies $\sum_{k} a_k^2 q_k$ is summable. As a consequence,
\begin{align}\label{eq:lim_ap}
    \lim_{k \to \infty}a_k^2 p_k=0, \lim_{k \to \infty}a_k e_k=0,
\end{align}
and $\lim\limits_{k\to\infty} \|w_{k+1}-w_k\|$ = 0. In fact, we have
\begin{align}%\label{eq:}
    \|w_{k+1}-w_k\|^2
    =
    \varepsilon^2 \tfrac{a_k^2}{\sigma^2} \|He_k\|^2
    \le
    \tfrac{\lambda_{max}\varepsilon^2}{\lambda_{min}^2} a_k^2 q_k
    \to 0.
\end{align}

Consider the iteration of series $|a_k - w_k^Tg/\sigma_k|$,
\begin{align}%\label{eq:}
    \Big|a_{k+1} - \tfrac{w_{k+1}^Tg}{\sigma_{k+1}}\Big|
    &\le
    \Big|a_{k+1} - \tfrac{w_{k+1}^Tg}{\sigma_k}\Big|
    +
    \Big|\tfrac{w_{k+1}^Tg}{\sigma_k} - \tfrac{w_{k+1}^Tg}{\sigma_{k+1}}\Big| \nonumber\\
    &\le
    (1-\varepsilon_a)\Big|a_k - \tfrac{w_k^Tg}{\sigma_k}\Big|
    +
    \varepsilon \tfrac{|a_k g^THe_k|}{\sigma_k^2}
    +
    \tfrac{|w_{k+1}^Tg|}{(\sigma_k\sigma_{k+1})}
    | \sigma_{k+1}-\sigma_{k}| \nonumber\\
    &\le
    (1-\varepsilon_a)\Big|a_k - \tfrac{w_k^Tg}{\sigma_k}\Big|
    +
    \varepsilon \tfrac{\|g\|_H \|a_k e_k\|_H}{\sigma_k^2}
    +
    \tfrac{|w_{k+1}^Tg|}{(\sigma_k\sigma_{k+1})}
    \varepsilon \tfrac{\lambda_{max}}{\sigma_k} \|a_k e_k\|_H \nonumber\\
    &\le
    (1-\varepsilon_a)\Big|a_k - \tfrac{w_k^Tg}{\sigma_k}\Big|
    +
    2 C \|a_ke_k\|_H \label{eq:ineq_ak}.
\end{align}
The constant $C$ in Eq.~(\ref{eq:ineq_ak}) can be chosen as
$C = \tfrac{\varepsilon \lambda_{max}\|u\|_H}{\lambda_{min}\|w_0\|^2}$.
Since $\|a_ke_k\|_H$ tends to zero, we can use Lemma~\ref{lemma:series2zero} to get $ \lim\limits_{k \to \infty}|a_k - w_k^Tg/\sigma_k| = 0$.
Combine the equation (\ref{eq:lim_ap}), then we have
$\lim\limits_{k \to \infty}(w_k^Tg)^2 p_k=0$.

(3) According to the Corollary~\ref{cor:summable_either}, we have either $\lim\limits_{k\to\infty} y_k^2 =0$,
or $\lim\limits_{k\to\infty} q_k =0$.
In the former case, the iteration of $(a_k, w_k)$ converges to a saddle point. However, in the latter case, $(a_k, w_k)$ converges to a global minimizer.
In both cases we have $(a_k,w_k)$ converges.

\end{proof}

To finish the proof of Theorem~\ref{th:convergence_BNGD}, we have to demonstrate the special case of $\varepsilon_a=1$ where the set of initial values such that BN iteration converges to saddle points is of Lebeguse measure zero. We leave this demonstration in next section where we consider the case of $\varepsilon_a\ge 1$.

\subsection{Impossibility of converging to strict saddle points}

In this section, we will prove the set of initial values such that BN iteration converges to saddle points is of Lebesgue measure zero, as long as $\varepsilon_a \ge 1$. The tools in our proof is similar to the analysis of gradient descent on non-convex objectives~\citep{Lee2016Gradient, Panageas2017Gradient}. In addition, we used the real analytic property of the BN loss function (\ref{eq:BN_loss}).

For brevity, here we denote $x:=(a,w)$ and let $\varepsilon_a = \varepsilon $, then the BN iteration can be rewritten as
$$x_{n+1} = T(x_n) := x_n - \varepsilon \nabla J(x_n).$$

\begin{lemma}
\label{lemma:measurable_T}
If $A \subset T(\mathbb{R}^d/\{0\})$ is a measure zero set, then the preimage $T^{-1}(A)$ is of measure zero as well.
\end{lemma}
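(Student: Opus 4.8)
The plan is to exploit that $T(x) = x - \varepsilon\nabla J(x)$ is real-analytic on the open domain $U := \{(a,w) : w \neq 0\}$, since the BN objective $J$ in \eqref{eq:BN_loss} (through $\sigma = \sqrt{w^THw}$, analytic where $w\neq 0$) is real-analytic there. Its Jacobian is $DT(x) = I - \varepsilon\nabla^2 J(x)$, and the strategy is the standard one underlying strict-saddle arguments: first show the singular set $Z := \{x \in U : \det DT(x) = 0\}$ has Lebesgue measure zero, and then use that off $Z$ the map $T$ is a local diffeomorphism, so that preimages of null sets are null.

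For the first step, I would observe that $\det DT(x) = \det(I - \varepsilon\nabla^2 J(x))$ is itself a real-analytic function on $U$, being the determinant of a matrix whose entries are real-analytic. A real-analytic function on a connected open set is either identically zero or has a zero set of Lebesgue measure zero; hence it suffices to exhibit a single point of $U$ at which $\det DT \neq 0$. (For $d\ge 2$ the set $U$ is connected; for $d=1$ one argues separately on each of its two components.) This non-vanishing is the main obstacle, and the delicate case is precisely the one of interest, $\varepsilon = \varepsilon_a = 1$. The tempting evaluation points — a global minimizer, or the limit $\|w\|\to\infty$ — both give a determinant carrying a factor $(1-\varepsilon)$ (from the entry $\partial^2 J/\partial a^2 = 1$ together with the off-diagonal block $A_{21}$ vanishing there), which is inconclusive at $\varepsilon=1$. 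I would instead evaluate at a generic point where $A_{21}\neq 0$: writing $DT$ in block form with corner $1-\varepsilon$, off-diagonal $-\varepsilon A_{21}$, and lower block $I-\varepsilon A_{22}$, a Schur-complement computation shows that at $\varepsilon=1$ the determinant equals $-\det(I-A_{22})\,A_{21}^T(I-A_{22})^{-1}A_{21}$, which is nonzero for generic $(a,w)$. This confirms $\det DT \not\equiv 0$, so $Z$ is null.

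For the second step, on $U\setminus Z$ the Jacobian $DT$ is invertible, so by the inverse function theorem $T$ is a local diffeomorphism; I would cover $U\setminus Z$ by countably many open sets $V_i$ on each of which $T$ restricts to a diffeomorphism onto its image. Each local inverse is $C^1$, hence Lipschitz on compact subsets, and therefore maps the null set $A \cap T(V_i)$ to a null set, giving $|T^{-1}(A)\cap V_i| = 0$. Summing over the countable cover yields $|T^{-1}(A)\cap(U\setminus Z)| = 0$, and since $T^{-1}(A)\cap Z \subseteq Z$ is null, we conclude $|T^{-1}(A)| = 0$, as required. The covering and Lipschitz arguments are routine; the only genuinely technical point is the non-vanishing of the analytic Jacobian at $\varepsilon = 1$, which is why I single it out above.
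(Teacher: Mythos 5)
Your proof is correct and follows the same overall strategy as the paper's --- show that $\det DT=\det(I-\varepsilon\nabla^2 J)$ vanishes only on a null set because it is real analytic, then conclude that preimages of null sets are null --- but you handle both ingredients more self-containedly, and in one place more carefully, than the paper does. For the second ingredient the paper simply invokes Theorem~3 of \citet{Ponomarev1987Submersions} (the $N^{-1}$-property of smooth maps with almost-everywhere nonvanishing Jacobian), whereas you reprove the special case needed here via a countable cover by local diffeomorphisms and the fact that locally Lipschitz inverses preserve null sets; this costs a routine covering argument but removes an external citation. For the first ingredient the paper asserts only that $\det(I-\varepsilon\nabla^2J(x))$ is real analytic on $\mathbb{R}^d\setminus\{0\}$ and stops there, leaving implicit the fact that a real analytic function can still be identically zero; you correctly identify that this must be ruled out, note the connectedness caveat, and observe that the obvious evaluation points are inconclusive at $\varepsilon=1$ because of the $(1-\varepsilon)$ corner entry. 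Your Schur-complement computation is right, and can be made fully concrete by evaluating at a point with $a=0$ and $w$ not parallel to $u$: there $A_{22}=0$ (it carries the prefactor $\tfrac{a}{\sigma^3}w^Tg$), so the determinant reduces to $-\|A_{21}\|^2=-\tfrac{1}{\sigma^2}\|He_0\|^2<0$, which is cleaner than appealing to genericity. In short, your route buys a self-contained argument and plugs a small gap the paper glosses over; the paper's route is shorter at the price of an external measure-theoretic theorem and an unverified non-degeneracy claim.
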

\begin{proof}

Since $T$ is smooth enough, according to Theorem 3 of~\citet{Ponomarev1987Submersions}, we only need to prove the Jacobian of $T(x)$ is nonzero for almost all $x\in\mathbb{R}^d$. In other words, the set $\{x: \det(I-\varepsilon\nabla^2J(x))=0\}$ is of measure zero. This is true because the function $\det(I-\varepsilon\nabla^2J(x))$ is a real analytic function of $x\in \mathbb{R}^d/\{0\}$. (Details of properties of real analytic functions can be found in \citet{Krantz2002primer}).

\end{proof}

\begin{lemma}
\label{lemma:strict_saddle_local}
Let $f: X \to \mathbb{R}$ be twice continuously differentiable in an open set $X \subset \mathbb{R}^d$ and $x^* \in X$ be a stationary point of $f$.
If $\varepsilon>0$, $\det(I-\varepsilon\nabla^2f(x^*)) \neq 0$ and the matrix $\nabla^2f(x^*)$ has at least a negative eigenvalue,
then there exist a neighborhood $U$ of $x^*$ such that the following set $B$ has measure zero,
\begin{align}%\label{eq:}
     B:= \{x_0 \in U : x_{n+1}= x_n - \varepsilon \nabla f(x_n) \in U, \forall n \ge 0\}.
\end{align}
\end{lemma}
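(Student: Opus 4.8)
The plan is to read this as a statement about the local dynamics of the gradient-descent map $T(x) := x - \varepsilon \nabla f(x)$ near its fixed point $x^*$, and to deduce it from the Center-Stable Manifold Theorem, exactly as in the strict-saddle analyses of \citet{Lee2016Gradient, Panageas2017Gradient}. Note that a fixed point of $T$ is precisely a stationary point of $f$, and the differential of $T$ at $x^*$ is $DT(x^*) = I - \varepsilon\nabla^2 f(x^*)$, so the hypotheses of the lemma translate directly into spectral conditions on $DT(x^*)$.

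First I would check that $T$ is a local $C^1$ diffeomorphism near $x^*$. Since $f \in C^2$, the map $T$ is $C^1$, and the hypothesis $\det(I-\varepsilon\nabla^2 f(x^*)) \neq 0$ says exactly that $DT(x^*)$ is invertible; the inverse function theorem then provides a neighborhood on which $T$ is a diffeomorphism onto its image. Next I would examine the spectrum of $DT(x^*)$: each eigenvalue $\mu$ of $\nabla^2 f(x^*)$ contributes an eigenvalue $1-\varepsilon\mu$ of $DT(x^*)$. Because $\nabla^2 f(x^*)$ has a negative eigenvalue $\mu<0$ and $\varepsilon>0$, the quantity $1-\varepsilon\mu>1$, so $DT(x^*)$ possesses at least one eigenvalue of modulus strictly greater than $1$; in other words the unstable subspace $E^u$ has $\dim E^u \ge 1$.

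With these two facts in hand, I would invoke the Center-Stable Manifold Theorem. Decomposing the tangent space as $E^s \oplus E^c \oplus E^u$ according to whether the eigenvalues of $DT(x^*)$ have modulus $<1$, $=1$, or $>1$, the theorem yields a neighborhood $U$ of $x^*$ together with an embedded $C^1$ center-stable manifold $W^{cs}_{loc}$ tangent to $E^s\oplus E^c$, whose dimension is $d - \dim E^u \le d-1$. Its defining trapping property is that any point $x_0 \in U$ whose entire forward orbit $\{x_n\}_{n\ge 0}$ remains in $U$ must lie on $W^{cs}_{loc}$; intuitively, a nonzero component along the expanding directions $E^u$ grows under forward iteration and forces the orbit out of $U$. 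Hence $B \subseteq W^{cs}_{loc}$, and since a $C^1$ submanifold of dimension strictly less than $d$ is Lebesgue-null, $B$ has measure zero.

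The step I expect to be most delicate is not a computation but the correct invocation of the manifold theorem: verifying that $T$ really is a local diffeomorphism (which is exactly what the non-degeneracy $\det(I-\varepsilon\nabla^2 f(x^*))\neq 0$ guarantees), and correctly counting $\dim W^{cs}_{loc} = d - \dim E^u < d$ from the spectral splitting, noting that eigenvalues of $DT(x^*)$ of modulus exactly $1$ are harmlessly absorbed into the center part $E^c$ and do not spoil the dimension count. This local statement is then the building block that, combined with Lemma~\ref{lemma:measurable_T} and a countable cover of the saddle set, upgrades to the global measure-zero conclusion.
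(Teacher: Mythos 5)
Your proposal is correct and follows essentially the same route as the paper's proof: establish that $T$ is a local $C^1$ diffeomorphism via the inverse function theorem using $\det(I-\varepsilon\nabla^2 f(x^*))\neq 0$, observe that the negative eigenvalue of the Hessian forces an eigenvalue of $DT(x^*)$ of modulus greater than $1$, and then apply the center-stable manifold theorem to conclude that $B$ lies in a submanifold of dimension at most $d-1$. Your write-up is in fact somewhat more explicit than the paper's (which defers details to the cited strict-saddle references), but the argument is the same.
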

\begin{proof}
The detailed proof is similar to~\citet{Lee2016Gradient, Panageas2017Gradient}.

Define the transform function as $F(x) := x-\varepsilon\nabla f(x)$.
Since $\det(I-\varepsilon\nabla^2f(x^*))\neq0$, according to the inverse function theorem, there exist a neighborhood $U$ of $x^*$ such that $T$ has differentiable inverse. Hence $T$ is a local $C^1$ diffeomorphism, which allow us to use the central-stable manifold theorem~\citep{Shub2013Global}. The negative eigenvalues of $\nabla^2f(x^*)$ indicates $\lambda_{max}(I-\varepsilon\nabla^2f(x^*)) > 1$ and the dimension of the unstable manifold is at least one, which implies the set $B$ is on a lower dimension manifold hence $B$ is of measure zero.

\end{proof}

\begin{lemma}
\label{lemma:strict_saddle}
If $\varepsilon_a = \varepsilon \ge 1$, then the set of initial values such that BN iteration converges to saddle points is of Lebeguse measure zero.
\end{lemma}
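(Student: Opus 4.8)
The plan is to exploit the fact that when $\varepsilon_a=\varepsilon\ge 1$ the BNGD update collapses to a genuine gradient-descent map $x_{n+1}=T(x_n)=x_n-\varepsilon\nabla J(x_n)$ on the variable $x=(a,w)$, so that the strict-saddle machinery of \citet{Lee2016Gradient, Panageas2017Gradient}, packaged here in Lemmas~\ref{lemma:measurable_T} and~\ref{lemma:strict_saddle_local}, applies directly. The crux is to verify the nondegeneracy hypothesis $\det(I-\varepsilon\nabla^2 J(x^*))\neq 0$ at every saddle point $x^*=(0,w^*)$ with $w^{*T}g=0$, and this is precisely where $\varepsilon\ge 1$ is used.

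First I would check this nondegeneracy. From the Hessian computation in section~\ref{sec:matrix}, the eigenvalues of $\nabla^2 J(x^*)$ at such a saddle are $\tfrac12(1\pm\sqrt{1+4\|g\|^2/(w^{*T}Hw^*)})$ together with zeros. Hence $\det(I-\varepsilon\nabla^2 J(x^*))$ factors as a product in which the zero eigenvalues contribute $1$, the negative eigenvalue $\mu_-$ contributes $1-\varepsilon\mu_->1$, and the positive eigenvalue $\mu_+$ contributes $1-\varepsilon\mu_+$. Since $g\neq 0$ forces $\sqrt{1+4\|g\|^2/(w^{*T}Hw^*)}>1$ and therefore $\mu_+>1$, the assumption $\varepsilon\ge 1$ yields $\varepsilon\mu_+>1$, so $1-\varepsilon\mu_+<0\neq 0$ and the full determinant is nonzero. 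This is exactly the hypothesis of Lemma~\ref{lemma:strict_saddle_local}, while the negative eigenvalue $\mu_-$ supplies the unstable direction; consequently each saddle $x^*$ admits a neighborhood $U_{x^*}$ whose trapped set $B_{x^*}=\{x_0\in U_{x^*}: T^n(x_0)\in U_{x^*}\ \forall n\ge 0\}$ has measure zero.

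Next comes the globalization step. The saddle set $\mathcal{S}=\{(0,w): w\neq 0,\ w^Tg=0\}$ is not compact, but since the ambient space $\mathbb{R}^{d+1}$ is second countable (Lindel\"of), I can extract from $\{U_{x^*}\}_{x^*\in\mathcal{S}}$ a countable subcover $\{U_i\}_{i\ge 1}$ with associated null trapped sets $B_i$. If an initial value $x_0$ produces a trajectory converging to a saddle, that limit lies in some $U_i$, so $T^n(x_0)\in U_i$ for all $n\ge N$, whence $T^N(x_0)\in B_i$ and $x_0\in T^{-N}(B_i)$. Thus the set of initial values converging to saddles is contained in $\bigcup_{i\ge 1}\bigcup_{N\ge 0}T^{-N}(B_i)$. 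Applying Lemma~\ref{lemma:measurable_T} iteratively ($N$ times) shows each preimage $T^{-N}(B_i)$ of a null set is null, and a countable union of null sets is null, completing the proof.

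The hard part will be the nondegeneracy verification and thereby pinpointing the exact role of $\varepsilon\ge 1$: it is what licenses both the local-diffeomorphism/central-stable-manifold argument behind Lemma~\ref{lemma:strict_saddle_local} and the measure-preservation-of-preimages in Lemma~\ref{lemma:measurable_T} (for $\varepsilon<1$ the factor $1-\varepsilon\mu_+$ can vanish for suitable $w^*$). The remaining steps are the now-standard countable-cover bookkeeping, with minor care needed to stay within the domain $w\neq 0$ on which $T$ is defined and to ensure the sets $B_i$ lie in the range of $T$ when invoking the preimage lemma.
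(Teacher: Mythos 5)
Your proposal is correct and follows essentially the same route as the paper's proof: verify nondegeneracy of $I-\varepsilon\nabla^2 J$ at each saddle using the explicit Hessian eigenvalues (this is exactly where $\varepsilon\ge 1$ enters, since $\mu_+>1$ forces $1-\varepsilon\mu_+<0$), invoke the local stable-manifold lemma to get null trapped sets, take a Lindel\"of countable subcover of the saddle set, and pull back through iterated preimages via the measure-zero-preimage lemma. Your write-up is in fact slightly more explicit than the paper's on the determinant computation, which the paper only asserts implicitly.
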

\begin{proof}

We will prove this argument using Lemma~\ref{lemma:measurable_T} and Lemma~\ref{lemma:strict_saddle_local}. Denote the saddle points set as $W:=\{(a^*, w^*) : a^*=0, w^{*T}g=0\}$. The basic point is that the saddle point $x^* :=(a^*,w^*)$ of the BN loss function (\ref{eq:BN_loss}) has eigenvalues $\Big\{\frac12(1\pm \sqrt{1+ 4 \tfrac{\|g\|^2}{w^{*T}Hw^*}}),0,...,0\Big\}$ of the Hessian matrix.

(1) For each saddle point $x^* :=(a^*,w^*)$ of BN loss function, $\varepsilon\ge1$ is enough to allow us to use Lemma~\ref{lemma:strict_saddle_local}. Hence there exist a neighborhood $U_{x^*}$ of $x^*$ such that the following set $B_{x^*}$ is of measure zero,
\begin{align}%\label{eq:}
    B_{x^*} := \{x_0 \in U_{x^*} : x_n \in U_{x^*}, \forall n \ge 1\}.
\end{align}

(2) The neighborhoods $U_{x^*}$ of all $x^*\in W$ forms a cover of $W$, hence, according to Lindel\"{o}f's open cover lemma, there are countable neighborhoods $\{U_i : i=1,2,...\}$ cover $W$, i.e. $U := \cup_{i} U_i \supseteq W$. As a consequence, the following set $A_0$ is of measure zero,
\begin{align}%\label{eq:}
    A_0 := \cup_i B_i
    = \cup_i
    \{x_0 \in U_i : x_n \in U_i, \forall n \ge 1\}.
\end{align}

(3) Define $A_{m+1} := T^{-1} (A_m) = \{x \in \mathbb{R}^d: T(x) \in A_m\}, m\ge 0$. According to Lemma~\ref{lemma:measurable_T}, we have all $A_m$ and $\cup_{m} A_m$ are of measure zero.

(4) Since each initial value $x_0$ such that the iteration converges to a saddle point must be contained in some set $A_m$, we finish the proof.

\end{proof}

Combine the results of Lemma~\ref{lemma:strict_saddle}, scaling property \ref{prop:scaling} and the convergence theorem \ref{th:convergence_BNGD_general}, we have the following theorem directly.

\begin{theorem}
\label{th:BN_asto_minimizer}
If $\varepsilon_a=1,\varepsilon \ge 0$, then the BN iteration  (\ref{eq:BNGD_a})-(\ref{eq:BNGD_w}) converges to global minimizers for almost all initial values.
\end{theorem}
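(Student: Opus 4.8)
The plan is to assemble the three results named in the sentence preceding the statement --- the unconditional convergence of Theorem~\ref{th:convergence_BNGD_general}, the strict-saddle avoidance of Lemma~\ref{lemma:strict_saddle}, and the scaling property of Proposition~\ref{prop:scaling} --- into a single almost-everywhere claim, working with $\varepsilon_a=1$ and a fixed but arbitrary $\varepsilon>0$. The first step is immediate: since $\varepsilon_a=1\in(0,1]$, Theorem~\ref{th:convergence_BNGD_general} guarantees that for \emph{every} initial value $(a_0,w_0)$ the iterates $(a_k,w_k)$ converge to a stationary point, which by the classification in Section~\ref{sec:matrix} is either a strict saddle (with $a^*=0,\ w^{*T}g=0$) or a global minimizer. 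It therefore suffices to show that the set $B_\varepsilon$ of initial values whose trajectory limits on a saddle is Lebesgue-null.

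The crux --- and the main obstacle --- is that Lemma~\ref{lemma:strict_saddle} is established only under the coupled choice $\varepsilon_a=\varepsilon\ge1$, whereas here $\varepsilon_a=1$ is fixed while $\varepsilon$ ranges over all positive reals. I would bridge this with scaling property~(2) of Proposition~\ref{prop:scaling}. Fixing the target rate $\varepsilon>0$ and taking $r=1/\sqrt{\varepsilon}>0$, that property identifies the configuration $\{H,u,a_0,w_0,1,\varepsilon\}$ with $\{H,u,a_0,\tfrac{1}{\sqrt{\varepsilon}}w_0,1,1\}$; a direct substitution into Eq.~(\ref{eq:BNGD_a})--(\ref{eq:BNGD_w}) confirms $w_k'=\tfrac{1}{\sqrt\varepsilon}w_k$ and $a_k'=a_k$. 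The reduced system has $\varepsilon_a=\varepsilon=1\ge1$, so Lemma~\ref{lemma:strict_saddle} applies to it and shows that its saddle-converging initial set $B_1$ has measure zero.

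It then remains to pull $B_1$ back to the original rate. The correspondence on initial data is the invertible block-diagonal linear map $L=\mathrm{diag}(1,\tfrac{1}{\sqrt\varepsilon}I)$ on $\sR^{1+d}$, and since the saddle locus $\{a=0,\ w^Tg=0\}$ is invariant under $L$ (scaling $w$ by a positive factor preserves the condition $w^Tg=0$) while the two trajectories are related by this same fixed map, a trajectory of the $\varepsilon$-system converges to a saddle if and only if the trajectory of the $1$-system started at $L(a_0,w_0)$ does. Hence $B_\varepsilon=L^{-1}(B_1)$, and because an invertible linear isomorphism sends null sets to null sets, $B_\varepsilon$ has measure zero. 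Together with the first step this forces convergence to a global minimizer for almost every $(a_0,w_0)$, and as $\varepsilon>0$ was arbitrary the theorem follows.

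I expect the only delicate points to be bookkeeping: verifying that the $a$-coordinate is genuinely unscaled (i.e.\ $t=1$) under scaling property~(2), so that $L$ is exactly $\mathrm{diag}(1,\tfrac1{\sqrt\varepsilon}I)$, and confirming that ``convergence to a saddle'' transfers cleanly across the equivalence --- both of which reduce to the scale-invariance of the saddle set and the fact that $w_k'=r w_k$ converge together. Everything else rests on the standard fact that linear isomorphisms preserve Lebesgue-measure-zero sets.
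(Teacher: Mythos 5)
Your proposal is correct and follows exactly the route the paper intends: it combines Theorem~\ref{th:convergence_BNGD_general} (convergence to a stationary point for $\varepsilon_a\in(0,1]$), scaling property~(2) with $r=1/\sqrt{\varepsilon}$ to reduce to the case $\varepsilon_a=\varepsilon=1$ covered by Lemma~\ref{lemma:strict_saddle}, and the fact that the invertible linear map on initial data preserves null sets. The paper states this combination in one line without detail, so your write-up simply makes explicit the bookkeeping (in particular $t=1$ for the $a$-coordinate and the scale-invariance of the saddle set) that the paper leaves implicit.
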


\subsection{Convergence rate}

In section \ref{sec:proof_3p3}, we encountered the following estimate for $e_k=u-\tfrac{w_k^Tg}{\sigma_k^2} w_k$
\begin{align}
    \|e_{k+1}\|_H \le \rho(I - \hat\varepsilon_k H) \|e_k\|_H.
\end{align}
We can improve the convergence rate of the above if $H^*$ has better spectral property. This is the content of Theorem~\ref{th:convergence_rate_BNGD} and the following lemma proves this.
\begin{lemma}
\label{lemma:converge_rho_star}
The following inequality holds,
\begin{align}%\label{eq:}
    (1-\delta_k) \|e_{k+1}\|_H
    \le
    \Big(
        \rho^*(I - \hat\varepsilon_k H^*)
        + \delta_k
    \Big)
    \|e_k\|_H,
\end{align}
where $\delta_k := \tfrac{\lambda_{max}\varepsilon |a_k|}{\sigma_k^2}\|e_k\|_H$.
\end{lemma}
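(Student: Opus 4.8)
The plan is to start from the exact one-step identity already derived, $u-\tfrac{w_k^Tg}{\sigma_k^2}w_{k+1}=(I-\hat\varepsilon_k H)e_k$, together with the fact that $e_{k+1}$ is the $H$-orthogonal residual of $u$ onto $\mathrm{span}(w_{k+1})$. Writing $\Pi_{k+1}$ for the $H$-orthogonal projection onto $w_{k+1}$ and $\Pi_u$ for the $H$-orthogonal projection onto $u$, this gives $e_{k+1}=(I-\Pi_{k+1})(I-\hat\varepsilon_k H)e_k$ and, in particular, the key identity $(I-\Pi_{k+1})u=u-\tfrac{w_{k+1}^Tg}{\sigma_{k+1}^2}w_{k+1}=e_{k+1}$. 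The whole point is that $(I-\Pi_{k+1})$ annihilates the direction $w_{k+1}$, which is close to $u=\ker H^*$ near a minimizer; it is this projection, rather than the crude bound by $\rho(I-\hat\varepsilon_k H)$, that recovers the smaller factor $\rho^*$.

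First I would replace $H$ by $H^*$. Since $g=Hu$ and, because $e_k$ is $H$-orthogonal to $w_k$, one has $u^THe_k=\|e_k\|_H^2$, so $(H-H^*)e_k=\tfrac{Hu\,(u^THe_k)}{u^THu}=\tfrac{g}{u^THu}\|e_k\|_H^2$, whence
\[
(I-\hat\varepsilon_k H)e_k=(I-\hat\varepsilon_k H^*)e_k-\hat\varepsilon_k\tfrac{g}{u^THu}\|e_k\|_H^2 .
\]
Set $z:=(I-\hat\varepsilon_k H^*)e_k$. Applying $(I-\Pi_{k+1})$ and the triangle inequality, $\|e_{k+1}\|_H\le\|(I-\Pi_{k+1})z\|_H+\hat\varepsilon_k\tfrac{\|e_k\|_H^2}{u^THu}\|(I-\Pi_{k+1})g\|_H$. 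The second term already has the right shape: bounding the projection by the identity and using $\|g\|_H=\|Hu\|_H\le\lambda_{max}\|u\|_H$ together with $|w_k^Tg|\le\sigma_k\|u\|_H$ (Cauchy--Schwarz in $\|\cdot\|_H$), one checks $\hat\varepsilon_k\tfrac{\|g\|_H}{u^THu}\le\tfrac{\lambda_{max}\varepsilon|a_k|}{\sigma_k^2}$, so this term is $\le\delta_k\|e_k\|_H$.

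The core estimate is the first term. Decomposing $z$ along $u$ and using the identity above, $(I-\Pi_{k+1})z=(I-\Pi_{k+1})(I-\Pi_u)z+\tfrac{u^THz}{u^THu}\,e_{k+1}$. For the first piece I would exploit that the $H^*$-seminorm is blind to $u$ and agrees with $\|\cdot\|_H$ on $u^{\perp_H}$: since $(I-\Pi_u)z\in u^{\perp_H}$ and projections are $H$-contractions, $\|(I-\Pi_{k+1})(I-\Pi_u)z\|_H\le\|(I-\Pi_u)z\|_H=\|z\|_{H^*}=\|(I-\hat\varepsilon_k H^*)e_k\|_{H^*}\le\rho^*(I-\hat\varepsilon_k H^*)\|e_k\|_{H^*}\le\rho^*(I-\hat\varepsilon_k H^*)\|e_k\|_H$, the contraction step being the corollary to Lemma~\ref{lemma:condH}. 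For the second piece, $u^THz=\|e_k\|_H^2-\hat\varepsilon_k\,u^THH^*e_k$ is dominated by $\hat\varepsilon_k|u^TH^2e_k|\le\hat\varepsilon_k\lambda_{max}\|u\|_H\|e_k\|_H$; the same Cauchy--Schwarz estimates give $\tfrac{|u^THz|}{u^THu}\le\delta_k$ up to terms of order $\|e_k\|_H^2$, so this piece contributes at most $\delta_k\|e_{k+1}\|_H$.

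Collecting the three contributions yields $\|e_{k+1}\|_H\le\rho^*(I-\hat\varepsilon_k H^*)\|e_k\|_H+\delta_k\|e_{k+1}\|_H+\delta_k\|e_k\|_H$, which rearranges to the claimed $(1-\delta_k)\|e_{k+1}\|_H\le(\rho^*(I-\hat\varepsilon_k H^*)+\delta_k)\|e_k\|_H$. The main obstacle is precisely the bookkeeping of the $u$-direction: acting with $(I-\hat\varepsilon_k H^*)$ on $e_k$ creates an $O(\|e_k\|_H)$ component along $u$ that, left alone, would reinstate the large factor $\rho$ instead of $\rho^*$; the crux is that this is exactly the component removed by $(I-\Pi_{k+1})$, at the cost of an $\|e_{k+1}\|_H$-proportional term whose coefficient must be controlled by $\delta_k$. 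Making this rigorous — and confirming that the residual lower-order ($\|e_k\|_H^2$) terms are genuinely absorbed — is where the care lies; everything else is Cauchy--Schwarz together with the eigenvalue interlacing of Lemma~\ref{lemma:condH}.
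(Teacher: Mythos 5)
Your opening move coincides with the paper's: both start from the one-step identity and split $(I-\hat\varepsilon_k H)e_k=(I-\hat\varepsilon_k H^*)e_k-\hat\varepsilon_k\tfrac{\|e_k\|_H^2}{u^THu}Hu$ (using $u^THe_k=\|e_k\|_H^2$). But you then diverge in how you ``measure'' the resulting vector, and this is where the argument breaks. You apply the $H$-orthogonal projection $I-\Pi_{k+1}$ and take $H$-norms, which produces the cross term $\tfrac{u^THz}{u^THu}\,e_{k+1}$ with $z=(I-\hat\varepsilon_kH^*)e_k$. Here $u^THz=u^THe_k-\hat\varepsilon_k u^THH^*e_k=\|e_k\|_H^2-\hat\varepsilon_k u^THH^*e_k$: the leading term $\|e_k\|_H^2$ carries \emph{no} factor of $\hat\varepsilon_k$, so your claim that $\tfrac{|u^THz|}{u^THu}\le\delta_k$ ``up to terms of order $\|e_k\|_H^2$'' is precisely where the proof fails --- that order-$\|e_k\|_H^2$ term, namely $\tfrac{\|e_k\|_H^2}{u^THu}$, is not comparable to $\delta_k=\tfrac{\lambda_{max}\varepsilon|a_k|}{\sigma_k^2}\|e_k\|_H$ (take $\varepsilon$ or $|a_k|$ small with $e_k$ fixed) and is not absorbed. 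Carrying it through, your chain yields at best
\begin{align*}
\Big(1-\delta_k-\tfrac{\|e_k\|_H^2}{u^THu}\Big)\|e_{k+1}\|_H\le\big(\rho^*(I-\hat\varepsilon_kH^*)+\delta_k\big)\|e_k\|_H,
\end{align*}
which is strictly weaker than the lemma and becomes vacuous when $\|e_k\|_H^2$ is close to $u^THu$. A sanity check at $\varepsilon=0$ makes the loss visible: your exact identity reduces to $\alpha_k^2 e_k=(I-\Pi_k)(I-\Pi_u)e_k$ with $\alpha_k^2:=1-\tfrac{\|e_k\|_H^2}{u^THu}=\tfrac{(w_k^Tg)^2}{u^THu\,\sigma_k^2}$, so the method can only recover $\|e_{k+1}\|_H\le\tfrac{1}{\alpha_k}\|e_k\|_H$-type bounds, whereas the lemma gives equality. (This weaker bound still suffices for the asymptotic rate in Theorem~\ref{th:convergence_rate_BNGD}, since $\alpha_k\to1$ near a minimizer, but it does not establish the lemma as stated.)

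The paper avoids this by never introducing $\Pi_{k+1}$: it takes the $H^*$-seminorm of both sides of the identity $u-\tfrac{w_k^Tg}{\sigma_k^2}w_{k+1}=(I-\hat\varepsilon_kH^*)e_k-\hat\varepsilon_k\tfrac{\|e_k\|_H^2}{u^THu}Hu$. Since $H^*u=0$, the dangerous $u$-component is annihilated for free, and the error term $\hat\varepsilon_k\tfrac{\|e_k\|_H^2}{u^THu}\|Hu\|_{H^*}$ comes with the full $\hat\varepsilon_k\|e_k\|_H^2$ prefactor, which is genuinely of size $\delta_k$. Crucially, $\|e_k\|_{H^*}=\tfrac{|w_k^Tg|}{\sqrt{u^THu}\,\sigma_k}\|e_k\|_H=\alpha_k\|e_k\|_H$ and $\|u-\tfrac{w_k^Tg}{\sigma_k^2}w_{k+1}\|_{H^*}=\alpha_k\tfrac{\sigma_{k+1}}{\sigma_k}\|e_{k+1}\|_H$, so the factor $\alpha_k$ appears \emph{linearly on both sides} and cancels exactly (rather than appearing as $\alpha_k^2$ on one side and $\alpha_k$ on the other, as in your version); the remaining $(1-\delta_k)$ comes from the elementary bound $\sigma_{k+1}\ge\sigma_k-\varepsilon\tfrac{\lambda_{max}|a_k|}{\sigma_k}\|e_k\|_H$. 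If you replace your projection-plus-$H$-norm step with this $H^*$-seminorm measurement, the rest of your estimates (Cauchy--Schwarz and the interlacing corollary to Lemma~\ref{lemma:condH}) go through and recover the stated inequality.
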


\begin{proof}
The case of  $w_k^Tg = 0$ is trivial, hence we assume $w_k^Tg \neq 0$ in the following proof. Rewrite the iteration on $w_k$ as the following equality,
\begin{align}\label{eq:iteration_Hstar}
    u - \tfrac{w_k^Tg}{\sigma_k^2} w_{k+1}
    =
    (I - \hat\varepsilon_k H) e_k
    =
    (I - \hat\varepsilon_k H^*) e_k
    -\hat\varepsilon_k
    \Big(1-\tfrac{(w_k^T g)^2}{u^THu \sigma_k^2} \Big) Hu.
\end{align}
Then we will use the properties of $H^*$-seminorm to prove our argument.

(1) Estimate the $H^*$-seminorm on the right hand of Eq.~(\ref{eq:iteration_Hstar}).
\begin{align}%\label{eq:}
    \|\text{right}\|_{H^*}
    &\le
    \|(I - \hat\varepsilon_k H^*) e_k\|_{H^*}
    +|\hat\varepsilon_k|
    \Big(1-\tfrac{(w_k^T g)^2}{u^THu \sigma_k^2} \Big)
    \|Hu\|_{H^*} \\
    &\le
    \rho^*(I - \hat\varepsilon_k H^*) \|e_k\|_{H^*}
    +
    \tfrac{\lambda_{max}|\hat\varepsilon_k|}{\sqrt{u^THu}} \|e_k\|_{H}^2 \\
    &=
    \rho^*(I - \hat\varepsilon_k H^*) \tfrac{|w_k^Tg|}{\sqrt{u^THu} \sigma_k} \|e_k\|_H
    +
    \tfrac{\lambda_{max}\varepsilon |a_k w_k^Tg|}{\sqrt{u^THu} \sigma_k^3} \|e_k\|_{H}^2 \\
    &=
    \tfrac{|w_k^Tg|}{\sqrt{u^THu} \sigma_k}
    \Big(
        \rho^*(I - \hat\varepsilon_k H^*)
        + \delta_k
    \Big)
    \|e_k\|_H.
\end{align}

(2) Estimate the $H^*$-seminorm on the left hand of equation (\ref{eq:iteration_Hstar}).
Using the $H$-norm on the iteration of $w_k$, we have
\begin{align}%\label{eq:}
    \sigma_{k+1}
    &=
    \|w_k + \varepsilon\tfrac{a_k}{\sigma_k} H e_k\|_H
    \ge
    \sigma_k - \varepsilon\tfrac{\lambda_{max}|a_k|}{\sigma_k} \|e_k\|_H.
\end{align}
Consequently, we have
\begin{align}%\label{eq:}
    \|\text{left}\|_{H^*}
    &=
    \tfrac{|w_k^Tg|}{\sqrt{u^THu} \sigma_k} \tfrac{\sigma_{k+1}}{\sigma_k} \|e_{k+1}\|_H
    \ge
    \tfrac{|w_k^Tg|}{\sqrt{u^THu} \sigma_k}
    (1-\delta_k) \|e_{k+1}\|_H.
\end{align}

(3) Combining (1) and (2), we finish the proof.
\end{proof}

Then we give the proof of Theorem \ref{th:convergence_rate_BNGD}.

\begin{proof}[Proof of Theorem \ref{th:convergence_rate_BNGD}]
    Firstly, the Lemma \ref{lemma:converge_rho_star} implies the second part of Theorem \ref{th:convergence_rate_BNGD} which is the special case of $\delta_k<\delta<1$.

    Secondly, if $\hat\varepsilon < \varepsilon_{max}^*$, then $\rho^*(I-\hat\varepsilon H^*)<1$. Since $(a_k, w_k)$ converges to a minimizer, $\delta_k$ must converge to zero and the coefficient
    $\tfrac{\rho^*(I - \hat\varepsilon_k H^*)
        + \delta_k}{(1-\delta_k)}$
    must less than a number $\hat\rho \in (0,1)$ when $k$ is large enough which results in the linear convergence of $\|e_k\|_H$.
\end{proof}

Now, we turn to the convergence of the loss function which can be rewritten as $J_k = \tfrac12 \|\tilde e_k\|^2_H$ with $\tilde e_k = u-\tfrac{a_k}{\sigma_k}w_k$. There is an useful equality between $\|\tilde e_k\|^2_H$ and $\|e_k\|^2_H$:
\begin{align}\label{eq:loss_and_ek}
    \|\tilde e_k\|^2_H = \|e_k\|^2_H + \Big(a_k - \tfrac{w_k^Tg}{\sigma_k}\Big)^2.
\end{align}
Recalling the inequality (\ref{eq:ineq_ak}) and the boundedness of $a_k$, we have a constant $C_0$ such that
\begin{align}\label{eq:ineq_ak_V2}
    \Big|a_{k+1} - \tfrac{w_{k+1}^Tg}{\sigma_{k+1}}\Big|
    \le
    |1-\varepsilon_a|\Big|a_k - \tfrac{w_k^Tg}{\sigma_k}\Big|
    +
    C_0 \|e_k\|_H,
\end{align}
which indicates that we can use the convergence of $e_k$ to estimate the convergence of the loss value $J_k$. In fact we have the following lemma.

\begin{lemma}
\label{lemma:convergence_loss}
If $\|e_k\|_H \le C\rho^k$ for some constant $C$ and $\rho\in(0,1)$, $\varepsilon_a \in (0,1]$, then we have
\begin{align}%\label{eq:}
    \|\tilde e_k\|^2_H
    \le
    C^2 \rho^{2k} + \Big( C_1 (1-\varepsilon_a)^k
    + C_2 k \gamma^k \Big)^2,
\end{align}
where $\gamma=\max(\rho,1-\varepsilon_a)$, $C_1 = |a_0-w_0^Tg/\sigma_0|$ and $C_2 = CC_0$.
\end{lemma}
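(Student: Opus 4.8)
The plan is to reduce everything to controlling the single scalar quantity $d_k := |a_k - w_k^T g/\sigma_k|$, since Eq.~\eqref{eq:loss_and_ek} already splits the loss as $\|\tilde e_k\|^2_H = \|e_k\|^2_H + d_k^2$. The first term is handled immediately by the hypothesis $\|e_k\|_H \le C\rho^k$, which gives $\|e_k\|^2_H \le C^2\rho^{2k}$ and matches the first summand of the target. Thus the entire content of the lemma is the estimate $d_k \le C_1(1-\varepsilon_a)^k + C_2 k\gamma^k$.

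For $d_k$ I would feed the decay hypothesis into the recursion \eqref{eq:ineq_ak_V2} already derived in the previous step. Writing $\alpha := 1-\varepsilon_a$ and using $\varepsilon_a\in(0,1]$ so that $|1-\varepsilon_a| = \alpha \in [0,1)$, and substituting $\|e_k\|_H \le C\rho^k$ together with $C_2 = C C_0$, \eqref{eq:ineq_ak_V2} becomes the linear scalar recursion
\begin{align*}
    d_{k+1} \le \alpha\, d_k + C_2 \rho^k, \qquad d_0 = C_1.
\end{align*}
Unrolling this recursion yields the closed form $d_k \le \alpha^k C_1 + C_2 \sum_{j=0}^{k-1}\alpha^{k-1-j}\rho^j$, which already exhibits the two structural pieces of the claimed bound: a purely homogeneous part $C_1\alpha^k = C_1(1-\varepsilon_a)^k$ and a forced part coming from the convolution sum.

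The crux, and the only delicate point, is estimating the convolution sum $S_k := \sum_{j=0}^{k-1}\alpha^{k-1-j}\rho^j$ uniformly in the relative size of $\alpha$ and $\rho$. Bounding each factor by $\gamma = \max(\alpha,\rho)$ gives the clean uniform estimate $S_k \le k\gamma^{k-1}$, and this is precisely what produces the polynomial prefactor $k$ in the claimed bound. The subtlety worth flagging is the \emph{resonance} phenomenon: in the non-resonant regime $\alpha \neq \rho$ one can sum the geometric series exactly as $(\rho^k-\alpha^k)/(\rho-\alpha)$ and obtain a purely geometric $O(\gamma^k)$ bound with no factor of $k$, but the constant $1/|\rho-\alpha|$ degenerates as $\alpha\to\rho$. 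The factor $k$ is exactly the device that absorbs this degeneration and delivers a single estimate valid for all $\alpha,\rho$, including the critical case $\alpha=\rho=\gamma$ where $S_k = k\gamma^{k-1}$ holds with equality.

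Combining these pieces gives $d_k \le C_1(1-\varepsilon_a)^k + C_2 k\gamma^k$ (the $k\gamma^{k-1}$ from the sum differs only by a harmless factor $\gamma^{-1}$ that can be absorbed into the constant), and then
\begin{align*}
    \|\tilde e_k\|^2_H = \|e_k\|^2_H + d_k^2 \le C^2\rho^{2k} + \big(C_1(1-\varepsilon_a)^k + C_2 k\gamma^k\big)^2,
\end{align*}
as desired. Conceptually the whole argument is the discrete analogue of a first-order linear ODE with exponential forcing; beyond the convolution estimate everything is routine bookkeeping, and the hypothesis $\varepsilon_a\in(0,1]$ enters only to ensure $\alpha\in[0,1)$, so that the homogeneous part decays and $\gamma<1$.
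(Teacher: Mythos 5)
Your proposal is correct and follows essentially the same route as the paper: split the loss via Eq.~\eqref{eq:loss_and_ek}, unroll the recursion \eqref{eq:ineq_ak_V2} under the hypothesis $\|e_k\|_H\le C\rho^k$, and bound the resulting convolution sum termwise by $\gamma$ to produce the factor $k$ (the $\gamma^{-1}$ discrepancy you flag is harmless and the paper is equally loose about that index). Your remark on the resonant case $\alpha=\rho$ is a nice explanation of why the polynomial prefactor is unavoidable, but the substance of the argument is identical to the paper's.
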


\begin{proof}
According to the inequality (\ref{eq:ineq_ak_V2}), we have
\begin{align}%\label{eq:}
    \Big|a_k - \tfrac{w_k^Tg}{\sigma_k}\Big|
    \le
    C_1 (1-\varepsilon_a)^k +
    C_2 \sum_{i=0}^{k-1} (1-\varepsilon_a)^i \rho^{k-i}
    \le
    C_1 (1-\varepsilon_a)^k +
    C_2 k \gamma^k.
\end{align}
Put it in the Eq.~(\ref{eq:loss_and_ek}), then we finish the proof.

\end{proof}

\subsection{Estimating the effective step size}
\label{sec:estimate_hat_eps}

{%\color{red}
Firstly, we consider the limit of effective step size $\hat\varepsilon$.
When the iteration converges to a minimizer $(a^*,w^*)$, the value of $\hat\varepsilon$ is $\hat\varepsilon=\tfrac{\varepsilon}{\|w^*\|^2}$. Without loss generality, we assume that $w_k$ always has different direction with $u$ during the whole course of the iterations. In fact, if $w_k$ has the same direction with $u$ for some $k$, then the iteration of $w_k$ is trivial, i.e. $w_k = w_{k+1}=w_{k+2}=...$, and the effective step size can be any positive number. However, this case is rare. More precisely, we have the following lemma:
\begin{lemma}
    \label{lemma:BN_asto_linear}
    The set of initial values $(a_0, w_0)$ such that  $(a_k, w_k)$ converges to a minimizer $(a^*, w^*)$ with effective learning rate $\hat\varepsilon:=\lim\limits_{k \to \infty} \hat\varepsilon_k > \varepsilon_{max}^*$ and $\det(I-\hat\varepsilon H^*) \neq 0$ is of measure zero.
\end{lemma}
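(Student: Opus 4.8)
The plan is to adapt the strict-saddle argument of Lemma~\ref{lemma:strict_saddle} to the present situation, where the ``repelling'' fixed points are now minimizers at which the limiting effective learning rate is so large that the linearized iteration overshoots. Write the BNGD map as $x_{k+1}=T(x_k)$ with $x=(a,w)\in\mathbb{R}^{d+1}$, and fix a minimizer $(a^*,w^*)$ with $w^*=s u$ and limiting effective learning rate $\hat\varepsilon=\varepsilon/\|w^*\|^2$. The first step is to linearize $T$ at $(a^*,w^*)$ and identify the spectrum of $DT(a^*,w^*)$. From the analysis in Section~\ref{sec:matrix} and the derivation of Eq.~(\ref{eq:converge_rho_star}), the block of the Jacobian acting on the $w$-coordinates near a minimizer is governed by $I-\hat\varepsilon H^*$; in particular, along the top eigenvector of $H^*$ (which is transverse to the degenerate direction $u$), the corresponding eigenvalue is $1-\hat\varepsilon\lambda^*_{max}$.

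The key observation is that the hypothesis $\hat\varepsilon>\varepsilon^*_{max}=2/\lambda^*_{max}$ forces $1-\hat\varepsilon\lambda^*_{max}<-1$, so $DT(a^*,w^*)$ has an eigenvalue of modulus strictly greater than $1$, with an eigendirection that is not one of the degenerate ``center'' directions (those tangent to the manifold of minimizers, where $H^*$ has its kernel). The condition $\det(I-\hat\varepsilon H^*)\neq0$ is precisely what guarantees that $DT$ is invertible, so that $T$ is a local $C^1$ (indeed real-analytic) diffeomorphism near $(a^*,w^*)$, and we may invoke the center-stable manifold theorem exactly as in Lemma~\ref{lemma:strict_saddle_local}. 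Because there is at least one unstable direction, the set of points whose forward orbit remains in a small neighborhood $U_{x^*}$ lies on the center-stable manifold, which has dimension at most $d$ in $\mathbb{R}^{d+1}$ and is therefore of Lebesgue measure zero.

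Next I would globalize. The relevant minimizers --- those with $\hat\varepsilon>\varepsilon^*_{max}$ and $\det(I-\hat\varepsilon H^*)\neq0$ --- form a relatively open subset of the one-parameter family $\{w^*=su\}$; since these conditions are open and the family is second countable, Lindel\"{o}f's lemma yields a countable subcover by neighborhoods $U_i$, on each of which the local measure-zero set $B_i:=\{x_0\in U_i:x_n\in U_i\ \forall n\ge1\}$ is defined. Their union $A_0:=\cup_i B_i$ is measure zero. Finally, any initial value whose trajectory converges to such a minimizer must, from some step on, remain in the corresponding neighborhood, and hence lies in some preimage $A_m:=T^{-m}(A_0)$; by Lemma~\ref{lemma:measurable_T} each $A_m$, and therefore $\cup_m A_m$, is of measure zero, which completes the proof.

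I expect the main obstacle to be the first step: pinning down the linearization precisely enough to certify an unstable eigenvalue whose eigenvector is transverse to the degenerate/center subspace, while correctly accounting for the distinct learning rates $\varepsilon_a$ and $\varepsilon$ (which, e.g.\ when $\varepsilon_a=1$, render the $a$-block of $DT$ degenerate and force us to use the center-stable rather than a purely stable manifold). A secondary subtlety, absent in the single strict-saddle set of Lemma~\ref{lemma:strict_saddle}, is that here the ``bad'' fixed points vary continuously along a manifold with $\hat\varepsilon$ changing, so one must verify that the instability is uniform on each covering neighborhood and that a convergent trajectory genuinely becomes trapped in one such neighborhood.
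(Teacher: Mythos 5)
Your proposal is correct and follows essentially the same route as the paper: the paper's proof is a two-sentence sketch stating that the argument mirrors Theorem~\ref{th:BN_asto_minimizer} (i.e.\ the machinery of Lemmas~\ref{lemma:measurable_T}--\ref{lemma:strict_saddle}), with the key point being that $I-\hat\varepsilon H^*$ is non-degenerate and has an eigenvalue of modulus greater than $1$ (namely $1-\hat\varepsilon\lambda^*_{max}<-1$), yielding a local unstable manifold. Your write-up supplies the same linearization, the same center-stable manifold and Lindel\"{o}f/preimage steps, and in fact spells out more carefully than the paper the subtleties it leaves implicit (transversality of the unstable direction to the degenerate minimizer manifold, and the role of $\varepsilon_a$ in the $a$-block of the Jacobian).
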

\begin{proof}
The proof is similar to the proof of Lemma \ref{th:BN_asto_minimizer}. The key point is that the matrix $I-\hat\varepsilon H^*$ at this minimizer is non-degenerate and has an eigenvalue with its absolute value large than 1, hence there is a local unstable manifold with dimension greater than one.
\end{proof}
}

Now we consider the effective learning rate $\hat\varepsilon_k$ and give the proof of Proposition \ref{prop:estimate_stepsize}.

According to Lemma~\ref{lemma:convergence_BNGD_small}, the effective step size $\hat\varepsilon_k$ has same order with $\tfrac{\varepsilon}{\|w_k\|^2}$ provided $a_0w_0^Tg>0, \varepsilon/||w_0|| < \varepsilon_0$. In fact, we have
\begin{align}%\label{eq:}
    \tfrac{C_1 \varepsilon}{\|w_k\|^2} :=
    \tfrac{a_0 w_0^Tg}{\sigma_0} \tfrac{\varepsilon}{\lambda_{max}\|w_k\|^2}
    \le
    \hat\varepsilon_k
    \le
    \sqrt{u^THu} \tfrac{C_a \varepsilon}{\lambda_{min}\|w_k\|^2}
    =: \tfrac{C_2 \varepsilon}{\|w_k\|^2} .
\end{align}
Hence, to prove the Proposition~\ref{prop:estimate_stepsize}, we only need to estimate the norm of $w_k$.
%Note that we assume $\varepsilon_a \in (0,1], a_0w_0^Tg>0, ||g||^2 \ge \tfrac{w_0^Tg}{\sigma_0^2} g^THw_0$ and $(a_0,w_0)$ is not a minimizer.

\begin{proof}[Proof of Proposition~\ref{prop:estimate_stepsize}]
According to the BNGD iteration, we have (see the proof of Lemma~\ref{lemma:convergence_basic})
\begin{align}%\label{eq:}
    \|w_{k+1}\|^2
    %&= \|w_k\|^2 + \varepsilon^2 \tfrac{a_k^2}{\sigma_k^2} \|He_k\|^2\\
    %&=     \|w_0\|^2 + \varepsilon^2 \sum_{i=0}^k \tfrac{a_i^2}{\sigma_i^2}  \|He_i\|^2\\
    & \le
    \|w_0\|^2 + \varepsilon^2 \lambda_{max}\sum_{i=0}^k \tfrac{a_i^2}{\sigma_i^2} q_i.
\end{align}

(1) When $\tfrac{\varepsilon}{\|w_0\|^2}<\varepsilon_0$ ($\varepsilon_0$ is defined in Lemma~\ref{lemma:convergence_BNGD_small}), the sequence $q_k$ satisfies $q_{k+1} \le (1- \hat\varepsilon_k \lambda_{min}) q_k$. Hence the norm of $w_k$ is bounded by
\begin{align}%\label{eq:}
    \|w_{k}\|^2
    \le
    \|w_0\|^2 + \varepsilon \kappa C_a \tfrac{\sigma_0}{w_0^Tg}\sum_{i=0}^\infty ( q_{i}-q_{i+1})
    \le
    \|w_0\|^2 + C \varepsilon,
\end{align}
for some constant $C$. As a consequence,
\begin{align}%\label{eq:}
    \tilde C_1 \varepsilon
    :=
    \tfrac{C_1 \varepsilon}{\|w_0\|^2(1 + C\varepsilon_0)}
    \le
    \hat\varepsilon_k
    \le
    \tfrac{C_2 \varepsilon}{\|w_0\|^2}
    =: \tilde C_2 \varepsilon.
\end{align}

(2) When $\varepsilon$ is large enough, the increment of the norm $\|w_k\|$ at the first step is large as well. In fact, we have
\begin{align}\label{eq:normw1w0}
    \|w_1\|^2 - \|w_0\|^2
    =
    \varepsilon^2 \tfrac{a_0^2}{\sigma_0^2} \|He_0\|^2
    = C_3 \varepsilon^2.
\end{align}
Since $||g||^2 \ge \tfrac{w_0^Tg}{\sigma_0^2} g^THw_0$, we have $a_1 w_1^Tg >a_1 w_0^Tg > 0$. Choose $\varepsilon$ to be larger than some value $\varepsilon_1$ such that $\tfrac{\varepsilon}{\|w_1\|^2}<\varepsilon_0$, then we can use the argument in (1) on $(a_1,w_1)$. More precisely, there are two constants, $C_1, C_2$, such that
\begin{align}%\label{eq:}
    \tfrac{C_1 \varepsilon}{\|w_1\|^2}
    \le
    \hat\varepsilon_k
    \le
    \tfrac{C_2 \varepsilon}{\|w_1\|^2}.
\end{align}
Plugging the equation (\ref{eq:normw1w0}) into it, we have
\begin{align}%\label{eq:}
    \tfrac{C_1 \varepsilon_1^2}{\|w_0\|^2 + C_3 \varepsilon_1^2}
    \le
    \tfrac{C_1 \varepsilon^2}{\|w_0\|^2 + C_3 \varepsilon^2}
    \le
    \hat\varepsilon_k \varepsilon
    \le
    \tfrac{C_2 \varepsilon^2}{\|w_0\|^2 + C_3 \varepsilon^2}
    \le
    \tfrac{C_2}{C_3}.
\end{align}

\end{proof}

% Note that the condition $||g||^2 \ge \tfrac{w_0^Tg}{\sigma_0^2} g^THw_0$ is introduced only for simplifying the proof.

{%\color{red}

\subsection{Quantification of the insensitive interval}
\label{sec:omega}

In this section, we estimate the magnitude of insensitive interval of step size.

% \subsubsection{Approximate $\|w_k\|^2$ by ODE}

The BNGD iteration with configuration $\varepsilon_a=1, a_0=\tfrac{w_0^Tg}{\sigma_0}, \|w_0\|=\|u\|=1$ implies the following equality of $\|w_k\|^2$,
\begin{align}\label{eq:normw_iteration}
    \|w_{k+1}\|^2 &=\|w_k\|^2  +
\tfrac{\varepsilon^2}{\|w_k\|^2} \tfrac{a_k^2\|w_k\|^2}{\sigma_k^2}
\big\| e_k\big\|_{H^2}^2 \nonumber\\
&=:
\|w_k\|^2  +
\tfrac{\varepsilon^2}{\|w_k\|^2} \beta_k ,
\end{align}
where $\beta_k$ is defined as $\beta_k:=  \tfrac{a_k^2\|w_k\|^2}{\sigma_k^2}
\big\| e_k\big\|_{H^2}^2$. The linear convergence results allow us to assume that $\beta_k$ converges linearly to zero, ~i.e. $\beta_k = \beta_0 \rho^{k}, k\ge 0$ where $\rho\in(0,1)$ depends on $\varepsilon$ and is self-consistently determined by the limiting effective step size, ~i.e. $\rho = \rho(I-\tfrac{\varepsilon}{\|w_\infty\|^2}H)$ is the spectral radius of $I-\tfrac{\varepsilon}{\|w_\infty\|^2}H$. Observed that the iteration in Eq.~\eqref{eq:normw_iteration} can be regarded as a numerical scheme for solving the following ODE:

\begin{align}
    \xi(0)=\|w_1\|^2, \dot \xi(t)  = \frac{\varepsilon^2\beta_0 \rho^{2t}}{\xi(t)},
\end{align}
which has solution $\xi^2(t) = \xi^2(0) + \tfrac{\varepsilon^2 \beta_0}{|\ln \rho|}(1-\rho^{2t})$, the value of $\|w_k\|^2$ can be approximated by $\xi(k+1)$. Particularly, we have an approximation for $\|w_\infty\|^2$:
\begin{align}
    \|w_\infty\|^2 \approx \xi(\infty) = \sqrt{(1+\varepsilon^2\beta_0)^2 + \tfrac{\varepsilon^2 \beta_0}{|\ln \rho|}}.
\end{align}

To determine the value of $\rho$, we let $\rho$ and $\varepsilon$ satisfy the following relation:
\begin{align}
    \rho = \rho(I-\tfrac{\varepsilon}{\xi(\infty)}H):=\max_i\{|1-\tfrac{\varepsilon}{\xi(\infty)}\lambda_i(H)|\},
\end{align}
which closed the calculation of $\xi(\infty)$.

Next, we consider two limiting case: $\varepsilon\ll1$ and $\varepsilon\gg1$. In both case, the effective step size $\hat\varepsilon$ is small, and the value of $\rho$ is related to
$\rho = 1- \frac{\varepsilon \lambda_{min}}{\xi(\infty)}$.
Combine the definition of $\xi(\infty)$, then we have
\begin{align}
    \tfrac{\varepsilon^2 \lambda^2_{min}}{(1-\rho)^2} =
    \xi(\infty)^2
    =
    (1+\varepsilon^2\beta_0)^2 + \tfrac{\varepsilon^2 \beta_0}{|\ln \rho|}
    \approx
    (1+\varepsilon^2\beta_0)^2 + \tfrac{\varepsilon^2 \beta_0}{1-\rho},
\end{align}
where the estimate of $|\ln\rho| \approx 1-\rho$, is used since $\rho$ is closed to 1 for $\hat\varepsilon$ is small enough. Consequently, we have:
\begin{itemize}
    \item[(1)] When $\varepsilon \ll 1$, we have $\alpha^* \approx 1$, $\rho \approx 1 - \varepsilon\lambda_{min}$ and $\hat\varepsilon
    \approx \varepsilon$.
    \item[(2)] When $\varepsilon \gg 1$, we have
    \begin{align}
        \hat\varepsilon \approx \frac{1-\rho}{\lambda_{min}} \approx \frac{\sqrt{1 + 4 \varepsilon^2 \lambda_{min}^2}-1}{2 \varepsilon^2 \beta_0 \lambda_{min}}
    =
    \frac{1}{\beta_0}\frac{2\lambda_{min}}{\sqrt{1 + 4 \varepsilon^2 \lambda_{min}^2}+1}
    \sim \tfrac{1}{\beta_0\varepsilon}.
    \end{align}
\end{itemize}
Those results indicate the magnitude of insensitive interval of step size is proportion to the constant $\tfrac1\beta_0$.

Finally, we estimate the average of $\beta_0$ over $w_0$ and $u$ for given $H$.
The average value of $\beta_0$ from BNGD is defined as the following geometric average over $w_0$ and $u$, which we take to be independent and uniformly on the unit sphere $\mathbb{S}^{d-1}$,
\begin{align}\label{eq:definition_bar_beta}
    \bar\beta_H :=
    \mathbb{E}_{w_0,u}^G
    [ \beta_0]
    :=
    \exp\big(
        \mathbb{E}_{w_0,u}
        \ln\big[ \big(\tfrac{w_0^T Hu}{w_0^THw_0}\big)^2
        \big\| e_0\big\|_{H^2}^2 \big]
        \big).
    %\le
    %\mathbb{E}_{w_0,u} \big[ \big(\tfrac{w_0^T Hu}{w_0^THw_0}\big)^2 \big\| e_0\big\|_{H^2}^2 \big].
\end{align}

Correspondingly, the magnitude of insensitive interval of step size is defined as $\Omega$,
\begin{align}\label{eq:definition_Ogema}
    \Omega = \Omega_H :=
    \mathbb{E}_{w_0,u}^G [\tfrac{\lambda^2_{max}(H)}{4\beta_0}]
    = \tfrac{\lambda^2_{max}(H)}{4 \bar\beta_H}.
\end{align}

% It is worth noting that we prefer the geometric mean to the arithmetic mean, because the arithmetic mean is dominated by the extreme cases while the geometric mean is more stable in numerical tests in section \ref{sec:test_BN1GD}.
The numerical tests find that $\Omega_H$ highly depends on the dimension $d$ provided the eigenvalues of $H$ is sampled from typical distributions such as the uniform distribution on $[\lambda_{min}, \lambda_{max}]$ with $0<\lambda_{min}<\lambda_{max}$. In fact we have the following estimations for $\bar\beta_H$ which implies $\bar\beta_H\le O(1/d)$ and $\Omega_H \ge O(d)$.
\begin{lemma}\label{lemma:estimate_bar_beta_H}
    For positive definite matrix $H$ with minimal and maximal eigenvalues, $\lambda_{min}$ and $\lambda_{max}$ respectively, the $\bar\beta_H$ defined in (\ref{eq:definition_bar_beta}) satisfies,
    \begin{align}
        \bar\beta_H
        &\le
        \tfrac1d \tfrac{Tr[H^2]}{d} \tfrac{\lambda_{max} Tr[H]}{d} \tfrac{1}{\lambda^2_{min}} \exp\big(-\tfrac{2\ln \kappa}{\kappa-1} (\tfrac{Tr[H]}{d\lambda_{min}}-1)\big),% = O(1/d),\\
        %\Omega_H &= \tfrac{\lambda^2_{max}(H)}{4 \bar\beta_H} \ge O(d),
    \end{align}
    where $\kappa=\tfrac{\lambda_{max}}{\lambda_{min}}$ is the condition number of $H$.
\end{lemma}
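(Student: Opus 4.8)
The plan is to exploit the fact that the logarithm turns the product defining $\beta_0$ into a sum, so the single geometric-mean expectation factors into three pieces that can be estimated independently. Writing $g=Hu$, setting $p := \tfrac{w_0^T Hu}{w_0^T Hw_0}$ and $e_0 = u-pw_0$, the integrand is $\beta_0 = p^2\,\|e_0\|_{H^2}^2$ with $\|e_0\|_{H^2}^2 = e_0^T H^2 e_0$, and since $\mathbb{E}\ln$ is additive over products, $\bar\beta_H = N\cdot D\cdot E$ where
\begin{align*}
    N := \exp\big(\mathbb{E}\ln (w_0^T Hu)^2\big),\quad
    D := \exp\big(-2\,\mathbb{E}\ln (w_0^T Hw_0)\big),\quad
    E := \exp\big(\mathbb{E}\ln (e_0^T H^2 e_0)\big).
\end{align*}
The structural fact to record first is that $e_0$ is exactly the $H$-orthogonal projection residual of $u$ onto the $H$-complement of $\mathrm{span}(w_0)$, since $e_0^T Hw_0 = 0$; hence the $H$-Pythagoras identity gives $\|e_0\|_H^2 \le \|u\|_H^2$.

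For $N$ and $E$ I would apply AM--GM in the form $\exp(\mathbb{E}\ln X)\le \mathbb{E}X$, which points in the direction I need. For $E$, first bound $e_0^T H^2 e_0 \le \lambda_{max}\, e_0^T He_0 \le \lambda_{max}\, u^T Hu$ using $H^2\preceq\lambda_{max}H$ and the projection bound above, then take expectations to get $E \le \lambda_{max}\,\mathbb{E}[u^T Hu] = \lambda_{max}\,\tfrac{Tr[H]}{d}$. For $N$, conditioning on $w_0$ and using rotational invariance of the uniform measure on $\mathbb{S}^{d-1}$, one has $(w_0^T Hu)^2 = \|Hw_0\|^2\,(\hat v^T u)^2$ with $\hat v=Hw_0/\|Hw_0\|$ a fixed unit vector, so $\ln$ separates and $\mathbb{E}_u\ln(\hat v^T u)^2 = \mathbb{E}\ln u_1^2$. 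Two more Jensen bounds then yield $\exp(\mathbb{E}\ln u_1^2)\le\mathbb{E}[u_1^2]=\tfrac1d$ and $\exp(\mathbb{E}\ln(w_0^T H^2 w_0))\le \mathbb{E}[w_0^T H^2 w_0]=\tfrac{Tr[H^2]}{d}$, giving $N \le \tfrac1d\,\tfrac{Tr[H^2]}{d}$.

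The main obstacle is the factor $D$, because it requires a \emph{lower} bound on $\mathbb{E}\ln(w_0^T Hw_0)$, where ordinary Jensen is useless (it points the wrong way). Here I would use a reverse-Jensen secant bound: since $\ln$ is concave, on $[\lambda_{min},\lambda_{max}]$ it lies above its chord, and the variable $X := w_0^T Hw_0$ is a convex combination of the eigenvalues of $H$, hence takes values in precisely this interval. Applying the chord inequality pointwise, taking expectations, and using $\mathbb{E}X = \tfrac{Tr[H]}{d}$ gives
\begin{align*}
    \mathbb{E}\ln(w_0^T Hw_0)\ \ge\ \ln\lambda_{min} + \tfrac{\ln\kappa}{\kappa-1}\Big(\tfrac{Tr[H]}{d\lambda_{min}}-1\Big),
\end{align*}
so that $D \le \lambda_{min}^{-2}\exp\!\big(-\tfrac{2\ln\kappa}{\kappa-1}(\tfrac{Tr[H]}{d\lambda_{min}}-1)\big)$. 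Multiplying the bounds on $N$, $D$, $E$ and regrouping the factors reproduces the claimed inequality exactly. Beyond bookkeeping, the only genuinely delicate points are the verification that $X$ lies in $[\lambda_{min},\lambda_{max}]$ so that the secant estimate applies; everything else reduces to the elementary second-moment identity $\mathbb{E}[(w_0)_i(w_0)_j]=\delta_{ij}/d$ on the sphere together with repeated use of AM--GM.
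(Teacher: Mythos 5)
Your proposal is correct and follows essentially the same route as the paper's proof: split $\ln\beta_0$ into the three factors, bound the two numerator factors by AM--GM ($\exp(\mathbb{E}\ln X)\le\mathbb{E}X$) using $\|e_0\|_{H^2}^2\le\lambda_{max}\|e_0\|_H^2\le\lambda_{max}u^THu$, and lower-bound $\mathbb{E}\ln(w_0^THw_0)$ via the secant (chord) inequality for $\ln$ on $[\lambda_{min},\lambda_{max}]$. The only cosmetic difference is that you bound $\exp(\mathbb{E}\ln(w_0^THu)^2)$ by first factoring it as $\|Hw_0\|^2(\hat v^Tu)^2$ and applying Jensen to each piece, whereas the paper applies AM--GM directly to $(w_0^THu)^2$; both give the same value $Tr[H^2]/d^2$.
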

\begin{proof}
    The definition of $\mathbb{E}^G$ allows us to estimate each term in $\beta$ separately.

(1). The inequality of arithmetic and geometric means implies $\mathbb{E}^G[(w_0^THu)^2] \le \mathbb{E}[(w_0^THu)^2]=\tfrac{Tr[H^2]}{d^2}$.

(2). Using the definition of $e_0 = u-\tfrac{w_0^THu}{w_0^THw_0}w_0$, we have
\begin{align*}
    \mathbb{E}^G \big[\| e_0\|_{H^2}^2 \big]
\le \lambda_{max} \mathbb{E} \big[\| e_0\|^2_{H} \big]
&= \lambda_{max} \mathbb{E} \big[\|u-\tfrac{w_0^THu}{w_0^THw_0}w_0\|^2_{H} \big]\\
&= \lambda_{max} \mathbb{E} \big[u^THu-\big(\tfrac{w_0^THu}{w_0^THw_0}\big)^2 \big]\\
&\le \lambda_{max} \mathbb{E} \big[u^THu\big]
= \tfrac{\lambda_{max} Tr[H]}{d}.
\end{align*}

(3). Since $w_0^THw_0 \in [\lambda_{min,\lambda_{max}}]$, using the fact that $\ln(1+x)\ge \tfrac{\ln \kappa}{\kappa-1}, \forall x \in [0,\kappa-1]$, we have
\begin{align}
    \mathbb{E}^G \big[w_0^THw_0 \big]
&=\exp\big(\mathbb{E}\ln (w_0^THw_0\big)\big) \nonumber\\
&\ge
\lambda_{min} \exp\big(\mathbb{E}\tfrac{\ln \kappa}{\kappa-1} (w_0^THw_0/\lambda_{min}-1)\big)\\
&=
\lambda_{min} \exp\big(\tfrac{\ln \kappa}{\kappa-1} (\tfrac{Tr[H]}{d\lambda_{min}}-1)\big). \nonumber
\end{align}

Combine the inequities above, then we finish the proof.
\end{proof}

If the eigenvalues of $H$ is sampled from a given distribution on $[\lambda_{min},\lambda_{max}]$, the values $\tfrac{Tr[H]}{d}, \tfrac{Tr[H^2]}{d}$ are related to the distribution and not sensitive to dimension $d$ (for $d$ large enough), then the estimate in Lemma \ref{lemma:estimate_bar_beta_H} indicates that $\bar\beta_H\le O(1/d)$ and $\Omega_H \ge O(d)$.
As an example, we consider the $H$ with eigenvalues forming an arithmetic sequence below.

\begin{corollary}\label{cor:H_lin_beta_Omega}
    If the eigenvalues of $H$ are $\lambda_i = \lambda_{min} + (i-1) \tfrac{\lambda_{max}-\lambda_{min}}{d-1}, d\ge 2$, then we have
\begin{align}
    \bar\beta_H \le \tfrac{(\kappa+1)^3}{\kappa^2}\tfrac{\lambda_{max}^2}{4d},~
    \Omega_H \ge \frac{\kappa^2}{(\kappa+1)^3} d.
\end{align}
\end{corollary}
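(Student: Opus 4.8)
The plan is to specialize Lemma~\ref{lemma:estimate_bar_beta_H} to the arithmetic spectrum and reduce the whole statement to elementary evaluations of $Tr[H]$ and $Tr[H^2]$. First I would compute the first moment: since the $\lambda_i$ form an arithmetic progression with endpoints $\lambda_{min}$ and $\lambda_{max}$, their sum is $Tr[H] = \tfrac{d(\lambda_{min}+\lambda_{max})}{2}$, so that $\tfrac{Tr[H]}{d\lambda_{min}} = \tfrac{1+\kappa}{2}$ and hence $\tfrac{Tr[H]}{d\lambda_{min}}-1 = \tfrac{\kappa-1}{2}$. Substituting this into the exponential factor of Lemma~\ref{lemma:estimate_bar_beta_H} collapses it neatly, $\exp\!\big(-\tfrac{2\ln\kappa}{\kappa-1}\cdot\tfrac{\kappa-1}{2}\big) = e^{-\ln\kappa} = \tfrac1\kappa$. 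This cancellation is the key simplification producing the clean power of $\kappa$ in the final constant.

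Next I would bound the second moment. Writing $\lambda_i = \lambda_{min}+(i-1)h$ with $h=\tfrac{\lambda_{max}-\lambda_{min}}{d-1}$ and using $\sum_{i}(i-1) = \tfrac{(d-1)d}{2}$, $\sum_{i}(i-1)^2 = \tfrac{(d-1)d(2d-1)}{6}$, one obtains
\begin{align*}
\tfrac{Tr[H^2]}{d} = \lambda_{min}\lambda_{max} + \tfrac{(\lambda_{max}-\lambda_{min})^2(2d-1)}{6(d-1)}.
\end{align*}
Since $\tfrac{2d-1}{6(d-1)}\le\tfrac12$ for all $d\ge2$ (with equality at $d=2$), this is at most $\lambda_{min}\lambda_{max}+\tfrac{(\lambda_{max}-\lambda_{min})^2}{2} = \tfrac{\lambda_{max}^2+\lambda_{min}^2}{2} \le \tfrac{(\lambda_{max}+\lambda_{min})^2}{2} = \tfrac{(\kappa+1)^2\lambda_{min}^2}{2}$, where the last equality uses $\lambda_{max}=\kappa\lambda_{min}$.

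Finally I would assemble the pieces. Plugging $\tfrac{\lambda_{max}Tr[H]}{d} = \tfrac{\kappa(\kappa+1)\lambda_{min}^2}{2}$, the bound $\tfrac{Tr[H^2]}{d}\le\tfrac{(\kappa+1)^2\lambda_{min}^2}{2}$, the exponential factor $\tfrac1\kappa$, and $\tfrac{1}{\lambda_{min}^2}$ into the right-hand side of Lemma~\ref{lemma:estimate_bar_beta_H}, the powers of $\lambda_{min}$ combine to $\lambda_{min}^2 = \lambda_{max}^2/\kappa^2$ and the powers of $\kappa$ telescope, giving exactly $\bar\beta_H \le \tfrac{(\kappa+1)^3}{\kappa^2}\tfrac{\lambda_{max}^2}{4d}$. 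The bound on $\Omega_H$ is then immediate: since $\Omega_H = \tfrac{\lambda_{max}^2}{4\bar\beta_H}$ by \eqref{eq:definition_Ogema}, inverting the inequality yields $\Omega_H \ge \tfrac{\kappa^2}{(\kappa+1)^3}d$.

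Everything here is elementary, so there is no deep obstacle; the only place requiring care is the second-moment estimate, where one must track the $d$-dependent factor $\tfrac{2d-1}{6(d-1)}$ and verify it is bounded by $\tfrac12$ uniformly in $d\ge2$, and then perform the algebraic step $\lambda_{max}^2+\lambda_{min}^2\le(\lambda_{max}+\lambda_{min})^2$ so that the constant matches the stated one exactly rather than merely up to slack.
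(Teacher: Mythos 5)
Your proposal is correct and follows essentially the same route as the paper: the paper's one-line proof simply asserts $\tfrac{Tr[H]}{d}=\tfrac{(\kappa+1)\lambda_{min}}{2}$ and $\tfrac{Tr[H^2]}{d^2}\le\tfrac{(\kappa+1)^2\lambda_{min}^2}{2d}$ and plugs them into Lemma~\ref{lemma:estimate_bar_beta_H}, which is exactly what you do, only with the arithmetic spelled out. Your verification that $\tfrac{2d-1}{6(d-1)}\le\tfrac12$ for $d\ge2$ and the collapse of the exponential factor to $1/\kappa$ are the details the paper leaves implicit, and they check out.
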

\begin{proof}
    It is enough to show that $\tfrac{Tr[H]}{d}=\tfrac{(\kappa+1)\lambda_{min}}{2}, \tfrac{Tr[H^2]}{d^2} \le \tfrac{(\kappa+1)^2\lambda_{min}^2}{2d}$.
\end{proof}

The Corollary \ref{cor:H_lin_beta_Omega} indicates that larger dimensions lead to larger insensitive intervals of step size. It is interesting to note that although the lower bound of $\Omega_H$ is also related to the condition number $\kappa$, the numerical tests in section \ref{sec:test_BN1GD} find the width is not sensitive to $\kappa$. In fact, one could get better lower bounds for $\Omega_H$ by better estimates on $\mathbb{E}^G(w^THw)$. However, here we focus on the effect of dimension.

\subsubsection{Numerical tests}
\label{sec:test_BN1GD}

In this section, we give some numerical tests on the BNGD iteration with $\varepsilon_a=1, a_0=0$ and choices of the matrix $H$. The scaling property allows us to set $H$ diagonal and the initial value $w_0$ having the same norm with $u$, $\|w_0\| = \|u\| = 1$.

Firstly, we show the difference of geometric mean(G-mean) and arithmetic mean(A-mean) in quantifying the performance of BNGD.
Figure~\ref{fig:res500_random1_d100} gives an example of a 100-dimensional $H$ with condition number $\kappa = 853$. The GD and MBNGD iteration are executed $k=5000$ times where $u$ and $w_0$ are randomly chosen from the unit sphere. The values of effective step size, loss $\|e_k\|^2_H$ and error $\|e_k\|$ are plotted. Furthermore, the mean values over 500 random tests are given. The results show that the G-mean converges quickly when the number of tests increase, however the A-mean does not converge as quickly and A-mean is dominated by the largest sample values. Hence we use the geometric mean in later tests.

\begin{figure}[thb!]
    % Requires \usepackage{graphicx}
    \center
    \includegraphics[width=12cm]{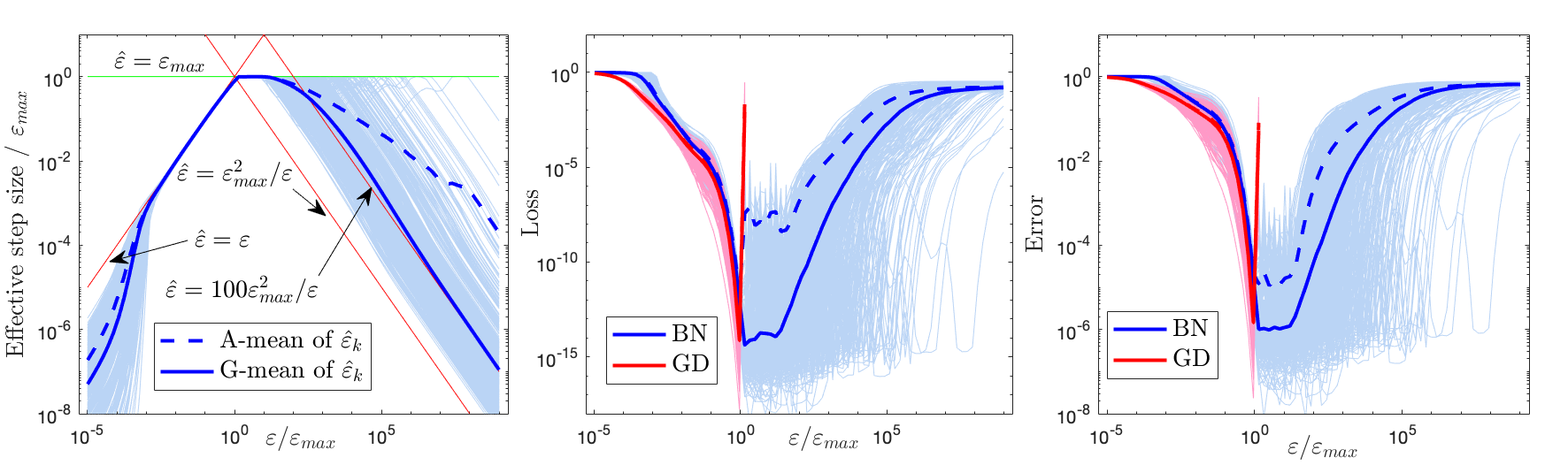}\\
    \caption{
        {%\color{red}
        Test BNGD on OLS model with step size $\varepsilon_a=1, a_0=0$.
        Parameters: $H$ is a diagonal matrix with condition number $\kappa=853$ (the first random test in Figure~\ref{fig:BN1GD_random_d100_res500}), $u$ and $w_0$ is randomly chosen uniformly from the unit sphere in $\mathbb{R}^{100}$. The BNGD iterations are executed for $k=5000$ steps. The bold curves are averaged over the 500 independent runs (the shadow curves).
        }
    }
    \label{fig:res500_random1_d100}
\end{figure}

Secondly, we test the effect of dimension $d$.

Figure~\ref{fig:BN1GD_10000_dvar_res500} gives three typical setting of $H$: (a) with arithmetic progression eigenvalues, (b) with geometric progression eigenvalues and (c) with only one large eigenvalue perturbed from identity matrix. In the first two cases, the effect of dimension is observed, the large dimensions lead to large magnitude $\Omega$ of optimal step size, and the magnitude is almost proportion to the dimension $d$ which confirm the analysis in Lemma~\ref{lemma:estimate_bar_beta_H} and Corollary~\ref{cor:H_lin_beta_Omega}. In the last case, the large dimensions lead to small $\Omega$ which is due to $Tr[H]/d$ and $Tr[H^2]/d$ are highly influenced by $d$. However, the condition number of $H^*$ be much smaller than $\kappa(H)$, in which case leads to marked acceleration over GD.

\begin{figure}[thb!]
    % Requires \usepackage{graphicx}
    \center
    \includegraphics[width=12cm]{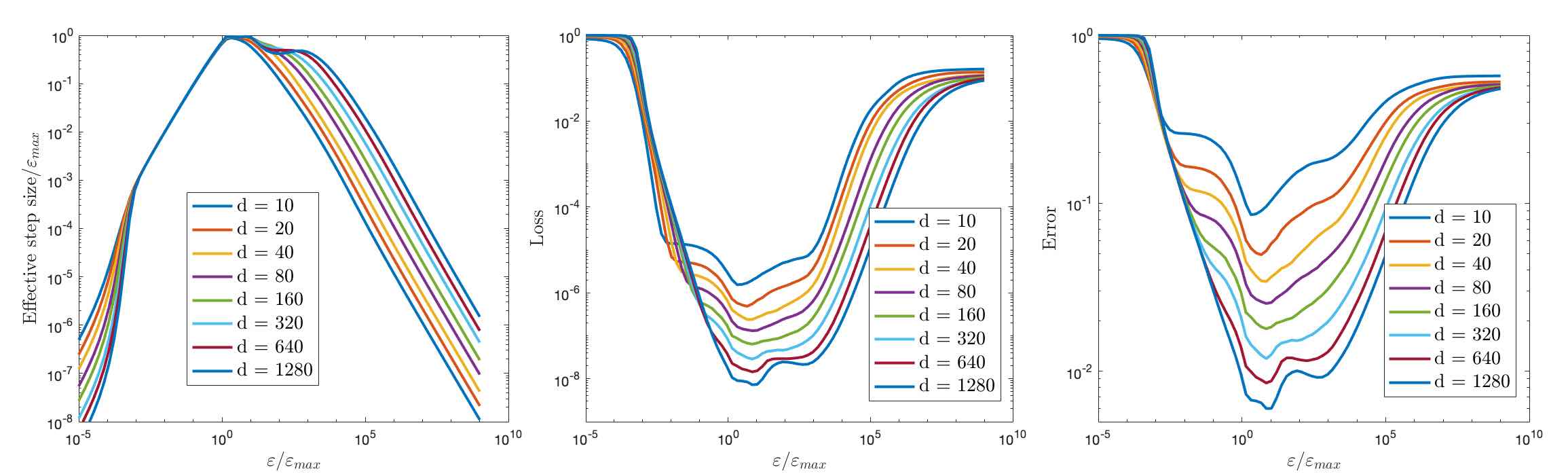}\\
    \includegraphics[width=12cm]{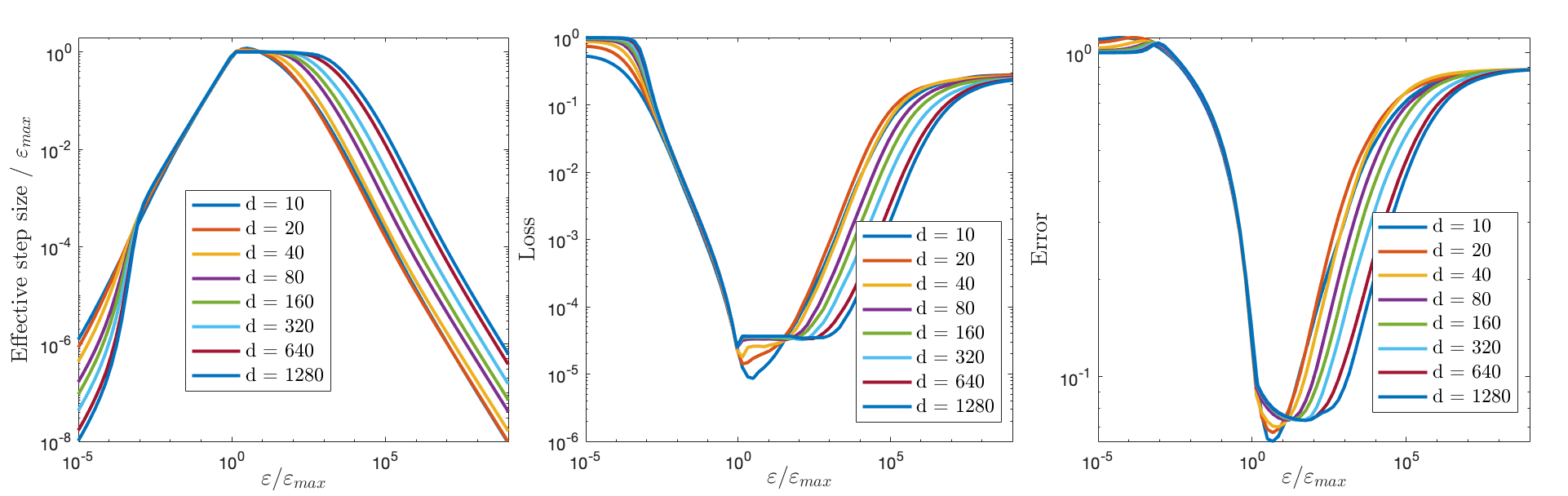}\\
    \includegraphics[width=12cm]{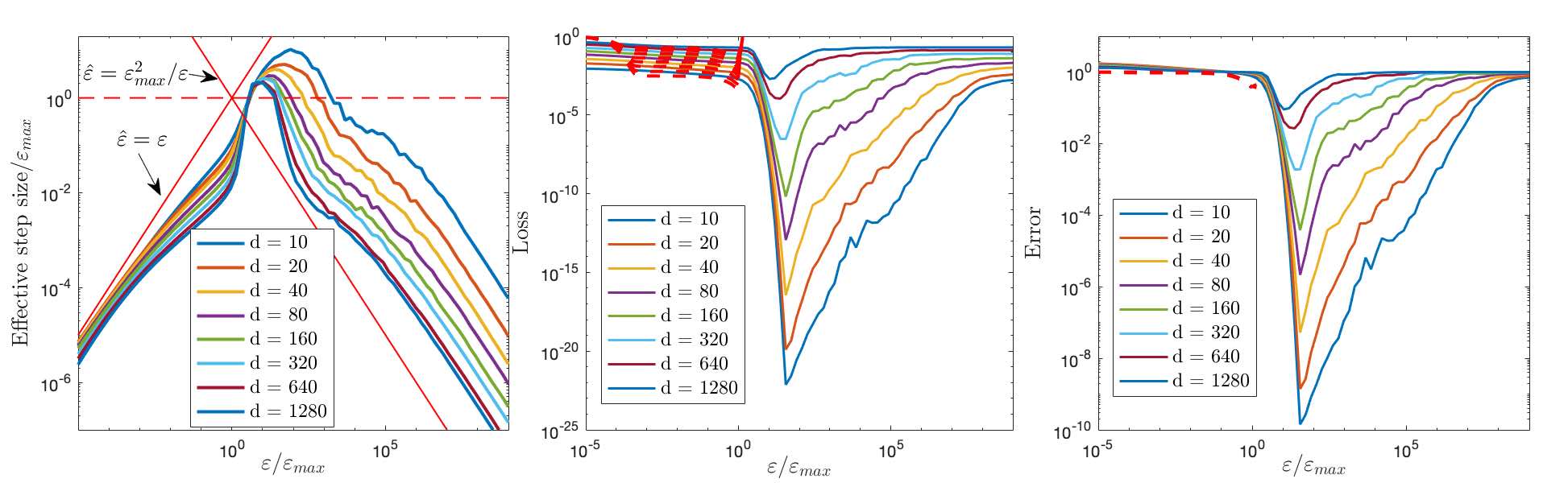}\\
    \caption{
        {%\color{red}
        Tests of BNGD on OLS model with step size $\varepsilon_a=1, a_0=0$.
        Parameters:
        (a, top) $H$ = diag(linspace(1,10000,d)),
        (b, middle) $H$ = diag(logspace(0,4,d)),
        (c, bottom) $H$ = diag([ones(1,d-1),10000]]).
        $u$ and $w_0$ is randomly chosen uniformly from the unit sphere in $\mathbb{R}^{d}$. The BNGD iterations are executed for $k=5000$ steps. The curves are averaged over the 500 independent runs.
        }
    }
    \label{fig:BN1GD_10000_dvar_res500}
\end{figure}

Finally, we test the effect of eigenvalue distributions. Figure~\ref{fig:BN1GD_random_d100_res500} gives examples of $H$ with different condition number but same dimension $d=100$. When the eigenvalues are arithmetic sequences, the width of optimal learning rate is almost same over different condition numbers while the loss and error still depend on the condition number. Randomly choosing eigenvalues also exhibits this phenomenon.

\begin{figure}[thb!]
    % Requires \usepackage{graphicx}
    \center\includegraphics[width=12cm]{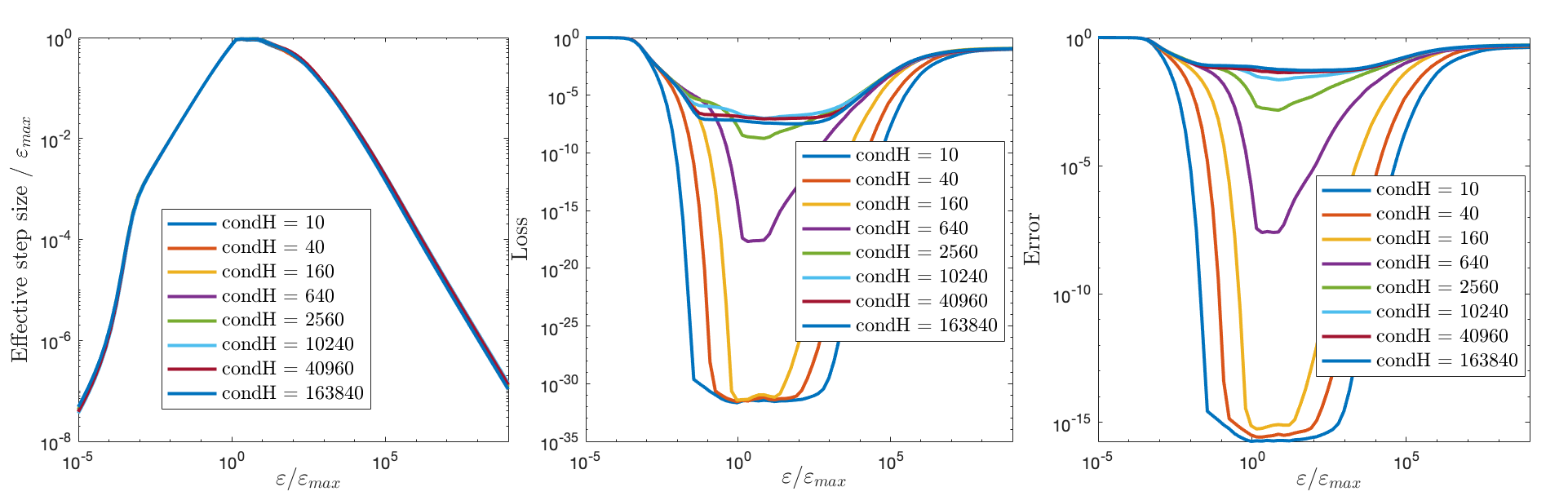}\\
    \center\includegraphics[width=12cm]{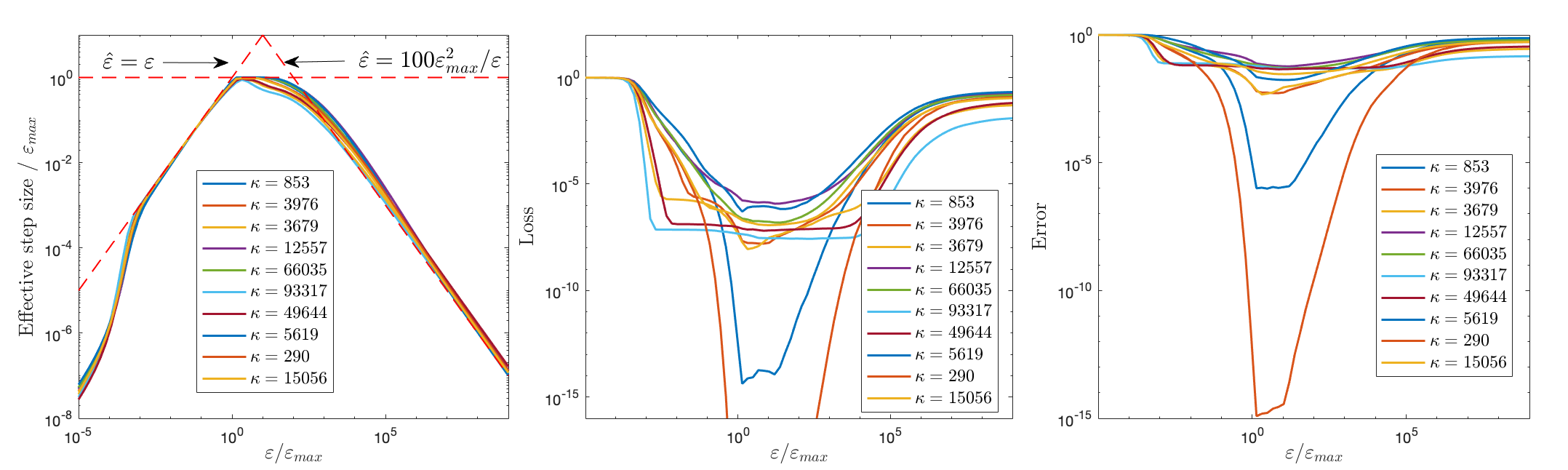}\\
    %BN1GD_random_d100_res500
    \caption{
        {%\color{red}
        Tests of BNGD on OLS model with step size $\varepsilon_a=1, a_0=0$.
        Parameters: (top) $H=$diag(linspace(1,condH,100)), (bottom) $H \in \mathbb{R}^{100\times 100}$ is a diagonal matrix with random positive entrances which has condition number $\kappa$.
        $u$ and $w_0$ is randomly chosen uniformly from the unit sphere in $\mathbb{R}^{100}$. The BNGD iterations are executed for $k=5000$ steps. The curves are averaged over the 500 independent runs.
        }
    }
    \label{fig:BN1GD_random_d100_res500}
\end{figure}

}

\end{document}